\definecolor{dkblue}{cmyk}{1,.54,.04,.19} 
\providecommand{\keywords}[1]
{
  \small	
  \textbf{\textit{Keywords---}} #1
}
\theoremstyle{plain}
\newtheorem{theorem}{Theorem}
\newtheorem{lemma}[theorem]{Lemma}
\newtheorem{proposition}[theorem]{Proposition}
\theoremstyle{definition}
\newtheorem{definition}[theorem]{Definition}
\newtheorem{example}[theorem]{Example}
\newtheorem{remark}[theorem]{Remark}
\theoremstyle{remark}
\title{Invariant Lipschitz Bandits: A Side Observation Approach}
\date{}
\newtheorem{assumption}{Assumption}
\newcommand{\bb}[1]{\mathbb{#1}}
\newcommand{\Ball}{\mathcal{B}}
\newcommand{\PackingNumber}{N^{\mathrm{pack}}}
\newcommand{\CoveringNumber}{N^{\mathrm{cov}}}
\newcommand{\Distance}{\mathcal{D}}
\newcommand{\Vol}{\mathrm{Vol}}
\newcommand{\SymmetryGroup}[1]{\mathrm{Sym}\left(  #1 \right)}
\newcommand{\DirichletFD}{\mathbf{D}}
\newcommand{\FundDomain}{\mathbf{F}}
\newcommand{\Projection}{\mathrm{Proj}}
\newcommand{\Euclidean}{E}
\newcommand{\Expectation}{\bb{E}}
\newcommand{\Closure}[1]{\overline{#1}}
\newcommand{\Regret}{\mathbf{R}}
\newcommand{\RdRegret}{\hat{\Regret}}
\newcommand{\ArmSet}{\mathcal{X}}
\newcommand{\Graph}{G}
\newcommand{\Vertices}{V}
\newcommand{\Edges}{E}
\newcommand{\NeiOrbit}{\mathrm{N}}
\newcommand{\Arm}{X}
\newcommand{\Reward}{Y}
\newcommand{\ExpReward}{f}
\newcommand{\SymG}{\mathcal{G}}
\newcommand{\TotalRound}{n}
\newcommand{\UCB}{\mathrm{U}}
\newcommand{\PlayedTime}{T}
\newcommand{\ObservedTime}{O}
\newcommand{\Strategy}{\mathcal{A}}
\newcommand{\CliqueCoveringNumber}{\chi}
\newcommand{\IndependentNumber}{\alpha}
\newcommand{\Clique}{\mathcal{C}}
\newcommand{\IndSet}{\mathcal{I}}
\newcommand{\CoveringDirichletFD}{\Vertices_{\overline{\DirichletFD}}}
\newcommand{\CoveringDirichletFDInside}{\Vertices_{\overline{\DirichletFD},1}}
\newcommand{\CoveringDirichletFDBoundary}{\Vertices_{\overline{\DirichletFD},2}}
\newcommand{\ApprExpReward}{\hat{\ExpReward}}
\newcommand{\PackingPointsFD}{U}
\newcommand{\PackingPointsFarFromBoundaryFD}{W}
\newcommand{\ClassExpReward}{\mathcal{F}}
\newcommand{\ClusterOrbit}{\mathbf{C}}
\author[1]{Nam Phuong Tran}
\author[1]{Long Tran-Thanh}
\affil[1]{Department of Computer Science, University of Warwick}
\begin{document}
\title{Invariant Lipschitz Bandits: A Side Observation Approach}
%
%

%
%
%
\maketitle              
\begin{abstract}
Symmetry arises in many optimization and decision-making problems, and has attracted considerable attention from the optimization community: by utilizing the existence of such symmetries, the process of searching for optimal solutions can be improved significantly. 
Despite its success in offline settings, the utilization of symmetries has not been well examined within online optimization problems, especially in the bandit literature.
As such, in this paper, we study the invariant Lipschitz bandit setting, a subclass of the Lipschitz bandits in which a group acts on the set of arms and preserves the reward function.
We introduce an algorithm named \texttt{UniformMesh-N}, which naturally integrates side observations using group orbits into the 
uniform discretization algorithm \cite{Kleinberg2005_UniformMesh}.
Using the side-observation approach, we prove an improved regret upper bound, which depends on the cardinality of the group, given that the group is finite.
We also prove a matching regret's lower bound for the invariant Lipschitz bandit class (up to logarithmic factors).
We hope that our work will ignite further investigation of symmetry in bandit theory and sequential decision-making theory in general.

\keywords{Bandit Theory  \and Symmetry \and Group Theory.}
\end{abstract}
%
%
%
\section{Introduction}


%

Stochastic Multi-Armed Bandit (MAB) is a typical model of a sequential decision-making problem under uncertainty.
The main challenge of these decision-making problems is uncertainty. That is, the outcome of a decision is only revealed after the decision is made with the possible presence of noise.
Therefore, to achieve a high total reward, the agent (or the decision maker) must collect information along with trials and errors (i.e., \textit{exploring}), while using this information to choose a decision with a high expected reward (i.e., \textit{exploiting}). 
The exploration-exploitation trade-off is the main problem that one usually faces in such decision-making problems.
The performance of a bandit strategy is typically evaluated in terms of regret which is the difference between the total expected reward of the used strategy and that of the optimal strategy.
Stochastic MAB has been extensively studied since, and have achieved great success, both theoretically and practically \cite{Thompson1933_FirstPaper,Herbert1952_FirstPaper}. 
For textbook treatment of the subject, see, e.g.,  \cite{Lattimore2020_BanditBook,Slivkins2019_BanditBook}.

Classical stochastic MAB models assume a finite set of arms and unstructured environment, that is, information of an arm gives no information of others. 
This assumption is rather simple and might not capture the intrinsic relations between the arms.
An immediate consequence is that MAB problems with a large or even infinite set of arms are difficult or intractable to address without any further assumption.
This leads to a significant body of literature studying structured MAB, including linear bandits \cite{Yasin2011_LinearBandit}, convex optimizations with bandit feedback \cite{Shamir2013_ConvexBandit}, and Lipschitz bandits \cite{Kleinberg2019_MetricBandit_Zooming,Bubeck2011_X_armed}.
With these additional assumptions, algorithms can be designed to exploit the underlying structures to achieve significantly smaller regret, compared to that of algorithms that are oblivious to these structures.

In addition to the abovementioned structures on the set of arms, many decision-making and optimization problems exhibit symmetry, that is, the expected reward function and the set of arms are invariant under certain group of transformations.
For example, an important real-world application of continuous bandits, which naturally inherit symmetry, can be found in the context of online matrix factorization bandits \cite{Li2019_SymmtryMatrixFactorisation}. The online matrix factorization bandit problem, in turn, has important machine learning applications, such as in interactive recommendation systems \cite{Kawale2015_MatrixFactoriastionBandit,Wang2017_MF_RS,Wang2019_OnlineMatrixFactorisation}, or the online dictionary learning problem \cite{Lyu20_DICLearning,Mairal2010_DICLearning}.
Briefly speaking, in recommender systems the matrix factorization method aims to decompose the user-item rating matrix $R$ into a product of small matrices, that is, $R = HW^\top$, where $H$ is the user-feature matrix and $W$ is the item-feature matrix. 
The goal is to find a pair $(H,W)$ such that the matrix $R$ can be recovered. 
Now, for any orthogonal matrix $\phi$ of the feature space, it can be easily seen that if any pair $(H,W)$ can recover matrix $R$, the pair $(H\phi,W\phi)$ can also recover matrix $R$. 
Therefore, this problem is invariant with respect to the action of the group of orthogonal matrices.
In fact, as observed in \cite{Li2019_SymmtryMatrixFactorisation}, the appearance of symmetry in matrix factorization gives rise to infinitely many saddle points in the optimization landscape, causing a great challenge for an optimization algorithm that is oblivious to the existence of symmetry.
We discuss matrix-factorization bandit and the implications of our result on this problem in Appendix \ref{Appendix: Real-world application} as it might be of independent interest. 

Another motivating example of symmetry in bandit settings is the online job-machine allocation problem with identical machines.
In this problem,  if the allocation is swapped among these identical machines, the total performance remains unchanged, and hence the problem is permutation-invariant.
Intuitively, algorithms that are oblivious to this symmetry may waste time exploring many symmetric solutions, resulting in slower convergence.

While symmetry has not been investigated in the bandit literature, it has been studied extensively in the optimization domain, especially in Integer Linear Programming (ILP) \cite{Margot2010_SymmetricILP_Survey}.
In these problems, where the majority of algorithms are based on the branch-and-bound principle, the presence of symmetry causes an extremely waste of time in branching symmetric solutions, if it is not handled explicitly.
The reason for this is that as symmetric solutions yield the same objective function, they cannot be pruned by the bounding principle.
Therefore, the presence of symmetry causes great difficulty for existing combinatorial optimization algorithms that are oblivious to symmetry. 
As such, various approaches have been proposed to deal with existing symmetry in the model, including adding symmetry-breaking constraints or pruning symmetric nodes during the branch-and-bound process.

Despite its ubiquity in optimization problems, to date symmetry has not been extensively studied in bandit problems, which can be considered as the online counterpart of many optimization settings. 
Note that for unstructured $K$-armed bandit problems, if we assume a group $\SymG$ acts freely on the set of arms and the reward function is invariant under the action of $\SymG$, then we can compute the fundamental domain (i.e., a subset of arms whose orbit covers the original set) and restrict the bandit algorithm on the subset. 
As there are exactly $K/|\SymG|$ disjoint orbits, the regret of standard algorithms such as $\texttt{UCB1}$ \cite{Auer2002_UCB1} scales as $\Tilde{\mathcal{O}} \left(\sqrt{\frac{K\TotalRound}{|\SymG|}}\right)$ where $\TotalRound$ is the total number of rounds.
Therefore, one could expect that the same improvement can be easily achieved in the case of other bandit settings. 
However, as we shall point out in detail in Subsection \ref{subsec: invariant MAB and difficulty of Lipschitz bandit}, the same reasoning applied to the invariant $K$-armed bandits is not feasible in the case of Lipschitz bandits.
This is due to two main reasons: 
(i) As the set of arms we consider is a dense subset of $\bb{R}^d$, simply counting disjoint orbits to show the reduction in cardinality of the fundamental domain is not feasible;
(ii) The subgroup of Euclidean isometries we consider in this paper may admit uncountably infinite fixed points.
Therefore, it is not trivial that the combination of symmetry with other structures such as Lipschitz continuity might help algorithms achieve better regret.

Now, in theory, if a Lipschitz bandit problem has the expected reward function to be invariant under the action of a finite group $\SymG$ of Euclidean isometries, one naive approach to exploit symmetry is to construct the fundamental domain and sample within the domain. 
However, sampling in the fundamental domain is computationally impractical, as we shall point out in detail in \ref{subsec: invariant MAB and difficulty of Lipschitz bandit}.
Briefly, constructing a fundamental domain requires placing global inequalities on the original set of arms, and the problem of finding value for an arm that satisfies all global constraints is known as a NP-complete problem \cite{Bessiere2007_GlobalConstraints}. 

Against this background, we ask the question whether we can design an efficient bandit algorithm to tackle the Invariant Lipschitz Bandit (ILB) problem without the need of constructing the fundamental domain (to avoid facing the computationally expensive task of sampling from that domain). 
In particular, we ask that whether utilizing invariance can result in tighter regret bounds, if 
the only assumption of the model is that the set of arms is equipped with a distance, and the expected reward is Lipschitz continuous w.r.t. the set of arms. 

\subsection{Our Contributions}

The contribution of this paper can be summarized as follows.
\begin{enumerate}
    \item We introduce a new model for ILB problems, where the reward and the set of arms are invariant under a known finite group of transformations.
    \item We introduce an algorithm, named \texttt{UniformMesh-N}, which naturally integrates side observations using group orbits into the uniform discretization algorithm.
    \item We provide a regret analysis for the algorithm which shows an improvement in regret by a factor depending on the cardinality of the group. In particular, we show that if the number of rounds $\TotalRound$ is large enough, the regret of \texttt{UniformMesh-N} algorithm is at most $\Tilde{\mathcal{O}}\left( \left(\frac{1}{|\SymG|}\right)^{\frac{1}{d+2}} \TotalRound^{\frac{d+1}{d+2}} \right)$, where $\SymG$ is the transformation group and $\Tilde{\mathcal{O}}$ only hides logarithmic factor. 
    \item We provide a new lower bound for this bandit class, that is $\Omega\left( \left(\frac{1}{|\SymG|}\right)^{\frac{1}{d+2}} \TotalRound^{\frac{d+1}{d+2}} \right)$, given $\TotalRound$ is large enough. This lower bound essentially matches the upper bound of the algorithm up to a logarithimic factor.
\end{enumerate}


Among these, our most important contribution is the regret analysis, which we proved a strict improvement of leveraging symmetry in designing algorithm. 
Note that, in supervised learning where symmetry is widely studied, while many empirical work showed that empirical work shows that integrating symmetry into learning can help reduce the generalization error, from a theoretical perspective, proving strict improvement when using symmetry is known as a challenging problem within the machine learning community \cite{Elesedy2021_SymmetryInSL,behboodi2022a_SymmetryInSL,Sannai2019_PermuatationCube}. 
To our knowledge, our analysis is novel and this paper is the first to show a strict improvement of using symmetry in the sequential decision-making context.

\subsection{Related Works}

Lipschitz MAB problem is referred to as continuum-armed bandit problems in early works \cite{Agrawal1995_ContinuumArmed,Kleinberg2005_UniformMesh,Auer2007_ImprovedUniformMesh}, where the set of arms is the interval $[0,1]$. 
Within this context, \cite{Kleinberg2005_UniformMesh} proposed a simple uniform discretization-based algorithm, which is later referred to as \texttt{UniformMesh} algorithm. 
Despite its simplicity, \cite{Kleinberg2005_UniformMesh} showed that with the global Lipschitz continuous reward function, the \texttt{UniformMesh} algorithm achieves $\Tilde{\mathcal{O}}(\TotalRound^{2/3})$ regret, which essentially matches the minimax lower bound up to a logarithmic factor. 
However, uniform discretization is potentially wasteful as it keeps the approximation uniformly good over the whole space, while it is probably better to adapt the approximation so that the approximation is more precise around the maxima while being more loose in the rest of the space.
With this in mind, \cite{Kleinberg2019_MetricBandit_Zooming} and \cite{Bubeck2011_X_armed} proposed adaptive discretization algorithms, which are referred to \texttt{Zooming} algorithm and \texttt{HOO} algorithm, respectively.
These adaptive discretization algorithms achieve better regret when dealing with benign instances, while not being deteriorated in the worst case. 
In particular, \cite{Kleinberg2019_MetricBandit_Zooming,Bubeck2011_X_armed} showed that the regret of these algorithms is at most $\Tilde{\mathcal{O}}(\TotalRound^{\frac{d'+1}{d'+2}})$, where $d'$ is \textit{near-optimal dimension} and is upper-bounded by \textit{covering dimension}. 
Our algorithm is based on \texttt{UniformMesh} algorithm together with side observation. 
The rationale behind this choice is that \texttt{UniformMesh} fixes the set of played points and hence can easily combine with the group orbit to create a fixed undirected graph, which cannot be done if the set of played points (i.e., the set of vertices) varies every round.
The combination of group orbit with adaptively discretizing algorithms such as \texttt{HOO} or \texttt{Zooming} algorithms may require a more complicated treatment and is left for future work.

Side-observation feedback was first introduced in the adversarial MAB context \cite{Mannor2011_Adv_WithGraph_First}, and has been further studied by various authors \cite{Alon2013_Adv_graph,Alon2017_Adv_UnIn,Cohen2016_Sto_AE_WithoutGraph}. 
Especially, the minimax lower bound as in \cite{Mannor2011_Adv_WithGraph_First} is $\Omega\left(\sqrt{\IndependentNumber \TotalRound}\right)$, where $\IndependentNumber$ is the independent number of the graph.
The first work that examines side observations in a stochastic MAB setting is \cite{Caron2012_Sto_UCBN_WithGraph}, in which the observation graph is undirected and fixed over time. 
The authors introduced a naive algorithm named \texttt{UCB-N}, which chooses an arm $x$ that maximizes the index $\mathrm{UCB}$, then observes all the rewards in the neighborhood of $x$. 
\cite{Caron2012_Sto_UCBN_WithGraph} proved the gap-dependent regret bound depends is at most $\mathcal{O}\left( \log(\TotalRound) \sum_{C \in \Clique} \frac{\max_{x\in C} \Delta_x}{\min_{x\in C}\Delta_x^2}  \right)$, where $\mathcal{C}$ is a clique covering of the graph and $\Delta_x$ is the subopitimality gap of arm $x$.
More recently, \cite{Lykouris2019_Sto_UCBN_WithoutGraph} provided an alternative analysis to achieve a tighter regret for \texttt{UCB-N}, $\Tilde{\mathcal{O}}\left(\log{\TotalRound} \sum_{x\in \IndSet} \frac{1}{\Delta_x} \right)$, where $\IndSet$ is an independent set of $\Graph$. 
From this, they also derived a minimax upper bound as $\Tilde{\mathcal{O}} \left(\sqrt{\IndependentNumber \TotalRound} \right)$.
Similarly, \cite{Cohen2016_Sto_AE_WithoutGraph} an algorithm based on arm elimination that achieves gap-dependent regret as $\mathcal{O}\left( \log{\TotalRound} \sum_{x\in \IndSet} \frac{1}{\Delta_x} \right)$, and minimax regret as $\Tilde{\mathcal{O}}\left( \sqrt{\IndependentNumber\TotalRound} \right)$.
Our algorithm \texttt{UniformMesh-N} is based on \texttt{UCB-N} algorithm, and our analysis technique is similar to \cite{Caron2012_Sto_UCBN_WithGraph}.
The reason for this choice is that the graph induced by group action is highly "regular", namely, the covering number of this graph is dependent on the covering number of the fundamental domain.
Therefore, the result of \cite{Caron2012_Sto_UCBN_WithGraph} is sufficient to introduce the cardinality of the group into the regret.

Symmetry has been studied for decades in the literature of (Mixed) Integer Linear Programming in the case of subgroups of the group of permutation matrices \cite{Margot2010_SymmetricILP_Survey}.
There are two main approaches to dealing with symmetry; the main idea is to eliminate symmetric solutions to reduce search space. 
The first approach is to add symmetry-breaking constraints to the domain, making symmetric solutions infeasible \cite{Walsh2012_SymmetryBreakingSurvey}. 
The second approach is to incorporate symmetry information during the tree search procedure (e.g., branch and bound) to prune symmetric branches, for example, isomorphism pruning \cite{Margot2002_IsomorphismPruning}, orbital branching \cite{Ostrowski2011_OrbitalBranching}.
The concern of these works is to ensure that at least one of the optimal solutions remains feasible.
However, these works did not compare the convergence rate of these algorithms, which uses symmetry information, with algorithms that are oblivious to symmetry information.
In contrast, we analyze the convergence rate of the algorithm in terms of regret and are able to show an improvement of the algorithm which incorporates the group information. 
Although the simple mechanism of uniform discretization allows us to provide the regret bound, integrating group information with tree-based algorithms (e.g., isomorphism pruning \cite{Margot2002_IsomorphismPruning} or orbital branching \cite{Ostrowski2011_OrbitalBranching}) is an interesting research direction and is left for future work.

\subsection{Outline}
The rest of this paper is organized as follows.
Notations and some important proporties of subgroup Euclidean isometries are provided in Section 2, followed by the mathematical formulation of the ILB problem.
In Section 3 we formalize the construction of the graph induced by the group action; then the \texttt{UniformMesh-N} algorithm is introduced, followed by its regret analysis. 
In Section 4, a regret lower bound for the ILB problem is provided.
The limit of this paper and further work are discussed in Section \ref{sec: discussion}.
Due to the space limit, we defer the most of the proofs to the appendix, as well as the more detailed description of our algorithm (Appendix~\ref{Appendix: Suggestion of Design and Implementation of Algorithm}), and the application of our results to the matrix factorization bandit problem (Appendix~\ref{Appendix: Real-world application}).

\section{Preliminary and Problem Setup}
\subsection{Preliminary}
Let $\ArmSet$ be a compact and convex subset of a $d$-dimensional Euclidean space $\Euclidean^d$; denote $\Distance$ as Euclidean distance. For any $x\in \ArmSet$ and $\delta >0$, define a $\delta$-open ball as $\Ball(x,\delta) = \{x' \in \ArmSet \mid \Distance(x,x') < \delta \}$. 

\paragraph{Packing and Covering properties.}
Consider $\delta>0$, $S \subseteq \ArmSet$, and a finite subset $K \subset \ArmSet$. 
$K$ is called a \textit{$\delta$-packing} of $S$ if and only if $K \subset S$ and $\Distance(i,j) > \delta$ for any two distinct points $i,j\in K$.
$K$ is called a \textit{$\delta$-covering} of $S$ if and only if for any $x \in S$, there is $i \in K$ such that  $\Distance(i,x) < \delta$.
$K$ is called \textit{$\delta$-net} of $S$ if and only if $K$ is both a $\delta$-covering and $\delta$-packing of $S$.
$K$ is called a \textit{strictly $\delta$-packing} of $S$ if and only if $K$ is an $\delta$-packing of $S$ and $\Distance(i,\partial S) > \delta, \forall i\in K$, where $\partial S$ is the topological boundary of $S$ in $\Euclidean^d$.  
The \textit{$\delta$-packing number} $\PackingNumber(S, \Distance, \delta)$ of $S \subseteq \ArmSet$ with respect to the metric $\Distance$ is the largest integer $k$ such that there exists a $\delta$-packing of $S$ whose cardinality is $k$.
The \textit{$\delta$-covering number} $\CoveringNumber(S, \Distance, \delta)$ of $S \subseteq \ArmSet$ with respect to the metric $\Distance$ is the smallest integer $k$ such that there exists a $\delta$-covering of $S$ whose cardinality is $k$.
By the definitions of packing and covering number, if $K$ is a $\delta$-net of $S$, then $\CoveringNumber(S,\Distance,\delta) \leq |K| \leq \PackingNumber(S,\Distance,\delta)$. 
Moreover, we can bound the packing and covering numbers of a compact set as follows.

\begin{proposition}[\cite{Yihong_LecturePackingCoveringNumber}] \label{prop: LB and UB for covering and packing number}
Let $\Ball$ be the unit ball and $S$ be a compact subset of $\Euclidean^d$. Then, for any $\delta>0$, we have
\begin{equation*}
\begin{aligned}
    \left(\frac{1}{\delta}\right)^{d} \frac{\Vol(S)}{\Vol(\Ball)} &\leq \CoveringNumber(S,\Distance,\delta)
    \leq \PackingNumber(S,\Distance,\delta) \leq \left(\frac{3}{\delta}\right)^{d} \frac{\Vol(S)}{\Vol(\Ball)}.
\end{aligned}
\end{equation*}
\end{proposition}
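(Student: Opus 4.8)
The plan is to prove the three inequalities separately by elementary volume comparisons. The middle inequality $\CoveringNumber(S,\Distance,\delta)\le\PackingNumber(S,\Distance,\delta)$ is the observation already recorded just before the statement: a maximal $\delta$-packing $K$ of $S$ (which exists because $S$ is compact) can be chosen with $|K|=\PackingNumber(S,\Distance,\delta)$, and it is automatically a $\delta$-covering, since any point of $S$ left uncovered could be adjoined to enlarge the packing; the boundary case $\Distance(x,i)=\delta$, where the definitions use strict inequalities, is absorbed by letting $\delta'\downarrow\delta$. For the lower bound, take a minimum-size $\delta$-covering $K$, so that $S\subseteq\bigcup_{i\in K}\Ball(i,\delta)$. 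Each $\Ball(i,\delta)$ is contained in a Euclidean ball of radius $\delta$, whose volume is $\delta^{d}\Vol(\Ball)$, so subadditivity of volume gives $\Vol(S)\le \CoveringNumber(S,\Distance,\delta)\,\delta^{d}\,\Vol(\Ball)$, which rearranges to $\CoveringNumber(S,\Distance,\delta)\ge(1/\delta)^{d}\Vol(S)/\Vol(\Ball)$.

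For the upper bound, take a maximum-size $\delta$-packing $K$, so $|K|=\PackingNumber(S,\Distance,\delta)$. By the triangle inequality the open Euclidean balls of radius $\delta/2$ centred at the points of $K$ are pairwise disjoint (two distinct centres are more than $\delta$ apart), and each of them lies in the $(\delta/2)$-enlargement $S\oplus B(0,\delta/2):=\{x:\Distance(x,S)<\delta/2\}$. Disjointness and monotonicity of volume then yield $|K|\,(\delta/2)^{d}\Vol(\Ball)\le \Vol\big(S\oplus B(0,\delta/2)\big)$. It remains to dominate $\Vol(S\oplus B(0,\delta/2))$ by a constant multiple of $\Vol(S)$: since $S$ here is a convex body with nonempty interior which, after a translation, contains a ball of radius at least $\delta$, convexity gives $S\oplus B(0,\delta/2)\subseteq S\oplus\tfrac12 S=\tfrac32 S$, hence $\Vol(S\oplus B(0,\delta/2))\le(3/2)^{d}\Vol(S)$. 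Substituting produces $\PackingNumber(S,\Distance,\delta)\le(3/\delta)^{d}\Vol(S)/\Vol(\Ball)$.

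\textbf{Main obstacle.} The only step requiring more than bookkeeping is this last comparison: for a genuinely arbitrary compact $S$ the stated upper bound is false — a lower-dimensional $S$ makes the right-hand side $0$ while the packing number is positive — so a regularity hypothesis on $S$ (convexity with nonempty interior, or at least $\delta$ no larger than the inradius of $S$) is essential to control the Minkowski enlargement $S\oplus B(0,\delta/2)$ by $(3/2)^{d}\Vol(S)$. Since $\delta$ will always be taken small relative to the fixed, full-dimensional, convex arm set as $\TotalRound$ grows, this is exactly the regime of interest, and everything else reduces to the two volume counts above.
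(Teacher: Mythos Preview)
The paper does not give its own proof of this proposition; it simply cites it from Wu's lecture notes. Your argument is the standard volume-comparison proof and is correct: the lower bound by covering $S$ with $\delta$-balls and comparing volumes, the middle inequality via a maximal packing being a covering, and the upper bound by the disjoint $\delta/2$-balls around packing points sitting inside the Minkowski enlargement $S\oplus B(0,\delta/2)$.

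Your identification of the obstacle is also on point and worth stressing. As literally stated---for an arbitrary compact $S$---the upper bound is false, since a lower-dimensional $S$ has $\Vol(S)=0$ but positive packing number. Your repair through convexity and inradius at least $\delta$ (so that $B(0,\delta/2)\subseteq\tfrac12 S$ after translation, whence $S\oplus B(0,\delta/2)\subseteq\tfrac32 S$) is exactly what is needed for the arm set $\ArmSet$, and this is the regime that matters once $\delta$ is chosen as a small inverse power of $\TotalRound$. One small remark: later in the paper the upper bound is also applied to the non-convex set $B=\{x:\Distance(x,\partial\DirichletFD)\le\delta\}$; there the honest inequality is $\PackingNumber(B,\Distance,\delta)\le (2/\delta)^d\,\Vol(B\oplus B(0,\delta/2))/\Vol(\Ball)$, and since the enlargement of a thin slab is still a thin slab of comparable volume, the desired $\mathcal{O}(\delta^{-(d-1)})$ conclusion survives. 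So your caveat does not break any downstream argument, but it does clarify what is really being used.
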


\paragraph{Euclidean isometries.}
A bijection mapping $\phi: \Euclidean^d \rightarrow \Euclidean^d$ is a \textit{Euclidean isometry} if and only if it preserves the Euclidean distance, that is, $\Distance\left(\phi(x), \phi(x')\right) = \Distance(x, x')$.
The symmetry group of $\ArmSet$, denoted as $\SymmetryGroup{\ArmSet}$, is the set of all Euclidean isometries that preserve $\ArmSet$, that is, 
\begin{equation*}
    \begin{aligned}
         \SymmetryGroup{\ArmSet} &= \{ \phi: \Euclidean^d \rightarrow \Euclidean^d 
         \mid \text{$\phi$ is a Euclidean isometry and $\phi(\ArmSet) = \ArmSet$} \}.
    \end{aligned} 
\end{equation*}

For any group element $\phi \in \SymmetryGroup{\ArmSet}$, denote $\phi(x)$ as $\phi \cdot x$. 
Let $\SymG$ be a finite subgroup of $\SymmetryGroup{\ArmSet}$, we write $\SymG \leq \SymmetryGroup{\ArmSet}$, where $\leq$ denotes the subgroup relation.
Denote the stabilizer of $x $ in $\SymG$ as $\SymG_x = \{g \in \SymG \mid g\cdot x = x\}$. 
Denote the $\SymG$-orbit of $x$ as $\SymG \cdot x = \{g \cdot x \mid g \in \SymG \}$ and the $\SymG$-orbit of a subset $S$ as $\SymG \cdot S = \{g \cdot x \mid g \in \SymG, x\in S \}$.
As stated in \cite{Ratcliffe07_GeometryBook}, any finite group of isometries of Euclidean space must admit a point $x \in \Euclidean^d$ whose stabilizer $\SymG_x$ is trivial.

For a subset $S \subset \Euclidean^d$, denote the closure of $S$ in $\Euclidean^d$ as $\Closure{S}$. 
Note that since $\ArmSet$ is compact and then closed in $\Euclidean^d$, the closure of $S$ in $\ArmSet$ is equivalent to the closure of $S$ in $\Euclidean^d$.
Next, we state the definition of the fundamental domain, an important notion that we use throughout the paper, especially in the regret analysis. 

\begin{definition}[Fundamental domain]
A subset $\FundDomain \subset \Euclidean^d$ is a fundamental domain for a finite group $\SymG$ of isometries of a Euclidean space $\Euclidean^d$ if and only if 
(1) the set $\FundDomain$ is open in $\Euclidean^d$;
(2) the members of the set $\{g\cdot \FundDomain \mid g \in \SymG\}$ are mutually disjoint;
(3) $\Euclidean^d = \bigcup_{g\in \SymG} g\cdot  \Closure{\FundDomain}$;
(4) $\FundDomain$ is connected.
\end{definition}

Consider a finite subgroup $\SymG$ of isometries of $\Euclidean^d$, we can explicitly construct a fundamental domain for $\SymG$.
Recall that $\SymG$ admits a point $x^o \in \Euclidean^d$ whose stabilizer $\SymG_{x^o}$ is trivial. 
As in \cite{Ratcliffe07_GeometryBook}, a fundamental domain can be constructed as follows.

\begin{definition}[Dirichlet domain]\label{def: Dirichlet domain}
Let $x^o$ be a point in $\Euclidean^d$ such that $\SymG_{x^o}$ is trivial. 
For each $g\in \SymG$, define an open half-space as

\begin{equation} \label{eq: Half space}
    H_g (x^o) = \{x \in \Euclidean^d \mid \Distance(x,x^o) < \Distance(x, g\cdot x^o) \}.
\end{equation}

A Dirichlet domain is the intersection of those open half-spaces, particularly
\begin{equation}
    \FundDomain = \bigcap_{g\in \SymG} H_g (x^o).
\end{equation}
\end{definition}
As shown in \cite{Ratcliffe07_GeometryBook}, a Dirichlet domain is indeed a fundamental domain.
Note that since the boundary of $\FundDomain$ is a subset of a finite union of some $(d-1)$-dimensional hyperplanes, it has volume zero.

Given Dirichlet domain $\FundDomain$ of $\SymG$ in $\Euclidean^d$. 
Define $\DirichletFD = \FundDomain \cap \ArmSet$. 
Now, we show that $\DirichletFD$ is also a fundamental domain for $\SymG$ acting in $\ArmSet$.

\begin{proposition}\label{prop: Proper FD of ArmSet}
$\DirichletFD$ is a fundamental domain for $\SymG$ acting in $\ArmSet$. In particular,
(i) $\DirichletFD$ is open in $\ArmSet$;
(ii) the members of $\{g\cdot \DirichletFD\}_{g\in \SymG}$ are mutually disjoint;
(iii) $\DirichletFD$ is connected;
(iv) $\ArmSet = \bigcup_{g\in \SymG} g\cdot \Closure{\DirichletFD}$.
\end{proposition}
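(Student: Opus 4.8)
The plan is to transfer each of the four defining properties of a fundamental domain from the Dirichlet domain $\FundDomain$ in $\Euclidean^d$ to $\DirichletFD = \FundDomain \cap \ArmSet$ inside $\ArmSet$, using the convexity and compactness of $\ArmSet$ together with the fact that $\ArmSet$ is $\SymG$-invariant (since $\SymG \leq \SymmetryGroup{\ArmSet}$). Properties (i) and (ii) are essentially immediate: $\DirichletFD$ is open in $\ArmSet$ because it is the intersection of the $\Euclidean^d$-open set $\FundDomain$ with $\ArmSet$, which is exactly the definition of the subspace topology; and for (ii), since $g$ is a bijection we have $g \cdot \DirichletFD = g\cdot(\FundDomain \cap \ArmSet) = (g\cdot\FundDomain) \cap (g\cdot\ArmSet) = (g\cdot\FundDomain)\cap\ArmSet$, and these are mutually disjoint because the sets $g\cdot\FundDomain$ already are.

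For property (iv) I would argue as follows. For any $x \in \ArmSet \subseteq \Euclidean^d$, property (3) of the Dirichlet domain gives $g \in \SymG$ with $x \in g\cdot\Closure{\FundDomain}$, so $g^{-1}\cdot x \in \Closure{\FundDomain}$; and $g^{-1}\cdot x \in \ArmSet$ since $\ArmSet$ is $\SymG$-invariant. The point is then to check that $g^{-1}\cdot x \in \Closure{\DirichletFD} = \overline{\FundDomain \cap \ArmSet}$ (closure in $\ArmSet$, equivalently in $\Euclidean^d$). This is where convexity enters: one picks an interior reference point, e.g. a point $p \in \FundDomain \cap \mathrm{int}(\ArmSet)$ (such a point exists because $\FundDomain$ is open, nonempty, and one can take $x^o$ in the interior of $\ArmSet$ when constructing the Dirichlet domain — or argue that $\FundDomain \cap \mathrm{int}(\ArmSet) \neq \emptyset$ since $\partial\ArmSet$ has empty interior and $\FundDomain$ is a nonempty open set meeting $\ArmSet$), and then the segment $[p, g^{-1}\cdot x)$ lies in $\mathrm{int}(\ArmSet) \cap \FundDomain$ up to but not including the endpoint (using convexity of both $\ArmSet$ and the half-spaces defining $\FundDomain$, hence of $\FundDomain$ itself). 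Letting the segment parameter tend to $1$ exhibits $g^{-1}\cdot x$ as a limit of points in $\DirichletFD$, so $g^{-1}\cdot x \in \Closure{\DirichletFD}$ and thus $x \in g\cdot\Closure{\DirichletFD}$, giving $\ArmSet \subseteq \bigcup_{g\in\SymG} g\cdot\Closure{\DirichletFD}$; the reverse inclusion is trivial since each $g\cdot\Closure{\DirichletFD} \subseteq \ArmSet$.

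For property (iii), connectedness of $\DirichletFD$: both $\ArmSet$ and each half-space $H_g(x^o)$ are convex, hence $\FundDomain$ is convex as a finite intersection of convex sets, and therefore $\DirichletFD = \FundDomain \cap \ArmSet$ is a finite intersection of convex sets, hence convex, hence connected (in fact path-connected).

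The main obstacle I expect is property (iv) — specifically, verifying carefully that a point of $\Closure{\FundDomain} \cap \ArmSet$ actually lies in the closure of $\FundDomain \cap \ArmSet$ rather than just in $\Closure{\FundDomain}$ separately and $\ArmSet$ separately; in general $\overline{A \cap B} \subsetneq \overline{A} \cap \overline{B}$, so this genuinely requires the convexity-plus-interior-point argument above, and one must be slightly careful that the reference point $p$ can be chosen in $\FundDomain \cap \mathrm{int}(\ArmSet)$. A clean way to secure this is to note that when building the Dirichlet domain we are free to choose the base point $x^o$ to lie in $\mathrm{int}(\ArmSet)$ with trivial stabilizer (the set of points with nontrivial stabilizer is a finite union of hyperplanes of measure zero, so it cannot contain the open set $\mathrm{int}(\ArmSet)$), and then $x^o \in \FundDomain \cap \mathrm{int}(\ArmSet)$ serves as $p$.
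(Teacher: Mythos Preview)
Your proposal is correct. Parts (i)--(iii) match the paper's proof essentially verbatim: subspace topology for (i), disjointness inherited from the $g\cdot\FundDomain$ for (ii), and convexity of the intersection of convex sets for (iii).

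For part (iv) you take a genuinely different route. The paper argues by a chain of closure identities: it shows $g\cdot\Closure{\FundDomain\cap\ArmSet} = \Closure{g\cdot(\FundDomain\cap\ArmSet)}$ (since isometries are homeomorphisms), then uses that finite unions commute with closure to obtain
\[
\bigcup_{g\in\SymG} g\cdot\Closure{\DirichletFD} \;=\; \Closure{\,\bigcup_{g\in\SymG} (g\cdot\FundDomain)\cap\ArmSet\,} \;=\; \Closure{\ArmSet} \;=\; \ArmSet.
\]
You instead fix $x\in\ArmSet$, pull it back into $\Closure{\FundDomain}\cap\ArmSet$, and then use the segment from an interior reference point $p\in\FundDomain\cap\mathrm{int}(\ArmSet)$ to exhibit the pulled-back point as a limit of points in $\DirichletFD$. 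Your approach is more elementary and, notably, confronts head-on the issue you flag: in general $\overline{A\cap B}\subsetneq\overline{A}\cap\overline{B}$, and the paper's final equality tacitly relies on $\bigcup_g g\cdot\FundDomain$ being merely dense in $\Euclidean^d$ rather than equal to it, so that step still needs exactly the kind of ``dense open set meets convex body with nonempty interior'' reasoning you supply explicitly. The paper's version is shorter to write; yours is more self-contained and makes clear where convexity is actually used.
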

The Proof of Proposition \ref{prop: Proper FD of ArmSet} is deferred to Appendix \ref{Appendix: Proof of prop FD of ArmSet}.

\subsection{Problem formulation} \label{subsec: Problem formulation}
For any $k \in \bb{N}^+\setminus\{1\}$, denote $[k]$ as $\{1,...,k\}$.
Denote the number of rounds as $\TotalRound$, which is assumed to be known in advance. 
Each round $t \in [\TotalRound]$, the agent chooses an arm $\Arm_t \in \ArmSet$, then the nature returns a bounded stochastic reward that is sampled independently from an unknown distribution $\Reward_t \sim \bb{P}_{\Arm_t}$. 
Define the expected reward function as $\ExpReward(x) := \bb{E}\left[ \Reward_t \mid \Arm_t = x\right]$.
Assume that $\bb{P}_x$ has support in $[0,1]$ for all $x\in \ArmSet$.
A bandit strategy is a decision rule for choosing an arm $\Arm_t$ in round $t \in [n]$, given past observations up to round $t-1$. 
Formally, a bandit strategy is a mapping
$\Strategy: (\ArmSet \times [0,1])^\TotalRound \rightarrow \mathcal{P}(\ArmSet)$, where $\mathcal{P}(\ArmSet)$ is the set of probability measures over the set of arms $\ArmSet$.
A bandit strategy is deterministic if $\mathcal{P}(\ArmSet)$ is the set of Dirac probability measures.

Let $x^*$ be an optimal arm, associated with the optimal mean reward $\ExpReward^* = \ExpReward(x^*)$. 
In each round $t \in [\TotalRound]$, the agent chooses an arm $\Arm_t$ and incurs an immediate regret $\ExpReward^* - \ExpReward(\Arm_t)$. 
The agent's goal is to minimize the regret over $\TotalRound$, defined as

\begin{equation}
    \Regret_\TotalRound = \Expectation \left[ \sum_{t=1}^\TotalRound \ExpReward^* - \ExpReward(\Arm_t)  \right].
\end{equation}

Here, the expectation is taken over all random sources, including both the randomness in the rewards and the bandit strategy used by the agent. 
Next, we state the assumptions on the expected reward function that defines the ILB problems.

\begin{assumption}[Lipschitz continuity] \label{assp: Lipschitz}
For all $x, x'\in \ArmSet$, $\Distance(x,x') \geq |\ExpReward(x) - \ExpReward(x')|$.
\end{assumption}

\begin{assumption}[Invariant expected reward function] \label{assp: Invariance}
Given a finite subgroup $\SymG \leq \SymmetryGroup{\ArmSet}$, the expected reward function $\ExpReward$ is invariant under the group action of $\SymG$, that is, for all $g \in \SymG$, $\ExpReward(x) = \ExpReward(g \cdot x)$. 
Assume that $\SymG$ is revealed to the agent in advance.
\end{assumption}

Note that while Lipschitz continuity \ref{assp: Lipschitz} is the standard assumption that defines the class of Lipschitz bandits \cite{Kleinberg2019_MetricBandit_Zooming}, the invariance assumption \ref{assp: Invariance} is new in the bandit's literature. 
As the agent knows $\SymG$ in advance, he can benefit from the information to learn the near-optimal arms more quickly.

\subsection{A warm-up: invariant finite unstructured MAB} \label{subsec: invariant MAB and difficulty of Lipschitz bandit}
    
\paragraph{Invariant unstructured $K$-armed bandit.}
As a warm-up, consider the case of an invariant $K$-armed bandit with unstructured environment. 
The problem is almost the same as the problem described in Subsection \ref{subsec: Problem formulation}, however, there are only $K$ arms, and Lipschitz continuity (assumption \ref{assp: Lipschitz}) is dropped.
Now, assume that the group $\SymG$ acts freely on $[K]$, denote the quotient space as $[K]/\SymG$.
As each orbit contains exactly $|\SymG|$ arms and these orbits are disjoint, it is obvious that the number of disjoint orbits $\left|[K]/\SymG \right|$ is $K/|\SymG|$.

For each element $q$ in the quotient space $[K]/\SymG$, denote the orbit $[K]_q \subset [K]$ as the orbit corresponding to $q$, and denote $k_q$ as the orbit representation of $[K]_q$.
We have that the fundamental domain $\bigcup_{q\in [K]/\SymG} k_q$ must contain the optimal arm, and its cardinality is $K/\SymG$.
Now, if we apply the standard MAB algorithm such as \texttt{UCB1} \cite{Auer2002_UCB1} to the fundamental domain and ignore other arms, we can obtain the regret
 $\Tilde{\mathcal{O}}\left( \sqrt{\frac{K\TotalRound}{|\SymG|} } \right)$.

\paragraph{Difficulties of Invariant Lipschitz Bandit.} The simplicity of the above result for invariant unstructured MAB is based on partitioning the set of arms into disjoint orbits, which in turn is based on two important assumptions that are not feasible in the case of the invariant and continuous Lipschitz bandit.
First, as the set of arms $\ArmSet$ is a uncountable set, simply counting the cardinality of the quotient set $\ArmSet/\SymG$ to show the number of disjoint orbits as in the case of $K$-armed bandit is not possible. 
Second, as groups of Euclidean isometries might admit (uncountably infinite) fixed points (points which are fixed by all group elements) and non-effective point (points which are fixed by some group elements that are not indentity). 
Therefore, applying group action on some points in $\ArmSet$ may result in the same points, or obtaining orbits whose points stay arbitrarily close to each other.
In this case, the orbit of these nearly fixed points in the mesh might not produce useful information (see, e.g., Example \ref{example: Permutation group} in the appendix).

A naive way to exploit symmetry in the ILB setting is to construct the Dirichlet domain and sampling within this subset.
To show the reduction in terms of regret, proving that the covering number of the fundamental domain is $\mathcal{O}\left(\frac{\delta^{-d}}{|\SymG|}\right)$ as a direct consequence of our Lemma \ref{lem: upper bound for packing number} is sufficient.
However, as we mentioned in the introduction, sampling within the fundamental domain is computationally impractical. 
In particular, construction of the Dirichlet fundamental domain in the Lipschitz bandit case is equivalent to placing global inequality constraints on the set of arms as the construction in Definition \ref{def: Dirichlet domain}.
Given that verifying that a value of $x$ satisfies all $|\SymG|$ linear inequality constraints takes $|\SymG|d^2$ operation, \cite{Bessiere2007_GlobalConstraints} showed that the problem of finding a point $x \in \ArmSet$ that satisfies the global constraints is NP-complete, implying that sampling in the fundamental domain is computationally intractable.
Against this background, we seek another approach that does not require sampling within the fundamental domain.
Note that while in the rest of the paper we are not interested in the method of sampling within the fundamental domain, the geometric properties of the fundamental domain still play a crucial role in our analysis.

\section{\texttt{UniformMesh-N} algorithm and regret analysis}

In this section, we first explain the construction of the graph induced by the group action of $\SymG$.
According to this, \texttt{UniformMesh-N} algorithm is introduced, followed by its regret analysis.

\subsection{Graph structure induced by the group action}
\label{subsection: Graph structure induced by the group action}

We now explain how the group action can naturally induce a graph structure on the set of discretization points of $\ArmSet$. 
Intuitively, in round $t$, the agent chooses an arm $\Arm_t$, and nature returns a reward $\Reward_t$, and the agent can then use the returned reward $\Reward_t$ to update the node along the orbit $(\SymG \cdot \Arm_t)$. 
This is similar to the side-observation setting, where the nature returns not only a reward associating the chosen arm, but also the rewards of its neighbors in graph feedback.
As a result, the orbit of $\Arm_t$ can be treated as a neighborhood of $\Arm_t$ in a graph formed by the group action. 
Moreover, it turns out that the graph's structure induced by the group action is highly regular, particularly its clique covering depends on the covering number of fundamental domain. 

Given a number $\delta>0$, let us denote $\Vertices$ as a $\delta$-net of $\ArmSet$.
Given the Dirichlet domain $\DirichletFD$ of $\SymG$ acting on $\ArmSet$, let $\CoveringDirichletFD$ denote a smallest subset of $\Vertices$ that covers $\Closure{\DirichletFD}$.
By Proposition \ref{prop: LB and UB for covering and packing number}, it follows that $|\Vertices|=\Theta(\delta^{-d})$.
We now compare the \say{size} of $\overline{\DirichletFD}$ with $\ArmSet$ in terms of covering number as the follows.

\begin{lemma} \label{lem: upper bound for packing number}
For some numbers $c_1, c_2 >0$, the cardinality of $\CoveringDirichletFD$ is bounded as
\begin{equation}
    |\CoveringDirichletFD| \leq \frac{c_1\delta^{-d}}{|\SymG|} + c_2 \delta^{-(d-1)}.
\end{equation}
Moreover, there are constants $c_3, c_4>0$, such that for any $\delta: 0< \delta < c_4/|\SymG|$, one has
\begin{equation}
    |\CoveringDirichletFD| \leq \frac{c_3 \delta^{-d}}{|\SymG|}.
\end{equation}
\end{lemma}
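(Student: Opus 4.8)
The plan is to bound $|\CoveringDirichletFD|$ by splitting $\Vertices$ according to distance from the boundary $\partial \DirichletFD$. Write $\CoveringDirichletFDInside$ for the points of $\CoveringDirichletFD$ lying at distance more than $\delta$ from $\partial\DirichletFD$ (the "interior" covering points), and $\CoveringDirichletFDBoundary$ for the rest; so $|\CoveringDirichletFD| \le |\CoveringDirichletFDInside| + |\CoveringDirichletFDBoundary|$. The two terms will give the $\delta^{-d}/|\SymG|$ term and the $\delta^{-(d-1)}$ term respectively.

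For the interior term, the key point is disjointness of the $\SymG$-translates of $\DirichletFD$. If $v \in \CoveringDirichletFDInside$, then the ball $\Ball(v,\delta)$ is contained in $\DirichletFD$ (since $v$ is $>\delta$ from the boundary and $\DirichletFD$ is open), and these balls are pairwise disjoint because $\Vertices$ is a $\delta$-packing. Hence $\sum_{v\in \CoveringDirichletFDInside}\Vol(\Ball(v,\delta)) \le \Vol(\DirichletFD)$, i.e. $|\CoveringDirichletFDInside| \le \Vol(\DirichletFD)/(\Vol(\Ball)\delta^d)$. Now use that $\{g\cdot\DirichletFD\}_{g\in\SymG}$ are mutually disjoint subsets of $\ArmSet$ (Proposition~\ref{prop: Proper FD of ArmSet}(ii),(iv)) and that isometries preserve volume, so $|\SymG|\cdot\Vol(\DirichletFD) = \Vol\big(\bigcup_{g}g\cdot\DirichletFD\big)\le \Vol(\ArmSet)$. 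This yields $|\CoveringDirichletFDInside| \le \Vol(\ArmSet)/(|\SymG|\,\Vol(\Ball)\,\delta^d) =: c_1\delta^{-d}/|\SymG|$.

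For the boundary term, recall that $\partial\DirichletFD \subseteq \partial\FundDomain \cup \partial\ArmSet$, and $\partial\FundDomain$ is contained in a finite union of $(d-1)$-dimensional hyperplanes (the bisecting hyperplanes from Definition~\ref{def: Dirichlet domain}, at most $|\SymG|$ of them), while $\partial\ArmSet$ is the boundary of a compact convex body. Every $v\in\CoveringDirichletFDBoundary$ lies within distance $\delta$ of $\partial\DirichletFD$, hence within distance $\delta$ of one of these finitely many pieces; so $\CoveringDirichletFDBoundary$ is contained in a $\delta$-neighborhood of a bounded $(d-1)$-dimensional set, which has $d$-dimensional volume $O(\delta)$ (with a constant proportional to the number of hyperplane pieces and the diameter of $\ArmSet$). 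Since the balls $\Ball(v,\delta/2)$ for $v\in\CoveringDirichletFDBoundary$ are disjoint and each has volume $\Theta(\delta^d)$ and lies within a bounded $(C\delta)$-neighborhood of that $(d-1)$-set, a volume count gives $|\CoveringDirichletFDBoundary| = O(\delta^{d-1})$. Combining the two bounds gives the first inequality with suitable $c_1,c_2$.

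Finally, for the second inequality: when $\delta < c_4/|\SymG|$ for an appropriate constant $c_4$, the dominant term is the first one. Concretely, $c_2\delta^{-(d-1)} = c_2\delta\cdot\delta^{-d} \le c_2 (c_4/|\SymG|)\,\delta^{-d}$, and if we choose $c_4$ so that $c_2 c_4 \le c_1$ then $c_2\delta^{-(d-1)} \le c_1\delta^{-d}/|\SymG|$, so $|\CoveringDirichletFD| \le 2c_1\delta^{-d}/|\SymG| =: c_3\delta^{-d}/|\SymG|$. The main obstacle is making the boundary estimate rigorous, in particular tracking that the constant $c_2$ genuinely does not depend on $\delta$ (it may depend on $|\SymG|$, $d$, and $\diam(\ArmSet)$, which is fine since only $\delta\to 0$ matters for the asymptotics and the threshold $c_4/|\SymG|$ already absorbs the $|\SymG|$-dependence in the right direction); a clean way is to bound the number of hyperplane pieces by $|\SymG|$ and each piece's contribution by its $(d-1)$-volume inside a ball of radius $\diam(\ArmSet)$.
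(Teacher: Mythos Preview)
Your proposal is correct and uses the same overall decomposition as the paper: split $\CoveringDirichletFD$ by distance to $\partial\DirichletFD$, handle the boundary strip via a tubular-neighborhood volume estimate around the finitely many hyperplane facets of the Dirichlet domain (the paper isolates this as a separate proposition) to get $O(\delta^{-(d-1)})$, and handle the interior via the $1/|\SymG|$ share. For the interior term the paper argues by pigeonhole---among the $|\SymG|$ disjoint images $g\cdot\DirichletFD$ one must contain at most $|V|/|\SymG|$ points of $V$, and after relabeling that image to be $\DirichletFD$ one invokes Proposition~\ref{prop: LB and UB for covering and packing number}---whereas you use the volume identity $|\SymG|\cdot\Vol(\DirichletFD)\le\Vol(\ArmSet)$ together with a direct packing count inside $\DirichletFD$. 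Your route is slightly cleaner since it avoids the ``without loss of generality'' relabeling of the fundamental domain; the paper's route is more combinatorial but lands in the same place. One small slip: for the interior packing you need radius-$\delta/2$ balls for pairwise disjointness (as you correctly write in the boundary part), not radius $\delta$; this only affects the constant.
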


Lemma \ref{lem: upper bound for packing number} is in a similar vein as Lemma $1$ in \cite{Sannai2019_PermuatationCube}. 
However, while \cite{Sannai2019_PermuatationCube} only proved for the permutation group, our Lemma \ref{lem: upper bound for packing number} can be applied to any finite subgroup of $\SymmetryGroup{\ArmSet}$, and therefore the permutation group is treated as a special case. Thus, for our analysis is it essential to use Lemma \ref{lem: upper bound for packing number} instead of the result from \cite{Sannai2019_PermuatationCube} (as our setting is more generic).

The generalization of Lemma \ref{lem: upper bound for packing number} comes from a different technique that exploits the fact that the boundary of the Drichlet domain has zero volume.
The proof is deferred to Appendix \ref{Appendix: Graph Proof}, but we provide the underlying idea of the proof here. 
First, we partition the set $\CoveringDirichletFD$ into two disjoint subsets.
The first subset is points in $\CoveringDirichletFD$ lying "strictly" inside the fundamental domain (whose distance to the boundary greater than $\delta$), and the second subset is all points in $\CoveringDirichletFD$ lying near the boundary. 
Now, it is clear that the cardinality of the former is lesser than $|\Vertices|/|\SymG|$, since there are $|\SymG|$ disjoint images of $\DirichletFD$ and we can always choose the image that contains the smallest number of points. 
The cardinality of the latter is derived from the fact that the volume of the area near the boundary shrinks as $\delta$ gets smaller; consequently, the packing number of this area is only bounded by $\mathcal{O}(\delta^{-(d-1)})$.

Lemma \ref{lem: upper bound for packing number} implies that the covering number of the Dirichlet domain is proportional to $1/|\SymG|$and can become significantly small when $|\SymG|$ is large. This is the key ingredient to the improvement in regret's bound.
We give examples of some symmetry groups to hightlight the importance and difficulty of Lemma \ref{lem: upper bound for packing number} in appendix \ref{Appendix: Example of Symmetry Group}. 



\noindent
Before stating the definition of the graph, we first make an observation as follows.

\begin{proposition}\label{Prop: Undirected edge}
Let $\SymG$ be a finite subgroup of $\SymmetryGroup{\ArmSet}$. For any $x,x' \in \ArmSet$ and a constant $\delta >0$, if there is an action $g \in \SymG$ such that $\Distance(g\cdot x, x') <\delta$, then $\Distance(x, g^{-1}\cdot x') <\delta$.
\end{proposition}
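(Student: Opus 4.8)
The plan is to exploit nothing more than the defining property of a Euclidean isometry, namely that it preserves $\Distance$, together with the fact that $\SymG$ is a group so that $g^{-1}$ is again an element of $\SymG$ (and hence again an isometry preserving $\ArmSet$). The statement should come out in essentially one line, so the "proof" is really just bookkeeping about which isometry to apply to which pair of points.

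Concretely, I would first note that since $g \in \SymG \leq \SymmetryGroup{\ArmSet}$ and $\SymG$ is a group, the inverse $g^{-1}$ also lies in $\SymG$, and in particular $g^{-1}$ is a bijection of $\Euclidean^d$ preserving the Euclidean distance. Then I would apply $g^{-1}$ to the two points $g \cdot x$ and $x'$: by the isometry property,
\begin{equation*}
    \Distance\bigl(g \cdot x,\, x'\bigr) = \Distance\bigl(g^{-1}\cdot(g\cdot x),\, g^{-1}\cdot x'\bigr) = \Distance\bigl(x,\, g^{-1}\cdot x'\bigr),
\end{equation*}
where the last equality uses the group action identity $g^{-1}\cdot(g\cdot x) = (g^{-1}g)\cdot x = e \cdot x = x$. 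Combining this equality with the hypothesis $\Distance(g\cdot x, x') < \delta$ immediately gives $\Distance(x, g^{-1}\cdot x') < \delta$, which is the claim.

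Two small points I would want to make sure are in place: first, that the action notation $g\cdot x$ genuinely refers to the isometry $g$ evaluated at $x$ (this is the convention fixed earlier in the preliminaries, $\phi(x) = \phi\cdot x$), so that "isometry preserves distance" literally says $\Distance(g\cdot a, g\cdot b) = \Distance(a,b)$; second, that all four points involved — $x$, $x'$, $g\cdot x$, $g^{-1}\cdot x'$ — remain in $\ArmSet$, which holds because every element of $\SymG$ maps $\ArmSet$ onto $\ArmSet$, so $\Distance$ restricted to $\ArmSet$ is the relevant metric throughout. There is no real obstacle here; if anything, the only thing to be careful about is not to overcomplicate it, and in particular to state the symmetric consequence (one may equally well apply $g$ in the other direction) only if it is actually needed downstream.
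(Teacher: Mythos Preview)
Your proof is correct and is exactly the (one-line) argument the paper has in mind: apply the isometry $g^{-1}$ to both points and use $g^{-1}(g\cdot x)=x$. The paper's own ``proof'' in fact just restates the proposition, treating the isometry step as immediate, so your write-up is if anything more explicit than theirs.
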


\begin{definition}
Given a constant $\varepsilon>0$, define the graph $\Graph_{\varepsilon} = (\Vertices, \Edges_{\varepsilon})$, where \\
$\Edges_\varepsilon = \left\{ (x,x') \mid x,x' \in \Vertices \text{ and } \exists g\in \SymG \text{ s.t. } \Distance(g\cdot x,x' ) < \varepsilon \right\}$.
\end{definition}

Note that as a result of Proposition \ref{Prop: Undirected edge}, it follows that if $(x,x') \in \Edges_{\varepsilon}$, then $(x',x) \in \Edges_{\varepsilon}$. 
Therefore, $G_\varepsilon$ is an undirected graph.

\begin{definition} \label{def: Neighborhood of Orbit}
Given a constant $\varepsilon>0$, a Neighborhood of Orbit $(\SymG \cdot x)$ in $\Vertices \subset \ArmSet$ with distance at most $\varepsilon$ is defined as
\[\NeiOrbit(i,\varepsilon) = \left\{x \in \Vertices \mid \exists g\in \SymG \text{ s.t. } \Distance(g\cdot i, x) \leq \varepsilon  \right\}. \]
\end{definition}

It is obvious that $\NeiOrbit(i,\varepsilon)$ is the neighborhood of $i$ in $\Graph_\varepsilon$. 
Besides, for a constant $\delta>0$, suppose $x\in \NeiOrbit(i,2\delta)$, let $g \in \SymG$ such that $\Distance(x,g\cdot i) < 2\delta$, we obtain
\begin{equation}\label{eq: Bound for difference of mean reward in neighborhood}
    \begin{aligned}
        |\ExpReward(x) - \ExpReward(i)| =  |\ExpReward(x) - \ExpReward(g \cdot i)|
        < \Distance(x, g\cdot i) < 2\delta.
    \end{aligned}
\end{equation}
Given a graph $\Graph_\varepsilon$, a clique in $\Graph_\varepsilon$ is a subset of vertices $C \subseteq \Vertices$ such that all arms in $C$ are neighbors.
Moreover, a clique covering of $\Graph_\varepsilon$ is a collection of cliques whose union is $\Vertices$.
The clique covering number of the graph $\Graph_\varepsilon$, denoted as $\CliqueCoveringNumber(\Graph_\varepsilon)$, is the smallest integer $k$ such that there exists a clique covering of $\Graph_\varepsilon$ whose cardinality is $k$.

\begin{proposition} \label{Prop: Neighborhood to Clique}
Given a constant $\delta >0$ and $i\in \Vertices$, the set $\NeiOrbit(i,\delta)$ forms a clique of graph $\Graph_{2\delta}$.
\end{proposition}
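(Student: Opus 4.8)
The plan is to show that any two vertices lying in $\NeiOrbit(i,\delta)$ are joined by an edge in $\Graph_{2\delta}$, since by definition a clique is exactly a set of pairwise neighbours. So fix $x, x' \in \NeiOrbit(i,\delta)$; I must produce a group element $h \in \SymG$ with $\Distance(h\cdot x, x') < 2\delta$, which is precisely the condition for $(x,x') \in \Edges_{2\delta}$.

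By Definition \ref{def: Neighborhood of Orbit} there exist $g, g' \in \SymG$ with $\Distance(g\cdot i, x) \leq \delta$ and $\Distance(g'\cdot i, x') \leq \delta$. First I would rewrite the first inequality using the isometry property: applying $g^{-1}$ (itself an element of $\SymG$, hence a Euclidean isometry) preserves distance, so $\Distance(i, g^{-1}\cdot x) \leq \delta$, and then applying $g'$ gives $\Distance(g'\cdot i, g'g^{-1}\cdot x) \leq \delta$. Now set $h = g'g^{-1} \in \SymG$. The triangle inequality yields
\begin{equation*}
    \Distance(h\cdot x, x') \leq \Distance(h\cdot x, g'\cdot i) + \Distance(g'\cdot i, x') \leq \delta + \delta = 2\delta.
\end{equation*}
Strictly speaking this gives $\leq 2\delta$ rather than the strict inequality $< 2\delta$ appearing in the definition of $\Edges_{2\delta}$; I would either absorb this by noting the edge set of $\Graph_{2\delta}$ can equivalently be defined with $\leq$ up to an arbitrarily small slack in $\delta$, or simply observe that since $x,x' \in \NeiOrbit(i,\delta)$ one actually has the closed-ball bounds and the conclusion $\Distance(h \cdot x, x') \le 2\delta$ is what is needed to invoke Proposition \ref{Prop: Undirected edge}-style reasoning; a cleaner fix is to note $\NeiOrbit(i,\delta)$ as stated uses $\leq \delta$, so the relevant containment is in the graph $\Graph_\varepsilon$ for any $\varepsilon > 2\delta$, and the statement should be read with that convention. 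In the write-up I would just match whichever convention the paper uses consistently elsewhere.

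There is essentially no hard step here — the only thing to be careful about is bookkeeping with inverses and the direction in which isometries are applied, i.e. making sure $h = g'g^{-1}$ (and not $g^{-1}g'$ or $gg'^{-1}$) is the element that "transports" the orbit point of $x$ near $x'$. The proof is a one-line triangle-inequality argument once the orbit structure is unwound, and it also makes transparent why the graph must be inflated from parameter $\delta$ to $2\delta$: each of the two vertices is within $\delta$ of the common orbit $\SymG\cdot i$, so they are within $2\delta$ of each other along the orbit.
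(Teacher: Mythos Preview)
Your argument is correct and is essentially the same as the paper's: both pick group elements witnessing $x,x'\in\NeiOrbit(i,\delta)$, use that isometries preserve distance to align the two orbit points, and conclude by the triangle inequality through a common point on the orbit of $i$ (the paper pulls both points back to $i$, you push one forward to $g'\cdot i$, which is cosmetically different but the same computation). Your observation about the $\leq$ versus $<$ mismatch is well taken; the paper's own proof simply writes strict inequalities throughout, silently treating the definition of $\NeiOrbit(i,\delta)$ as if it used $<\delta$, so the inconsistency you flagged is in the paper rather than in your reasoning.
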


\begin{remark}
In order to prove the regret bound using \cite{Caron2012_Sto_UCBN_WithGraph}'s analysis, we need to bound the clique covering of the graph induced by group action, which can be done by bounding the number of disjoint neighbors of orbits. 
However, due to the possibility of the existence of mutual neighbors of orbits, partitioning the graph into $|V|/|\SymG|$ disjoint neighbors of orbits is challenging. 
In particular, for an arbitrary $\delta$-net $V$ for $\ArmSet$, applying $g$ on $x \in V$ does not necessarily result in another point $x' \in V$. 
This leads to the situation where applying the action on two points in the mesh $V$ may end up in a same covering ball, implying that they might not have disjoint neighbors of orbits in $V$. 
To see this, let $x, x' \in V$ such that $\Distance(x,x')< 2\delta$, there could be a case where $g\cdot x, g' \cdot x'$ stay in the same covering ball $\Ball(z,\delta)$ whose diameter is $2\delta$, for some $z \in V$. 
The trivial reduction to $|V|/|\SymG|$ can only be achieved (as in the case of $K$-armed bandits) when we have a special mesh where each point in the mesh can map exactly to another point.
If we only look at the combinatorial properties of the graph induced by group action, it is difficult to prove the number of disjoint dense neighbors of the graph (cliques). 
Instead, we need to look at the geometry of the problem.
As a result of Proposition \ref{Prop: Neighborhood to Clique} and Lemma \ref{lem: upper bound for packing number}, the next lemma shows the upper bound of the clique covering number of the induced graph by utilising the connection between the clique 
covering of the graph and the covering of the fundamental domain.
\end{remark}

\begin{lemma}\label{Lem: upper bound of clique covering}
There are constants $c_1, c_2>0$, such that for any $\delta: 0< \delta < c_2/|\SymG|$, the clique covering number of graph $\Graph_{2\delta}$ is bounded as
\begin{equation}
    \CliqueCoveringNumber(\Graph_{2\delta}) \leq \frac{c_1 \delta^{-d}}{|\SymG|}.
\end{equation}
\end{lemma}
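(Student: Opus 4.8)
The plan is to combine the covering bound from Lemma~\ref{lem: upper bound for packing number} with the clique structure from Proposition~\ref{Prop: Neighborhood to Clique}. First I would invoke Lemma~\ref{lem: upper bound for packing number}: there exist constants $c_3, c_4 > 0$ such that for all $\delta$ with $0 < \delta < c_4/|\SymG|$ we have $|\CoveringDirichletFD| \leq c_3 \delta^{-d}/|\SymG|$, where $\CoveringDirichletFD \subseteq \Vertices$ is a smallest subset of the $\delta$-net $\Vertices$ covering $\Closure{\DirichletFD}$. The idea is that this small set of vertices, when pushed around by the group and fed into the neighborhood-of-orbit construction, produces a clique covering of comparable size.

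Next I would construct the explicit clique covering. For each vertex $z \in \CoveringDirichletFD$, form the set $\NeiOrbit(z, \delta)$; by Proposition~\ref{Prop: Neighborhood to Clique}, each $\NeiOrbit(z,\delta)$ is a clique of $\Graph_{2\delta}$. So the collection $\{\NeiOrbit(z,\delta) : z \in \CoveringDirichletFD\}$ is a family of cliques, of cardinality at most $c_3 \delta^{-d}/|\SymG|$. It remains to check this family covers $\Vertices$. Take any $x \in \Vertices \subseteq \ArmSet$. Since $\ArmSet = \bigcup_{g \in \SymG} g \cdot \Closure{\DirichletFD}$ (Proposition~\ref{prop: Proper FD of ArmSet}(iv)), there is $g \in \SymG$ with $g^{-1} \cdot x \in \Closure{\DirichletFD}$. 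Since $\CoveringDirichletFD$ covers $\Closure{\DirichletFD}$ with radius $\delta$, there is $z \in \CoveringDirichletFD$ with $\Distance(z, g^{-1}\cdot x) < \delta$, hence $\Distance(g \cdot z, x) < \delta$ because $g$ is an isometry. By Definition~\ref{def: Neighborhood of Orbit}, $x \in \NeiOrbit(z,\delta)$. Thus every vertex of $\Vertices$ lies in some $\NeiOrbit(z,\delta)$, so we have a valid clique covering, giving $\CliqueCoveringNumber(\Graph_{2\delta}) \leq c_3\delta^{-d}/|\SymG|$; set $c_1 = c_3$ and $c_2 = c_4$.

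The argument is essentially a "transport" of the covering of the fundamental domain to a covering of all of $\Vertices$ via the group action, using that isometries preserve distances and that orbits of fundamental-domain points reach everywhere. The main subtlety — and the step I would be most careful about — is the covering step for points $x$ whose preimage $g^{-1}\cdot x$ lies on the \emph{boundary} $\partial\Closure{\DirichletFD}$ rather than the open interior; this is exactly why $\CoveringDirichletFD$ is defined to cover the \emph{closure} $\Closure{\DirichletFD}$ and not just $\DirichletFD$, and why Lemma~\ref{lem: upper bound for packing number} was stated for $\Closure{\DirichletFD}$ with the extra $\delta^{-(d-1)}$ boundary term absorbed in the regime $\delta < c_4/|\SymG|$. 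A second minor point is that the covering radius in the definition of $\CoveringDirichletFD$ (strict inequality $< \delta$) must match the radius tolerated in $\NeiOrbit(\cdot,\delta)$ (which uses $\leq \delta$), so the strict/non-strict bookkeeping goes through without loss; no heavy computation is needed beyond Lemma~\ref{lem: upper bound for packing number} itself.
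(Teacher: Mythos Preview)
Your proposal is correct and follows essentially the same approach as the paper: use Proposition~\ref{prop: Proper FD of ArmSet}(iv) to move an arbitrary vertex into $\Closure{\DirichletFD}$ via some group element, pick up a nearby $z\in\CoveringDirichletFD$ from the $\delta$-covering, conclude $x\in\NeiOrbit(z,\delta)$, and then invoke Proposition~\ref{Prop: Neighborhood to Clique} and Lemma~\ref{lem: upper bound for packing number} to bound the number of cliques. Your extra remarks about boundary points and strict versus non-strict inequalities are sound but not needed beyond what the paper already handles by working with $\Closure{\DirichletFD}$.
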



\subsection{\texttt{UniformMesh-N}: Uniform Discretization with the Transformation Group}
\label{subsection: UniformMesh-N and Analysis}

\texttt{UniformMesh-N} is a combination between \texttt{UniformMesh} \cite{Kleinberg2005_UniformMesh} and \texttt{UCB-N} \cite{Caron2012_Sto_UCBN_WithGraph}. 
Roughly speaking, \texttt{UniformMesh-N} first uniformly discretizes the set of arms $\ArmSet$ into a \textit{$\delta$-net} $\Vertices$.
The algorithm then exploits the graph $\Graph_{2\delta}$ induced by the group $\SymG$ as follows. 
In each round $t \in [\TotalRound]$, it chooses an arm $x \in \Vertices$, observes a reward $\Reward_t$ and uses it to update the UCB index of all points within the neighborhood of $x$ in $\Graph_{2\delta}$, that is, $\NeiOrbit(x,2\delta)$.
In this paper, we assume that there is an oracle such that in each round $t$, for a chosen arm $x \in \Vertices$, the oracle returns its neighbor of orbit $\NeiOrbit(x, 2\delta)$.

To formulate the algorithm, we need some additional notations.
For each $x \in \ArmSet$, after round $t \in [\TotalRound]$, let $\PlayedTime(x,t) := \sum_{s=1}^t \bb{I}_{\{\Arm_s = x \}}$ be the number of times the arm $x$ is played, where $\bb{I}$ is the indicator random variable. 
After round $t$, given that the chosen arm is $\Arm_t$, arm $x \in \ArmSet$ is \textit{observed} if and only if $x \in \NeiOrbit(\Arm_t,2\delta)$.
Denote the number of times arm $x$ is observed after the round $t$ as $\ObservedTime(x,t)$, that is, 

\begin{equation}\label{eq: Observed Time in UniformMesh-N}
    \ObservedTime(x,t) := \sum_{s=1}^t \bb{I}_{\{x \in \NeiOrbit(\Arm_s,2\delta) \}}.
\end{equation}

Denote $\ApprExpReward(x,t)$ as empirical mean reward of $x$, that is,

\begin{equation} \label{eq: Empirical mean reward in UniformMesh-N}
    \ApprExpReward(x,t) := \sum_{s=1}^t \Reward_t \bb{I}_{\{ x \in \NeiOrbit(\Arm_s,2\delta) \}}. 
\end{equation}

Let $\UCB(x,t)$ be the UCB index of arm $x$ after round $t$ as follows:

\begin{equation} \label{eq: UCB index in UniformMesh-N}
    \UCB(x,t) := \ApprExpReward(x,t) + \sqrt{\frac{2\log(\TotalRound)}{\ObservedTime(x,t)}} + 3\delta.
\end{equation}

The suboptimal gap of a point $i$ is defined as $\Delta_i = \ExpReward^* - \ExpReward(i)$.
Let $\mathcal{C}$ be a clique covering of graph $\Graph_{2\delta}$.
For each clique $C \in \mathcal{C}$, define its played time as $\PlayedTime(C,t) = \sum_{i\in C} \PlayedTime(i,t)$. 
It is clear that $\PlayedTime(C,t) \leq \ObservedTime(i,t)$ for all $i \in C$.
The pseudocode of \texttt{UniformMesh-N} is given in Algorithm \ref{alg:UniformMesh-N}.

While Algorithm \ref{alg:UniformMesh-N} provides a general principle, we argue that the algorithm can be carried out efficiently with a carefully designed representation of $V$.
In particular, since most of the computation burden is to search $\NeiOrbit(\Arm_t,2\delta)$, we can use a tree of coverings to represent $V$ as in \cite{Bubeck2011_X_armed}, and implement tree search to find the neighbor of any point $x \in \NeiOrbit(\Arm_t,2\delta)$ within the tree with at most $\mathcal O(d)$ operations.
We give a detailed description of how to represent $V$ and implement the search of $\NeiOrbit(\Arm_t,2\delta)$ in Appendix \ref{Appendix: Suggestion of Design and Implementation of Algorithm}.
In each round, it takes only $\mathcal O(|\SymG|d)$ to find an approximation of $\NeiOrbit(\Arm_t,2\delta)$, and with a more refined design, we believe that one can find the exact $\NeiOrbit(\Arm_t,2\delta)$ with a similar strategy.
Therefore, in contrast to sampling within the fundamental domain, which is computationally intractable, our algorithm is computationally efficient by carefully designing the data structure to facilitate searching for the neighbor of orbits.

\begin{algorithm}[h!]
\caption{\texttt{UniformMesh-N}}
\label{alg:UniformMesh-N}
\begin{algorithmic}
    \State Require $\TotalRound, \Vertices, \delta, \SymG$ 
    \State \textbf{Init} $\UCB(x,0) = \infty, \ObservedTime(x,0) = 0, \ApprExpReward(x,0) =0$, for all $ x \in \Vertices$. 
    \For{$t = 1:\TotalRound$}
        \State Play $\Arm_t = \arg\max_{x \in \Vertices} \UCB(x, t-1)$ with ties broken arbitrarily.
        \State Receive a reward $\Reward_t$. 
        \State Compute $\NeiOrbit(X_t, 2\delta)$. 
        \For{$i \in \NeiOrbit(X_t, 2\delta)$}
            \State Update $\ObservedTime(i,t)$ as (\ref{eq: Observed Time in UniformMesh-N}). 
            \State Update $\ApprExpReward(i,t)$ as (\ref{eq: Empirical mean reward in UniformMesh-N}). 
            \State Update $\UCB(i,t)$ as (\ref{eq: UCB index in UniformMesh-N}).
        \EndFor
    \EndFor
\end{algorithmic}
\end{algorithm}

\begin{theorem}\label{Thm: Regret of UniformMesh-N}
Fix an invariant Lipschitz bandit instance with respect to a finite group $\SymG$. 
There are some numbers $a_1, a_2 >0$ such that if $\TotalRound = \Omega\left(|\SymG|^{2d+2}\right)$, the regret of \texttt{UniformMesh-N} algorithm satisfies
\begin{equation} \label{eq: Regret of UniformMesh-N}
    \Regret_\TotalRound \leq a_1\left( \frac{\log(\TotalRound)}{|\SymG|} \right)^{\frac{1}{d+2}}  \TotalRound^{\frac{d+1}{d+2}} +   a_2 \left(\frac{|\SymG|}{\log(\TotalRound)}\right)^{\frac{d}{d+2}} \TotalRound^{\frac{d}{d+2}}.
\end{equation}
\end{theorem}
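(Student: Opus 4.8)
The plan is to follow the analysis of \texttt{UCB-N} from \cite{Caron2012_Sto_UCBN_WithGraph}, specialized to the graph $\Graph_{2\delta}$ induced by the group action, and then to optimize the discretization parameter $\delta$. I would split the total regret into two sources: the \emph{discretization error}, coming from the fact that the best point in the $\delta$-net $\Vertices$ need not be $x^*$, and the \emph{bandit regret} incurred on the finite problem with arm set $\Vertices$ and side observations along orbits. For the discretization error, since $\Vertices$ is a $\delta$-covering of $\ArmSet$, there is $i^\star \in \Vertices$ with $\Distance(i^\star, x^*) < \delta$, hence by Lipschitz continuity (Assumption~\ref{assp: Lipschitz}) $\ExpReward^* - \ExpReward(i^\star) < \delta$; this contributes at most $\delta \TotalRound$ to the regret. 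It remains to control $\sum_t (\ExpReward(i^\star) - \ExpReward(\Arm_t))$ over the finite instance.

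First I would reproduce the concentration step: the UCB index in \eqref{eq: UCB index in UniformMesh-N} uses the additive correction $3\delta$ precisely so that, by \eqref{eq: Bound for difference of mean reward in neighborhood}, the empirical average $\ApprExpReward(x,t)/\ObservedTime(x,t)$ — which pools observations from the whole orbit neighborhood $\NeiOrbit(x,2\delta)$, each within $2\delta$ of $x$ in reward — is a valid (biased by at most $2\delta$) estimate of $\ExpReward(x)$, and a Hoeffding/union bound over the $|\Vertices| = \Theta(\delta^{-d})$ vertices and $\TotalRound$ rounds gives that with high probability $\UCB(x,t) \geq \ExpReward(x)$ and $\UCB(x,t) \leq \ExpReward(x) + 2\sqrt{2\log \TotalRound/\ObservedTime(x,t)} + 5\delta$ for all $x,t$ simultaneously. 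Next, fix a clique covering $\mathcal{C}$ of $\Graph_{2\delta}$; since playing any $\Arm_t$ in a clique $C$ increments $\ObservedTime(i,t)$ for every $i \in C$, a standard argument bounds the number of times any suboptimal clique is played: each $\PlayedTime(C,\TotalRound)$ is at most of order $\log(\TotalRound)/\Delta_C^2$ where $\Delta_C = \min_{i\in C}\Delta_i$ (the clique keeps being sampled only while its best arm still looks competitive). Summing $\Delta_i \leq \Delta_C + 2\delta$ over plays inside each clique, and then over $\mathcal{C}$, yields a bound of the form $\sum_{C\in\mathcal{C}} \big( \tfrac{c\log \TotalRound}{\Delta_C} + \Delta_C \PlayedTime(C,\TotalRound)\big)$; cutting at a gap threshold $\Delta_0$ (arms with $\Delta_i \le \Delta_0$ contribute $\le \Delta_0 \TotalRound$ in total, the rest contribute $\le |\mathcal{C}| \log(\TotalRound)/\Delta_0$) gives bandit regret $\Tilde{\mathcal{O}}\big(\sqrt{|\mathcal{C}|\,\TotalRound}\big)$, up to lower-order $\delta$-terms.

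Finally I would plug in $|\mathcal{C}| = \CliqueCoveringNumber(\Graph_{2\delta}) \leq c_1\delta^{-d}/|\SymG|$ from Lemma~\ref{Lem: upper bound of clique covering} (valid once $\delta < c_2/|\SymG|$), obtaining a total regret of order
\begin{equation*}
    \delta \TotalRound \;+\; \sqrt{\frac{\delta^{-d}}{|\SymG|}\,\TotalRound\,\log \TotalRound}
\end{equation*}
up to constants and the lower-order term from the additive discretization bias in the clique sums. Balancing the two dominant terms gives $\delta \asymp \big(|\SymG|^{-1}\log(\TotalRound)\big)^{1/(d+2)} \TotalRound^{-1/(d+2)}$, which after substitution produces the leading term $a_1\big(\log(\TotalRound)/|\SymG|\big)^{1/(d+2)} \TotalRound^{(d+1)/(d+2)}$ in \eqref{eq: Regret of UniformMesh-N}; the secondary term $a_2(|\SymG|/\log \TotalRound)^{d/(d+2)}\TotalRound^{d/(d+2)}$ is the residual contribution of the $\delta$-bias accumulated over cliques. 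The constraint $\TotalRound = \Omega(|\SymG|^{2d+2})$ is exactly what is needed to ensure the optimal $\delta$ indeed satisfies $\delta < c_2/|\SymG|$ so that Lemma~\ref{Lem: upper bound of clique covering} applies. The main obstacle, and the place where care is needed, is the clique-based counting step: because cliques in $\mathcal{C}$ can overlap and a single play is observed by several cliques, one must set up the per-clique play counts and the high-probability event so that the $\Delta_C$-indexed sum is genuinely controlled without double-counting — this is where the structure of $\NeiOrbit$ and Proposition~\ref{Prop: Neighborhood to Clique} (each neighborhood is itself a clique) is used to get a clean covering, and where matching the analysis of \cite{Caron2012_Sto_UCBN_WithGraph} requires the most attention.
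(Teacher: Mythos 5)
Your plan follows the same route as the paper: discretize into a $\delta$-net, run the \texttt{UCB-N}-style analysis of \cite{Caron2012_Sto_UCBN_WithGraph} on the orbit-induced graph $\Graph_{2\delta}$, invoke Lemma~\ref{Lem: upper bound of clique covering} for $|\mathcal{C}|\lesssim \delta^{-d}/|\SymG|$, and finally balance over $\delta$. The final bound and the choice $\delta\asymp(\log(\TotalRound)/(|\SymG|\TotalRound))^{1/(d+2)}$ match the paper.

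The one place where the reasoning is imprecise is the ``standard trick'' step that claims the bandit part is $\Tilde{\mathcal{O}}(\sqrt{|\mathcal{C}|\TotalRound})$ by freely cutting at a gap threshold $\Delta_0$. That trick is only valid when $\Delta_0$ may be chosen without constraint; here the per-clique bound has the form $\log(\TotalRound)\,\max_{i\in C}\Delta_i/(\min_{i\in C}\Delta_i-5\delta)^2$, so the bound degenerates unless the cut satisfies $\Delta_0\gtrsim\delta$ (indeed the unconstrained optimizer $\Delta_0^{\star}=\sqrt{|\mathcal{C}|\log\TotalRound/\TotalRound}$ evaluates to exactly $\delta$ at the final choice of $\delta$, i.e.\ right at the floor). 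The paper instead fixes $\Delta_0=6\delta$ and obtains the intermediate bound $\Regret_\TotalRound\lesssim |\mathcal{C}|\log(\TotalRound)/\delta + \sum_i\Delta_i + \TotalRound\delta$ (Lemma~\ref{lem: gap-independent regret bound - standard trick}), which is a different decomposition from $\TotalRound\delta+\sqrt{|\mathcal{C}|\TotalRound\log\TotalRound}$ when $\delta$ is away from its optimum. The two happen to coincide at the optimal $\delta$, so you reach the right answer, but the intermediate $\sqrt{|\mathcal{C}|\TotalRound}$ claim should be replaced by the constrained-threshold version to make the argument airtight. Also, the secondary term $a_2(|\SymG|/\log\TotalRound)^{d/(d+2)}\TotalRound^{d/(d+2)}$ comes from the constant overhead $4\sum_{i\in\Vertices}\Delta_i\lesssim\delta^{-d}$ in the gap-dependent bound, not from the accumulated $\delta$-bias as you describe, though this does not affect correctness. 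Everything else, including the use of the $\Omega(|\SymG|^{2d+2})$ horizon condition to guarantee $\delta<c_2/|\SymG|$ so Lemma~\ref{Lem: upper bound of clique covering} applies, is exactly what the paper does.
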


Note that the second term on the right-hand side of (\ref{eq: Regret of UniformMesh-N}) is insignificant, as it grows at a slower rate in $\TotalRound$ compared to that of the first term.
The proof of Theorem \ref{Thm: Regret of UniformMesh-N} is deferred to Appendix \ref{Appendix: Regret Upper Bound}, but we provide the intuition underlying the analysis here.
First, using the analysis of \cite{Caron2012_Sto_UCBN_WithGraph}, the regret bound increases at most $\mathcal{O}\left( \sum_{C \in \mathcal{C}} \log(\TotalRound)\frac{\max_{i \in C} \Delta_i}{\min_{i\in C} \Delta_i^2}\right)$.
In addition, we already proved that the clique covering number of the graph $\Graph_{2\delta}$ is at most $|\Vertices|/|\SymG|$ when $\TotalRound$ is sufficiently large.
Moreover, for a \say{strongly suboptimal} clique, the factor $\frac{\max_{i \in C} \Delta_i}{\min_{i\in C} \Delta_i^2}$ can be reduced to $\mathcal{O}\left(\delta^{-1}\right)$ since the difference in the expected reward of nodes in any clique cannot be greater than $\delta$.
Putting things together and choosing the suitable value for $\delta$, we obtain Theorem \ref{Thm: Regret of UniformMesh-N}.

\section{Regret Lower Bound}
\label{section: Regret Lower Bound}

This section presents a minimax lower bound for the ILB class that matchs the upper bound in Theorem \ref{Thm: Regret of UniformMesh-N} up to a logarithmic factor, hence show that \texttt{UniformMesh-N} algorithm is near optimal in the minimax sense.
The lower bound for ILB class can be stated formally as follows.

\begin{theorem} \label{Thm: Minimax Lower Bound Invariant Lipschitz}
Consider the invariant Lipschitz bandit problems in $(\ArmSet,\Distance)$ with respect to action of group $\SymG$. 
Suppose $\TotalRound = \Omega(|\SymG|^{d+1})$, then any bandit strategy must suffer a regret at least $\Omega\left( \left(\frac{1}{|\SymG|}\right)^{\frac{1}{d+2}} \TotalRound^{\frac{d+1}{d+2}} \right)$.
\end{theorem}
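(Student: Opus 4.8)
The plan is to prove a matching minimax lower bound by the standard two-step recipe: exhibit a finite family of invariant Lipschitz instances that is statistically hard to distinguish, then run a change-of-measure argument; the only genuinely new ingredient is that the bumps must be replicated along $\SymG$-orbits, so that a single pull informs a whole orbit and the effective number of arms is only $\Theta(\delta^{-d}/|\SymG|)$ rather than $\Theta(\delta^{-d})$. Concretely, fix $\delta>0$ and the Dirichlet domain $\DirichletFD$ of $\SymG$ acting on $\ArmSet$. Since the translates $\{g\cdot\DirichletFD\}_{g\in\SymG}$ are pairwise disjoint, their closures cover $\ArmSet$, and $\partial\DirichletFD$ has zero volume, we get $\Vol(\DirichletFD)=\Vol(\ArmSet)/|\SymG|$. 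Let $\PackingPointsFarFromBoundaryFD$ be a $2\delta$-separated subset of $\{x\in\DirichletFD:\Distance(x,\partial\DirichletFD)>\delta\}$, and for each $w\in\PackingPointsFarFromBoundaryFD$ put
\[
\ExpReward_w(x)=\tfrac12+\tfrac{\delta}{2}\bigl(1-\Distance(x,\SymG\cdot w)/\delta\bigr)_+,\qquad \ExpReward_0\equiv\tfrac12,
\]
with i.i.d.\ $\{0,1\}$-valued (Bernoulli) rewards, yielding environments $\BanditEnvironment_0$ and $\{\BanditEnvironment_w\}_{w\in\PackingPointsFarFromBoundaryFD}$, and set $\SubclassLB=\{\ExpReward_w:w\in\PackingPointsFarFromBoundaryFD\}$. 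Each $\ExpReward_w$ is $\SymG$-invariant (distance to an orbit is invariant), $1$-Lipschitz, lies in $[\tfrac12,1]$, and agrees with $\ExpReward_0$ off $R_w:=\{x:\Distance(x,\SymG\cdot w)<\delta\}$. Because $w$ is $\delta$-deep in $\DirichletFD$, the ball $\Ball(w,\delta)$ lies inside $\DirichletFD$, so its $|\SymG|$ images lie in disjoint translates of $\DirichletFD$; combined with the $2\delta$-separation of $\PackingPointsFarFromBoundaryFD$ this shows the sets $\{R_w\}$ are pairwise disjoint.

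\textbf{Counting $\PackingPointsFarFromBoundaryFD$.} I claim $K:=|\PackingPointsFarFromBoundaryFD|=\Theta(\delta^{-d}/|\SymG|)$ once $\delta\le c/|\SymG|$. The set $\overline{\DirichletFD}$ is a convex body (it is $\overline{\FundDomain\cap\ArmSet}$, an intersection of half-spaces with the convex body $\ArmSet$) contained in $\ArmSet$, so by monotonicity of surface area under inclusion of convex bodies $\mathcal{H}^{d-1}(\partial\DirichletFD)\le\mathcal{H}^{d-1}(\partial\ArmSet)$, a constant independent of $|\SymG|$. A Steiner-type estimate for convex bodies then bounds the $\delta$-boundary shell $\{x\in\DirichletFD:\Distance(x,\partial\DirichletFD)\le\delta\}$ in volume by $\delta\,\mathcal{H}^{d-1}(\partial\DirichletFD)\le c'\delta$, which is at most $\tfrac12\Vol(\DirichletFD)=\tfrac12\Vol(\ArmSet)/|\SymG|$ as soon as $\delta\le c/|\SymG|$. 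Applying the packing lower bound of Proposition~\ref{prop: LB and UB for covering and packing number} (with radius $2\delta$) to the $\delta$-deep part of $\DirichletFD$ gives $K\ge (2\delta)^{-d}\tfrac12\Vol(\ArmSet)/(|\SymG|\Vol(\Ball))=\Theta(\delta^{-d}/|\SymG|)$.

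\textbf{Change of measure.} Fix any strategy and set $N_w=\sum_{t=1}^{\TotalRound}\bb{I}\{\Arm_t\in R_w\}$. Disjointness of the $R_w$ gives $\sum_w\Expectation_{\BanditEnvironment_0}[N_w]\le\TotalRound$, so some $w^\star$ satisfies $\Expectation_{\BanditEnvironment_0}[N_{w^\star}]\le\TotalRound/K$. Since $\ExpReward_{w^\star}$ and $\ExpReward_0$ differ only on $R_{w^\star}$, where $|\ExpReward_{w^\star}-\ExpReward_0|\le\delta/2$, the divergence decomposition yields $\KL(\mathbb{P}_{\BanditEnvironment_0}\Vert\mathbb{P}_{\BanditEnvironment_{w^\star}})\le c_0\delta^2\,\Expectation_{\BanditEnvironment_0}[N_{w^\star}]\le c_0\delta^2\TotalRound/K$. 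Under $\BanditEnvironment_{w^\star}$ any arm outside the $\tfrac{\delta}{2}$-neighbourhood of $\SymG\cdot w^\star$ is $\tfrac{\delta}{4}$-suboptimal, so $\Regret_\TotalRound(\BanditEnvironment_{w^\star})\ge\tfrac{\delta}{4}\Expectation_{\BanditEnvironment_{w^\star}}[\TotalRound-N_{w^\star}]$; together with $\Regret_\TotalRound(\BanditEnvironment_0)\ge 0$ and the Bretagnolle--Huber inequality,
\[
\Regret_\TotalRound(\BanditEnvironment_0)+\Regret_\TotalRound(\BanditEnvironment_{w^\star})\;\gtrsim\;\delta\,\TotalRound\,\exp\!\bigl(-\KL(\mathbb{P}_{\BanditEnvironment_0}\Vert\mathbb{P}_{\BanditEnvironment_{w^\star}})\bigr)\;\gtrsim\;\delta\,\TotalRound\,\exp\!\bigl(-c_0\delta^2\TotalRound/K\bigr).
\]
Choosing $\delta\asymp(|\SymG|\TotalRound)^{-1/(d+2)}$ makes $\delta^2\TotalRound/K\asymp\delta^{d+2}\TotalRound|\SymG|=\Theta(1)$ and $\delta\TotalRound\asymp(1/|\SymG|)^{1/(d+2)}\TotalRound^{(d+1)/(d+2)}$; the admissibility constraint $\delta\le c/|\SymG|$ from the counting step is precisely $\TotalRound=\Omega(|\SymG|^{d+1})$. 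Hence one environment in $\{\BanditEnvironment_0\}\cup\SubclassLB$ forces regret $\Omega\bigl((1/|\SymG|)^{1/(d+2)}\TotalRound^{(d+1)/(d+2)}\bigr)$.

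\textbf{Main obstacle.} The information-theoretic skeleton is routine; the crux is the geometric counting — that a $2\delta$-separated family deep inside the fundamental domain has cardinality $\Theta(\delta^{-d}/|\SymG|)$ under the \emph{mild} regime $\delta=O(1/|\SymG|)$. The naive bound ``$\partial\DirichletFD$ lies in $\le|\SymG|$ hyperplanes'' only yields $\mathcal{H}^{d-1}(\partial\DirichletFD)=O(|\SymG|)$ and hence the too-strong requirement $\delta=O(1/|\SymG|^2)$; obtaining the sharp hypothesis $\TotalRound=\Omega(|\SymG|^{d+1})$ instead relies on convexity of $\DirichletFD$ together with monotonicity of surface area under inclusion of convex bodies, which kills the $|\SymG|$ dependence in the boundary area. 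Verifying the Steiner-type shell estimate in this generality (arbitrary compact convex $\ArmSet$, not just polytopes) is the technical heart of the argument.
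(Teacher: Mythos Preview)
Your proof is correct and follows the same skeleton as the paper: place $\SymG$-symmetrized bump functions at a strictly interior packing $\PackingPointsFarFromBoundaryFD$ of the Dirichlet domain, check that the resulting family of invariant Lipschitz instances has pairwise disjoint ``informative'' regions $R_w$, and finish with an information-theoretic argument. The paper black-boxes the last step through the $(k,\varepsilon)$-ensemble lemma of \cite{Kleinberg2019_MetricBandit_Zooming} while you run a change-of-measure by hand; this is cosmetic. One small wrinkle: your displayed Bretagnolle--Huber line is not literally correct because $\Regret_\TotalRound(\BanditEnvironment_0)=0$, so the ``$\Regret_0+\Regret_{w^\star}$'' form does not apply directly. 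You need instead to combine $\Expectation_{\BanditEnvironment_0}[N_{w^\star}]\le \TotalRound/K$ with Markov to get $\bb{P}_{\BanditEnvironment_0}(N_{w^\star}>\TotalRound/2)\le 2/K$, then feed this into Bretagnolle--Huber to conclude $\bb{P}_{\BanditEnvironment_{w^\star}}(N_{w^\star}\le\TotalRound/2)$ is bounded away from zero; the rest is unchanged.

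The substantive difference---and where your argument is actually sharper than the paper's---is the counting step. The paper bounds the $\delta$-shell around $\partial\DirichletFD$ by covering the boundary with at most $|\SymG|$ hyperplane slabs and applying a per-slab $O(\delta)$ volume bound; this makes the hidden constant in the $\delta^{-(d-1)}$ boundary term scale like $|\SymG|$, so that ``$c_2|\SymG|\delta$ small'' genuinely forces $\delta=O(1/|\SymG|^2)$ rather than $O(1/|\SymG|)$, and hence $\TotalRound=\Omega(|\SymG|^{2d+2})$ rather than the stated $\Omega(|\SymG|^{d+1})$. Your route---$\overline{\DirichletFD}$ is a convex body inside $\ArmSet$, so $\mathcal{H}^{d-1}(\partial\DirichletFD)\le\mathcal{H}^{d-1}(\partial\ArmSet)$ by surface-area monotonicity for nested convex bodies, and then the inner-shell volume is at most $\delta\,\mathcal{H}^{d-1}(\partial\DirichletFD)$ by coarea applied to the concave function $x\mapsto\Distance(x,\DirichletFD^c)$---kills the $|\SymG|$-dependence in the boundary constant and recovers the hypothesis $\TotalRound=\Omega(|\SymG|^{d+1})$ exactly as stated. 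This is a genuine improvement over the paper's argument for both Lemma~\ref{lem: upper bound for packing number} and Lemma~\ref{lem: LB for Packing Points Far FromBoundaryFD}.
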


While the full proof of Theorem \ref{Thm: Minimax Lower Bound Invariant Lipschitz} is deferred to Appendix \ref{Appendix: Regret Lower Bound}, we brief the main idea for the lower bound here. 
First, we prove that the (strictly) packing number of the fundamental domain $\DirichletFD$ is at least $\Omega(\delta^{-d}/|\SymG|)$ when $\delta$ is small enough. 
Then, let $W$ be a strictly packing points in $\DirichletFD$ that has maximum cardinality, we can construct a strictly packing points in $\ArmSet$ using the group action on those points in $\DirichletFD$.
Second, we construct a class of invariant problem instances that assigns the same expected reward for the image of each point in $W$ under the action of $\SymG$, and we show that this class has at least $\Omega(\delta^{-d}/|\SymG|)$ instances.
Applying standard information-theoretic analysis we obtain the lower bound.

\section{Conclusion}
\label{sec: discussion}

In this paper, we consider the class of invariant Lipschitz bandit problems.
We first introduce the idea of the graph induced by the group action, and prove that the clique covering number of this graph is only at most as $|\SymG|^{-1}$ large as the number of vertices.
Based on the concept of the group-induced graph, we propose an algorithm called \texttt{UniformMesh-N}, which uses group orbits as side observations.
Furthermore, using side observation-based analysis, we provide an upper regret for \texttt{UniformMesh-N}, which shows that the improvement by a factor depends on the cardinality of the group $\SymG$; therefore, we are the first to show a strict statistical benefit of using group information to design algorithms.
Finally, we also provide the regret lower bound for the ILB class, which essentially matches the upper bound up to a logarithmic factor.
This implies that \texttt{UniformMesh-N} algorithm is near-optimal in the worst case within the class of ILB problems.
Note that as ILB has many important real-world applications (e.g., matrix factorization bandits, online dictionary learning) our results will open new research directions in those areas as well, providing new ideas to further improve the regret bounds of those respective domains. 

As our current results  use of uniform discretization instead of adaptive discretization, we conjecture that there is still room for improvement.
In particular, as shown in \cite{Kleinberg2019_MetricBandit_Zooming} and \cite{Bubeck2011_X_armed}, uniform discretization is less effective compared to adaptive discretization. 
We believe that by applying group orbit as side observation in adaptive discretization algorithms, one can achieve the regret bound whose \say{shape} is similar to that of Theorem \ref{Thm: Regret of UniformMesh-N}, except that the covering dimension $d$ is replaced by the near-optimal dimension $d'$ \cite{Bubeck2011_X_armed}.
However, the combination of side observations with adaptive discretization algorithms is significantly more challenging in terms of analysis. 
The reason is twofold:
(i) While the analysis technique of side-observation setting requires the set of arms (i.e., feedback graph's vertices) to be fixed over time, the set of candidate arms in adaptive discretization algorithms vary each round; 
(ii) The proof of Lemma \ref{lem: upper bound for packing number} heavily depends on the nice geometric properties of the Dirichlet domain, particularly, the set is full dimension with zero volume boundary. 
    In contrast, a near-optimal set (e.g., the near-optimal set assumption in \cite{Bubeck2011_X_armed}) can have an arbitrary shape.
    Therefore, without any further assumption, it is difficult to prove the upper bound for the covering number of the part of near-optimal set lying in the fundamental domain.

\color{black}

%
%
%
%
\newpage
\bibliographystyle{plainnat}
\bibliography{ILBRef}

\newpage
\appendix
\onecolumn
\begin{center}
{\huge Appendix}
\end{center}

\section{Properties of The Graph Induced by the Group Action}
\label{Appendix: Graph Proof}

\subsection{Proof of Lemma \ref{lem: upper bound for packing number}}
To prove Lemma \ref{lem: upper bound for packing number}, we first need the following proposition.

\begin{proposition} \label{prop: Vol of set near boundary}
Let $H$ be a $(d-1)$-dimensional hyperplane in an Euclidean space $\Euclidean^d, d>1$, and $S \subset H \subset \Euclidean^d$ be a bounded subset. For $\delta > 0$, define $K = \{x \in \Euclidean^d \mid \Distance(x,S) < \delta \}$. For a small positive number $0 < \delta < \frac{1}{4}$, there exists a finite positive constant $c>0$ such that $\Vol(K) \leq c \delta $.
\end{proposition}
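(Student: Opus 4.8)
The plan is to show that the $\delta$-neighborhood of a bounded subset $S$ of a hyperplane $H$ is contained in a box whose volume is linear in $\delta$, with all other dimensions controlled by a $\delta$-independent constant coming from the boundedness of $S$. First I would fix coordinates on $\Euclidean^d$ so that $H = \{x_d = 0\}$, and write the canonical orthogonal projection $\Projection_H \colon \Euclidean^d \to H$. Since $S$ is bounded, it sits inside some axis-aligned box $S \subseteq I_1 \times \cdots \times I_{d-1} \times \{0\}$ in $H$ with $\prod_{i=1}^{d-1}|I_i| =: c_0 < \infty$; crucially this box and the constant $c_0$ depend only on $S$, not on $\delta$.

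Next I would locate $K$. For any $x \in K$ there is $s \in S$ with $\Distance(x,s) < \delta$. On the one hand, $|x_d| = \Distance(x, H) \le \Distance(x,s) < \delta$, so the $d$-th coordinate of every point of $K$ lies in the interval $(-\delta, \delta)$, an interval of length $2\delta$. On the other hand, $\Projection_H(x)$ satisfies $\Distance(\Projection_H(x), s) \le \Distance(x,s) < \delta$ (projection is $1$-Lipschitz), so $\Projection_H(x)$ lies in the $\delta$-enlargement of the box $I_1 \times \cdots \times I_{d-1}$, which is contained in $I_1' \times \cdots \times I_{d-1}'$ with $|I_i'| = |I_i| + 2\delta$. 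Hence $K \subseteq I_1' \times \cdots \times I_{d-1}' \times (-\delta,\delta)$, and therefore
\begin{equation*}
    \Vol(K) \le 2\delta \prod_{i=1}^{d-1}\bigl(|I_i| + 2\delta\bigr).
\end{equation*}
For $0 < \delta < \tfrac14$ we have $|I_i| + 2\delta \le |I_i| + 1 \le (|I_i|+1)$, so $\prod_{i=1}^{d-1}(|I_i|+2\delta)$ is bounded by the $\delta$-independent constant $\prod_{i=1}^{d-1}(|I_i|+1) =: c'$, giving $\Vol(K) \le 2c' \delta$. Setting $c = 2c'$ finishes the argument.

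The only mildly delicate points are purely bookkeeping: making sure the enlargement of the box is handled uniformly in $\delta$ (which is exactly why the hypothesis $\delta < \tfrac14$ is invoked — any fixed bound works, but one must state one to make $c$ genuinely constant), and recalling that the orthogonal projection onto a hyperplane is distance-nonincreasing so that $\Distance(\Projection_H(x), S) \le \Distance(x,S)$. I do not expect a real obstacle here; the statement is essentially the elementary fact that a slab has volume proportional to its thickness, dressed up with the boundedness of $S$ to keep the cross-sectional area finite. The one thing to be careful about is that $S$ need not be closed or convex, but boundedness alone suffices since we only ever enclose $S$ in a box, never use compactness.
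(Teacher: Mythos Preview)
Your proposal is correct and follows essentially the same approach as the paper: after rotating $H$ to $\{x_d=0\}$ and enclosing $S$ in a fixed box, both arguments place $K$ inside an enlarged box of thickness $2\delta$ and bound $\Vol(K)\le 2\delta\prod_{i=1}^{d-1}(|I_i|+2\delta)$. Your final step (bounding $|I_i|+2\delta\le |I_i|+1$ for $\delta<\tfrac14$) is in fact a bit cleaner than the paper's geometric-series estimate $\prod(l_i+2\delta)\le c_{\max}\sum_{i=0}^{d-1}(2\delta)^i\le \frac{c_{\max}}{1-2\delta}$, but the idea is identical.
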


\begin{proof}
By rotation and translation, we can suppose that $H$ is the vector subspace whose last coordinate is $0$. 
Then $S$ is covered by a fixed cube in $H$, that is,
\[\left(\prod_{i=1}^{d-1}I_i \right)\times \{0\},\] 
where $I_i = [a_i, b_i]$ for some finite numbers $a_i, b_i $ such that $a_i < b_i$.
Then $K$ is a subset of the whole cube $\left(\prod_{i=1}^{d-1}I_i \right)\times [-\delta, \delta]$. 
Define $l_i = b_i - a_i$, we have
\begin{equation*}
    \begin{aligned}
        \Vol(K) &\leq 2\delta \prod_{i=1}^{d-1} (l_i + 2\delta) \\
        &\leq 2 \delta c_{\mathrm{max}} \sum_{i=0}^{d-1}   (2\delta)^{i}  \\
        &\leq 2  \delta c_{\mathrm{max}} \frac{1}{1-2\delta},\\
    \end{aligned}
\end{equation*}
for some constant $c_{\mathrm{max}} > 0$.
Now, for $0< \delta < 1/4$, we have $\Vol(K) \leq   4  c_{\mathrm{max}} \delta < c\delta$ for some number $c > 0$ that is independent of $\delta$.
\end{proof}
Now we proceed to the proof of Lemma \ref{lem: upper bound for packing number}.

\begin{proof}[Proof of Lemma \ref{lem: upper bound for packing number}]
Denote $\CoveringDirichletFDInside = \{x \in \CoveringDirichletFD \mid \Distance(x,\partial \DirichletFD) \leq \delta \}$ and $\CoveringDirichletFDBoundary = \{x \in \CoveringDirichletFD \mid \Distance(x,\partial \DirichletFD) > \delta \}$.
Thus, $\CoveringDirichletFDInside$ and $\CoveringDirichletFDBoundary$ are disjoint subsets of $\CoveringDirichletFD$ whose union is $\CoveringDirichletFD$. 
Now, we prove the upper bounds for the cardinality of $\CoveringDirichletFDInside$ and $\CoveringDirichletFDBoundary$.

\paragraph{First Step.} By the construction of the Dirichlet domain $\DirichletFD$, its boundary is a finite union of bounded subsets in some $(d-1)$-dimensional hyperplanes. In particular,
\begin{equation}
    \partial \DirichletFD \subseteq \bigcup_{i = 1}^m S_i,
\end{equation}
where $S_i$ is a bounded subset in some $(d-1)$-dimensional hyperplane, and $m \leq |\SymG|$.

Now, we bound the number of elements in $\CoveringDirichletFD$ whose distance to $\partial \DirichletFD$ is smaller than $\delta$. 
Denote the set of points whose distance near the boundary is $B$, that is, $B = \{x \in \ArmSet \mid \Distance(x, S_i) \leq \delta, \forall i\in [m]  \}$.
Denote $K_i = \{ x \in \ArmSet \mid \Distance(x, S_i) \leq \delta \}$, we have

\begin{equation*}
    B = \bigcup_{i=1}^m K_i.
\end{equation*}

This implies $\Vol(B) \leq \sum_{i=1}^m \Vol(K_i) $. 
By Proposition \ref{prop: Vol of set near boundary}, it follows that 

\[\Vol(B) \leq \mathcal{O}(\delta).\]

\paragraph{Second step.}  By Proposition \ref{prop: LB and UB for covering and packing number}, it follows that
 
\begin{equation}
    \PackingNumber(B,\Distance,\delta) \leq \left(\frac{3}{\delta} \right)^{d} \frac{\Vol(K)}{\Vol(\Ball)} \leq c_2 \delta^{-(d-1)},
\end{equation}

where $c_2$ is some positive number and $\Ball$ is the unit ball of $\Euclidean^d$. 
Since $\CoveringDirichletFDInside$ is a $\delta$-packing of $B$, it follows that 

\begin{equation} \label{eq: temp near boundary point}
    |\CoveringDirichletFDInside| \leq \PackingNumber(B,\Distance,\delta)  \leq c_2 \delta^{-(d-1)}
\end{equation}

\paragraph{Third step.} Since there are $|\SymG|$ disjoint images of $\DirichletFD$, if we allocate $|\CoveringDirichletFD|$ points to such 
images, there must be one image containing at most $|\Vertices| /|\SymG|$ points. 
Without loss of generality, let $\DirichletFD$ be such image, we obtain 

\begin{equation} \label{eq: temp strictly inner point}
\begin{aligned}
      |\CoveringDirichletFDBoundary| &\leq |\Vertices| /|\SymG|
      & \leq \frac{c_1 \delta^{-d}}{|\SymG|},
\end{aligned}
\end{equation}

for some constant $c_1>0$, where the last inequality holds by Proposition \ref{prop: LB and UB for covering and packing number}.
From (\ref{eq: temp near boundary point}) and (\ref{eq: temp strictly inner point}), it follows that
\begin{equation*}
\begin{aligned}
        |\CoveringDirichletFD| &\leq \frac{c_1 \delta^{-d}}{|\SymG|} + c_2 \delta^{-(d-1)}
        & = \frac{\delta^{-d} (c_1 + c_2 |\SymG| \delta)}{\SymG}
        \leq \frac{c_4 \delta^{-d}}{|\SymG|},
\end{aligned}
\end{equation*}
where the last inequality holds for some numbers $c_3, c_4 >0$ and $\delta < \frac{c_3}{|\SymG|}$.
\end{proof}

\subsection{Proof of Lemma \ref{Lem: upper bound of clique covering}}

\begin{proof}[Proof of Proposition \ref{Prop: Undirected edge}]
Let $\SymG \leq \SymmetryGroup{\ArmSet}$ be a finite subgroup. 
For any $x,x' \in \ArmSet$ and a constant $\delta >0$, if there is an action $g \in \SymG$ such that $\Distance(g\cdot x, x') <\delta$, then $\Distance(x, g^{-1}\cdot x') <\delta$.
\end{proof}

\begin{proof}[Proof of Proposition \ref{Prop: Neighborhood to Clique}]
If $\NeiOrbit(i,\delta) = \{i \}$ then it is a clique. Now, assume that $\NeiOrbit(i,\delta)$ consists of at least 2 points of $\Vertices$.
Let $j,j' \in \NeiOrbit(i,\delta)$, that is, there exists $g_1, g_2 \in \SymG$ such that

\begin{equation}
    \begin{cases}
        \Distance(j, g_1 \cdot i) < \delta , \\
        \Distance(j',g_2 \cdot i) < \delta .
    \end{cases}
\end{equation}

Applying the inverse action $g_1^{-1}, g_2^{-1}$ to the left-hand side of the above equations, given the fact that $g_1^{-1}, g_2^{-1}$ are isometries, we have

\begin{equation}
    \begin{cases}
        \Distance(g_1^{-1} \cdot j,i) < \delta \\
        \Distance(g_2^{-1} \cdot j',i) < \delta.
    \end{cases}
\end{equation}

Therefore, we obtain the following: 

\begin{equation*}
\begin{aligned}
    \Distance(j,(g_1\cdot g_2^{-1} \cdot j') )
    &= \Distance(g_1^{-1} \cdot j,g_2^{-1} \cdot j) \\
    & \leq  \Distance(g_1^{-1} \cdot j,i) + \Distance(g_2^{-1} \cdot j,i) < 2\delta.
\end{aligned}
\end{equation*}

Therefore, $(j,j') \in \Edges_{2\delta}$. As this holds for any arbitrary pair in $\NeiOrbit(i,\delta)$, it follows that $\NeiOrbit(i,\delta)$ is a clique of $\Graph_{2\delta}$.
\end{proof}

\begin{proof}[Proof of Lemma \ref{Lem: upper bound of clique covering}]
As $\DirichletFD$ is a fundamental domain for the group $\SymG$ in $\ArmSet$, for any $x \in \ArmSet$, there is a group element $g\in \SymG$ such that $g\cdot x \in \overline{\DirichletFD}$.
Besides, since $\CoveringDirichletFD$ is \textit{$\delta$-net} of $\overline{\DirichletFD}$, there must exist $i \in \CoveringDirichletFD$ such that $\Distance(i,g\cdot x) < \delta$, that is, $x \in \NeiOrbit(i,\delta)$.
Therefore, we have

\[\Vertices = \bigcup_{i\in \CoveringDirichletFD}\NeiOrbit(i,\delta).\]

Now, for each $i\in \Vertices$, by Lemma \ref{Prop: Neighborhood to Clique}, $\NeiOrbit(i,\delta)$ forms a clique in $\Graph_{2\delta}$. 
Therefore, by Lemma \ref{lem: upper bound for packing number}, there are some constants $c_1>0, c_2>0$ such that for $0<\delta< c_2/|\SymG|$, we have

\[\CliqueCoveringNumber(\Graph_{2\delta}) \leq |\CoveringDirichletFD| \leq \frac{c_1\delta^{-d}}{|\SymG|}.\]
\end{proof}

\section{Regret Upper Bound of \texttt{UniformMesh-N} algorithm}
\label{Appendix: Regret Upper Bound}

To prove Theorem \ref{Thm: Regret of UniformMesh-N}, we first begin with the following lemmas.
\begin{lemma} \label{lem: UB of prob bad event 1}
Consider the optimal node $x$ in $\Vertices$. For all $t \in [\TotalRound]$, we have 
\begin{equation}
    \bb{P} \left\{\UCB(x,t) \leq \ExpReward^* \right\} \leq t^{-3}.
\end{equation}
\end{lemma}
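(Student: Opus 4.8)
The plan is to apply a standard concentration bound to the empirical mean reward $\ApprExpReward(x,t)$ of the optimal node $x \in \Vertices$, and then a union bound over the number of observations. First I would recall the structure of the UCB index: by its definition in~(\ref{eq: UCB index in UniformMesh-N}), the event $\{\UCB(x,t) \leq \ExpReward^*\}$ implies
\[
\ApprExpReward(x,t) + \sqrt{\frac{2\log(\TotalRound)}{\ObservedTime(x,t)}} + 3\delta \leq \ExpReward^*.
\]
Since $x$ is the \emph{optimal node in $\Vertices$} (the discretization point closest to a true optimal arm $x^*$), the uniform discretization guarantee plus Lipschitz continuity (Assumption~\ref{assp: Lipschitz}) give that the ``true'' average of the reward means contributing to $\ApprExpReward(x,t)$ differs from $\ExpReward^*$ by at most $3\delta$: indeed every time $x$ is observed it is because some played arm lies in $\NeiOrbit(\cdot,2\delta)$ of $x$, so by~(\ref{eq: Bound for difference of mean reward in neighborhood})-type reasoning the conditional mean of each sampled reward is within $2\delta$ of $\ExpReward(x)$, and $\ExpReward(x)$ itself is within $\delta$ of $\ExpReward^*$. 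Hence on the bad event we must have the empirical mean deviating below its (shifted) expectation by at least $\sqrt{2\log(\TotalRound)/\ObservedTime(x,t)}$, i.e.\ the $3\delta$ offset exactly absorbs the bias.

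Next I would make this rigorous by conditioning on the value of $\ObservedTime(x,t)$. The subtlety is that $\ObservedTime(x,t)$ is a random stopping-time-like quantity, so I would union bound over its possible values $s \in \{1,\dots,t\}$ (the case $s=0$ is vacuous since then $\UCB(x,t)=\infty$). For each fixed $s$, the rewards observed for $x$ form a sequence of independent $[0,1]$-valued random variables whose means are each within $3\delta$ of $\ExpReward^*$ from below, so Hoeffding's inequality gives
\[
\bb{P}\!\left\{ \ApprExpReward(x,t)/s \leq \ExpReward^* - 3\delta - \sqrt{\tfrac{2\log(\TotalRound)}{s}} \right\} \leq \exp(-4\log \TotalRound) = \TotalRound^{-4}.
\]
Summing over $s = 1,\dots,t \leq \TotalRound$ yields $\bb{P}\{\UCB(x,t) \leq \ExpReward^*\} \leq t \cdot \TotalRound^{-4} \leq t \cdot t^{-4} = t^{-3}$, which is the claimed bound. (If the paper's constant in the exponent only yields $\TotalRound^{-4}$ from the $\sqrt{2\log \TotalRound}$ width — which it does, since Hoeffding on $[0,1]$ variables gives $\exp(-2 s \varepsilon^2)$ with $\varepsilon = \sqrt{2\log\TotalRound/s}$, i.e.\ $\exp(-4\log\TotalRound)$ — then the union bound over $\le t \le \TotalRound$ values gives $\TotalRound^{-3} \le t^{-3}$ when $t = \TotalRound$, and more care with $t < \TotalRound$; I would simply bound everything by $\TotalRound^{-3}$ and note $t \le \TotalRound$, or sharpen to $t^{-3}$ using that only $s\le t$ terms appear and each is $\le \TotalRound^{-4}\le t^{-4}$ only after noting $t$ could be small — the cleanest route is the $\sum_{s=1}^{t}\TotalRound^{-4} \le \TotalRound^{-3}$ bound and accepting $\TotalRound^{-3}$, but since the statement wants $t^{-3}$ I would instead observe each summand is at most $t^{-4}$ is false for small $t$; rather, use $\TotalRound \ge t$ so $\TotalRound^{-4} \le t^{-4}$ is also false — so the honest bound is: $t$ summands each $\le \TotalRound^{-4}$, total $\le t\TotalRound^{-4} \le \TotalRound^{-3} \le$ nothing about $t^{-3}$; the resolution must be that the intended reading has the union bound producing $t \cdot \TotalRound^{-4}$ and then $\TotalRound^{-4} \le t^{-4}$ is used only when $t = \TotalRound$; I would follow the paper and write $t^{-3}$, trusting the $\TotalRound$ vs $t$ bookkeeping works out as in Caron et al.)

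The main obstacle, as flagged above, is the measure-theoretic care needed because $\ObservedTime(x,t)$ is not a fixed number but a data-dependent count; the clean fix is the union-bound-over-$s$ device (equivalently, a maximal/peeling argument or a martingale concentration inequality such as a time-uniform Hoeffding bound), which decouples the randomness of the number of pulls from the randomness of the reward values. A secondary point requiring attention is verifying that the $3\delta$ slack in the UCB index genuinely dominates the discretization bias: I would spell out that the bias comes from two sources — at most $\delta$ from $x$ being a $\delta$-net point rather than an exact optimizer, and at most $2\delta$ from rewards being collected at arms within the $2\delta$-neighborhood-of-orbit of $x$ — totalling exactly $3\delta$, matching the constant in~(\ref{eq: UCB index in UniformMesh-N}). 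Everything else is a routine application of Hoeffding's inequality.
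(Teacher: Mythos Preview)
Your approach is the same as the paper's: absorb the $\delta$ discretization error plus the $2\delta$ orbit-neighborhood error into the $3\delta$ slack of the index, reduce to a one-sided deviation of the centered reward sum, and union bound over the possible values of $\ObservedTime(x,t)\in\{1,\dots,t\}$. Two points in your write-up need correcting.

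First, your parenthetical tangle about $t^{-3}$ versus $\TotalRound^{-3}$ is self-inflicted. Since $t\in[\TotalRound]$ we have $t\le \TotalRound$, and $x\mapsto x^{-3}$ is decreasing, so $\TotalRound^{-3}\le t^{-3}$. The paper simply writes $\sum_{\tau=1}^{t}\TotalRound^{-4}\le \TotalRound^{-3}\le t^{-3}$ and stops; your assertion that ``$\TotalRound\ge t$ so $\TotalRound^{-4}\le t^{-4}$ is also false'' is itself false.

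Second, the rewards contributing to $\ApprExpReward(x,t)$ are \emph{not} independent: the played arm $\Arm_s$ (and hence the conditional mean $f(\Arm_s)$) depends on the entire history. The paper handles this exactly via the device you allude to---re-index on the observation times and apply Hoeffding--Azuma to the resulting martingale difference sequence---rather than the i.i.d.\ Hoeffding inequality you invoke. This is a routine fix, but ``independent $[0,1]$-valued random variables'' should be replaced by ``bounded martingale differences.''
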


\begin{proof}
Let $i^* = \arg \max_{i\in \Vertices} \ExpReward(i)$ be an optimal node. Since $\Vertices$ is a covering, under the Lipschitz assumption, we have $\ExpReward^* \leq \ExpReward(i^*) + \delta$. 
As the lemma holds trivially true when $\ObservedTime(i^*,t) = 0$, we only consider the case where $\ObservedTime(i^*,t) \geq 1$. We have

\begin{equation}
    \begin{aligned}
        \bb{P} & \left\{ \UCB(i^*,t) \leq \ExpReward^* \text{ and }  \ObservedTime(i^*,t) \geq 1 \right\} \\
        &\leq \bb{P} \left\{ \UCB(i^*,t) \leq \ExpReward(i^*) + \delta  \right\} \\
        &= \bb{P}\left\{ \ApprExpReward(i^*,t) + \sqrt{\frac{2 \log(\TotalRound)}{\ObservedTime(i^*,t)}} + 3\delta  \leq \ExpReward(i^*) + \delta  \right\} \\
        &= \bb{P}\left\{ \ObservedTime(i^*,t)\ApprExpReward(i^*,t) + \ObservedTime(i^*,t)(2\delta - \ExpReward(i^*)) \leq -\sqrt{2\ObservedTime(i^*,t) \log n } \right\} \\
        &= \bb{P}\Bigg\{\sum_{s=1}^t (\Reward_s - \ExpReward(\Arm_s))\bb{I}_{\{i^* \in \NeiOrbit(\Arm_s,2\delta) \}} +  \sum_{s=1}^t (\ExpReward(\Arm_s) + 2\delta - \ExpReward(i^*))\bb{I}_{\{i^* \in \NeiOrbit(\Arm_s,2\delta) \}} \\
        & \quad \quad \leq -\sqrt{2\ObservedTime(i^*,t) \log n }  \Bigg\} \\
        &\leq \bb{P}\left\{\sum_{s=1}^t (\Reward_s - \ExpReward(\Arm_s))\bb{I}_{\{i^* \in \NeiOrbit(\Arm_s,2\delta) \}} \leq -\sqrt{2\ObservedTime(i^*,t) \log n }  \text{  and  }  \ObservedTime(i^*t) \leq 1 \right\},
    \end{aligned}
\end{equation}

where the last inequality is due to $\ExpReward(\Arm_s) + 2\delta - \ExpReward(i^*) >0$ if $i^* \in \NeiOrbit(\Arm_s,2\delta)$, as (\ref{eq: Bound for difference of mean reward in neighborhood}).
Now, let $\ObservedTime_j = \min\{t: \ObservedTime(i,t) = j\}$ for $j=1,2,\cdots$, and denote $\Tilde{\Arm}_j = \Arm_{\ObservedTime_j}$ and $\Tilde{\Reward}_j = \Reward_{\ObservedTime_j}$.
Now, applying Hoeffding's inequality and the union bound for $1 < \ObservedTime(i^*,t) \leq t$, we obtain the following:

\begin{equation}
    \begin{aligned}
        \bb{P} & \left\{ \sum_{s=1}^t (\Reward_s - \ExpReward(\Arm_s)) \bb{I}_{\{i^* \in \NeiOrbit(\Arm_s,2\delta) \}} \leq -\sqrt{2\ObservedTime(i^*,t) \log(\TotalRound) } \right\} \\
        &= \bb{P} \left\{ \sum_{j=1}^{\ObservedTime(i^*,t)} \left(\Tilde{\Reward}_j - \ExpReward(\Tilde{\Arm}_j) \right)  \leq - \sqrt{2 \ObservedTime(i^*,t) \log(\TotalRound)}   \right\}. \\
    \end{aligned}
\end{equation}

Applying the Hoeffding-Azuma inequality for $ \left\{\sum_{j=1}^{\ObservedTime(i^*,t)} (\Tilde{Y}_j - \ExpReward(\Tilde{\Arm}_j)) >  - \sqrt{2 \ObservedTime(i^*,t) \log(\TotalRound)} \right\}$ and taking the union bound for $1 \leq \ObservedTime(i^*,t) \leq t$, we obtain the following.

\begin{equation}
    \begin{aligned}
        \bb{P} &\left\{ \sum_{j=1}^{\ObservedTime(i^*,t)} \left(\Tilde{\Reward}_j - \ExpReward(\Tilde{\Arm}_j) \right)  \leq - \sqrt{2 \ObservedTime(i^*,t) \log(\TotalRound)}   \right\} \\
        &\leq \sum_{\tau=1}^t \bb{P} \left\{\sum_{j = 1 }^\tau \left(\Tilde{\Reward}_j - \ExpReward(\Tilde{\Arm}_j) \right)  \leq -\sqrt{2 \tau \log(\TotalRound)}   \right\} \\
        &\leq \sum_{\tau=1}^t \exp{\left(-2 \frac{2 \tau \log(\TotalRound)}{ \tau} \right )} \\
        &\leq \sum_{\tau=1}^t \TotalRound^{-4} \leq \TotalRound^{-3} \leq t^{-3}.
    \end{aligned}
\end{equation}

\end{proof}

\begin{lemma} \label{lem: UB of prob bad event 2}
Consider the clique $C \in \mathcal{C}$ such that $\min_{i \in C} \Delta_i > 5\delta$. 
For all $t \in [\TotalRound]$, and for an integer $l_c \geq 1$ such that
\begin{equation} \label{eq: lb of lc}
    l_c \geq \frac{8 \log(\TotalRound)}{\left(\min_{i \in C} \Delta_i - 5\delta \right)^2},
\end{equation}
for all $i \in C$, we have
\[\bb{P}\left\{ \UCB(i,t) > \ExpReward^* \text{ and } \PlayedTime(C,t) > l_c \right\} \leq t \TotalRound^{-4}. \]
\end{lemma}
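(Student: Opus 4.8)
The plan is to bound the probability of the \say{bad event} where some node $i$ in a strongly suboptimal clique $C$ has an inflated UCB index even though the clique has already been played many times. First I would fix $i \in C$ and observe that, because $C$ is a clique of $\Graph_{2\delta}$, for every $i' \in C$ we have $i' \in \NeiOrbit(i, 2\delta)$, so by inequality~(\ref{eq: Bound for difference of mean reward in neighborhood}) the mean rewards inside $C$ differ by at most $2\delta$; in particular $\ExpReward^* - \ExpReward(i) = \Delta_i \geq \min_{i'\in C}\Delta_{i'} - 2\delta > 3\delta > 0$. Next I would use the elementary fact that $\PlayedTime(C,t) \leq \ObservedTime(i,t)$ (stated in the main text just before Algorithm~\ref{alg:UniformMesh-N}), so the event $\{\PlayedTime(C,t) > l_c\}$ forces $\ObservedTime(i,t) > l_c$. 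On this event the confidence width satisfies $\sqrt{2\log(\TotalRound)/\ObservedTime(i,t)} < \sqrt{2\log(\TotalRound)/l_c} \leq \tfrac14\left(\min_{i'\in C}\Delta_{i'} - 5\delta\right)$ by the hypothesis~(\ref{eq: lb of lc}) on $l_c$.

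Then I would decompose the event $\{\UCB(i,t) > \ExpReward^*\}$. Writing out the definition~(\ref{eq: UCB index in UniformMesh-N}), $\UCB(i,t) = \ApprExpReward(i,t)/\ObservedTime(i,t) + \sqrt{2\log(\TotalRound)/\ObservedTime(i,t)} + 3\delta$ (here $\ApprExpReward$ is the running sum, so the empirical average is $\ApprExpReward(i,t)/\ObservedTime(i,t)$). The event $\UCB(i,t) > \ExpReward^*$ together with the above bounds on the width and on $\Delta_i$ implies that the empirical average of the observed rewards of $i$ exceeds its true mean by a fixed positive margin, roughly $\ExpReward(i) + \tfrac14(\min_{i'\in C}\Delta_{i'}-5\delta)$ rather than $\ExpReward(i)$; more precisely I would show the deviation of the empirical mean from $\ExpReward(i)$ is at least $\Delta_i - 3\delta - \sqrt{2\log(\TotalRound)/\ObservedTime(i,t)} \geq (\min_{i'\in C}\Delta_{i'}-5\delta) - \tfrac14(\min_{i'\in C}\Delta_{i'}-5\delta) = \tfrac34(\min_{i'\in C}\Delta_{i'}-5\delta)$, using $\Delta_i \geq \min_{i'\in C}\Delta_{i'} - 2\delta$. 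The subtlety is that $\ApprExpReward(i,t)$ accumulates rewards from rounds where the \emph{played} arm $\Arm_s$ is a neighbor of $i$, not necessarily $i$ itself, so the observed rewards have mean $\ExpReward(\Arm_s)$, which by~(\ref{eq: Bound for difference of mean reward in neighborhood}) is within $2\delta$ of $\ExpReward(i)$; this $2\delta$ slack is exactly why the threshold in the lemma is $5\delta$ rather than $3\delta$, and I would absorb it carefully into the margin computation.

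Finally I would apply a martingale concentration argument: letting $\ObservedTime_j$ be the round at which $i$ is observed for the $j$-th time and $\Tilde{\Reward}_j, \Tilde{\Arm}_j$ the corresponding reward and played arm, the partial sums $\sum_{j=1}^{m}(\Tilde{\Reward}_j - \ExpReward(\Tilde{\Arm}_j))$ form a martingale with bounded increments in $[-1,1]$. I would union-bound over the possible values $m = l_c+1, \dots, t$ of $\ObservedTime(i,t)$ and apply Hoeffding--Azuma to each, exactly as in the proof of Lemma~\ref{lem: UB of prob bad event 1}. For each fixed $m$ the deviation event has probability at most $\exp\left(-2 m \cdot \tfrac{9}{16}(\min_{i'\in C}\Delta_{i'}-5\delta)^2 / 4\right)$; since $m > l_c \geq 8\log(\TotalRound)/(\min_{i'\in C}\Delta_{i'}-5\delta)^2$, a short calculation shows each term is at most $\TotalRound^{-4}$ (possibly after adjusting constants in the definition of $l_c$, or noting the exponent comfortably exceeds $4\log\TotalRound$), and summing over at most $t$ values of $m$ gives the claimed bound $t\,\TotalRound^{-4}$. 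The main obstacle I anticipate is the bookkeeping in the second step: tracking how the three sources of slop — the $3\delta$ bonus in the UCB, the $2\delta$ Lipschitz gap between $\ExpReward(i)$ and the neighbors' means, and the $\min_{i'\in C}\Delta_{i'}$-versus-$\Delta_i$ discrepancy — combine so that the residual margin is a clean constant multiple of $(\min_{i'\in C}\Delta_{i'}-5\delta)$, and making sure the numerical constant that comes out is compatible with the $8$ appearing in~(\ref{eq: lb of lc}).
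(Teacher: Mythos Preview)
Your approach is essentially identical to the paper's proof: bound the confidence width using $\PlayedTime(C,t) \leq \ObservedTime(i,t)$ and the hypothesis on $l_c$, separate off the $2\delta$ Lipschitz bias coming from neighbor observations, and apply Hoeffding--Azuma with a union bound over the possible values of $\ObservedTime(i,t)$. The only slip is arithmetic: from~(\ref{eq: lb of lc}) the width bound is $\tfrac{1}{2}(\min_{i'\in C}\Delta_{i'} - 5\delta)$, not $\tfrac{1}{4}$, and since $i\in C$ you have $\Delta_i \geq \min_{i'\in C}\Delta_{i'}$ trivially (no $-2\delta$ needed), so after subtracting the $2\delta$ bias the martingale margin is exactly $\tfrac{1}{2}(\Delta_i - 5\delta)$ and the Hoeffding--Azuma exponent lands precisely at $4\log \TotalRound$.
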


\begin{proof}
From (\ref{eq: lb of lc}), we obtain the following.

\begin{equation}\label{eq: lb for lc all i}
    \begin{cases}
        l_c \geq \frac{8 \log(\TotalRound)}{(\Delta_i - 5\delta)^2} & \text{$\forall i \in C$} \\
        \ObservedTime(i,t) \geq \PlayedTime(C,t) \geq l_c.
    \end{cases}
\end{equation}

Therefore, we have 

\begin{equation}
    \begin{aligned}
        \sqrt{\frac{2\log(\TotalRound)}{\ObservedTime(i,t)}} &\leq \sqrt{\frac{2\log(\TotalRound)}{l_c}} \leq \frac{\Delta_i - 5\delta}{2}.
    \end{aligned}
\end{equation}

Now, consider

\begin{equation}
    \begin{aligned}
        \bb{P} & \left\{ \UCB(i,t) > \ExpReward^* \text{ and }  \PlayedTime(C,t) > l_c \right\} \\
        &= \bb{P}\left\{ \ApprExpReward(i,t) + \sqrt{\frac{2 \log(\TotalRound)}{\ObservedTime(i,t)}} + 3\delta > \ExpReward(i) + \Delta_i   \text{ and }  \PlayedTime(C,t) > l_c \right\} \\
        &\leq \bb{P}\left\{ \ApprExpReward(i,t) +  \frac{\Delta_i - 5\delta}{2} + 3\delta > \ExpReward(i) + \Delta_i  \text{ and }  \PlayedTime(C,t) > l_c \right\} \\
        &= \bb{P}\Bigg\{ \sum_{s=1}^t \left(\Reward_s - \ExpReward(\Arm_s) \right) \bb{I}_{\{i \in \NeiOrbit(\Arm_s,2\delta)\}} + \ObservedTime(i,t)(\ExpReward(\Arm_s) - 2\delta - \ExpReward(i)) \\
        & \quad \quad > \ObservedTime(i,t) \frac{\Delta_i - 5\delta}{2} \text{ and }  \PlayedTime(C,t) > l_c \Bigg\} \\
        &\leq \bb{P}\left\{\sum_{s=1}^t \left(\Reward_s - \ExpReward(\Arm_s) \right) \bb{I}_{\{i \in \NeiOrbit(\Arm_s,2\delta)\}} > \ObservedTime(i,t) \frac{\Delta_i - 5\delta}{2}   \text{ and }  \PlayedTime(C,t) > l_c \right\}. \\
    \end{aligned}
\end{equation}

Where the last inequality holds since $\ExpReward(\Arm_s) - 2\delta - \ExpReward(i) \leq 0$ if $i \in \NeiOrbit(\Arm_s,2\delta)$, as (\ref{eq: Bound for difference of mean reward in neighborhood}). 
Now, let $\ObservedTime_j = \min\{t: \ObservedTime(i,t) = j\}$ for $j=1,2,\cdots$, and denote $\Tilde{\Arm}_j = \Arm_{\ObservedTime_j}$ and $\Tilde{\Reward}_j = \Reward_{\ObservedTime_j}$. We have

\begin{equation}
    \begin{aligned}
        \bb{P} &\left\{\sum_{s=1}^t \left(\Reward_s - \ExpReward(\Arm_s) \right) \bb{I}_{\{i \in \NeiOrbit(\Arm_s,2\delta)\}} > \ObservedTime(i,t) \frac{\Delta_i - 5\delta}{2}   \text{ and }  \PlayedTime(C,t) > l_c \right\} \\
        &= \bb{P}\left\{\sum_{j = 1 }^{\ObservedTime(i,t)} \left(\Tilde{\Reward}_j - \ExpReward(\Tilde{\Arm}_j) \right) > \ObservedTime(i,t) \frac{\Delta_i - 5\delta}{2}   \text{ and }  \PlayedTime(C,t) > l_c \right\}. \\
    \end{aligned}
\end{equation}

Applying the Hoeffding-Azuma inequality for $ \left\{\sum_{j=1}^{\ObservedTime(i,t)} (\Tilde{Y}_j - \ExpReward(\Tilde{\Arm}_j)) > \ObservedTime(i,t) \frac{\Delta_i - 5\delta}{2} \right\}$ and taking the union bound for $l_c < \ObservedTime(i,t) \leq t$, we obtain the following.

\begin{equation}
    \begin{aligned}
        \bb{P} &\left\{\sum_{j = 1 }^{\ObservedTime(i,t)} \left(\Tilde{\Reward}_j - \ExpReward(\Tilde{\Arm}_j) \right) > \ObservedTime(i,t) \frac{\Delta_i - 5\delta}{2}   \text{ and }  \PlayedTime(C,t) > l_c \right\} \\ 
        &\leq \sum_{\tau = l_c}^t \bb{P}\left\{\sum_{j = 1}^\tau \left(\Tilde{\Reward}_j - \ExpReward(\Tilde{\Arm}_j) \right) > \tau \frac{\Delta_i - 5\delta}{2}\right\} \\
        &\leq \sum_{\tau = l_c}^t \exp{\left(-\frac{2}{\tau} \left(\tau \frac{\Delta_i - 5\delta}{2} \right)^2 \right)} 
        =  \sum_{\tau = l_c}^t \exp{\left(-\frac{1}{2} \tau (\Delta_i - 5\delta)^2 \right)} \\
        &\leq t \exp{\left(-\frac{1}{2} l_c (\Delta_i - 5\delta)^2 \right)} \\
        &\leq tn^{-4}.
    \end{aligned}
\end{equation}

\end{proof}

\begin{lemma}\label{lem: gap-dependent regret bound}
Suppose that there is a clique covering $\mathcal{C}$ for graph $\Graph_{2\delta}$. 
Let $\mathcal{C}_1 = \left\{C \in \mathcal{C} \mid \min_{i \in C} \Delta_i > 6 \delta  \right\}$. 
Then, the regret of \texttt{UniformMesh-N} satisfies
\begin{equation}
    \Regret_\TotalRound \leq \sum_{C\in \mathcal{C}_1}  \left( 8 \log (\TotalRound) \left( \frac{ \max_{i\in C} \Delta_i  }{(\min_{i\in C} \Delta_i - 5\delta)^2} \right)  \right) + 4 \sum_{i\in \Vertices} \Delta_i +  8\TotalRound \delta.
\end{equation}
\end{lemma}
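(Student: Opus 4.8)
The plan is to follow the standard \texttt{UCB-N} regret decomposition of \cite{Caron2012_Sto_UCBN_WithGraph}, adapted to account for the discretization bias $3\delta$ baked into the UCB index and the $2\delta$-neighborhood feedback. First I would split the immediate regret at round $t$ according to which clique $C \in \mathcal{C}$ the played arm $\Arm_t$ belongs to (fixing, once and for all, an assignment of each vertex to one clique of the covering). The total regret is then $\Regret_\TotalRound = \sum_{C \in \mathcal{C}} \Expectation\left[ \sum_{t: \Arm_t \in C} \Delta_{\Arm_t} \right] \leq \sum_{C \in \mathcal{C}} \left(\max_{i \in C} \Delta_i\right) \Expectation[\PlayedTime(C,\TotalRound)]$, since each $\Delta_{\Arm_t} \leq \max_{i \in C}\Delta_i$ when $\Arm_t \in C$. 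The job reduces to bounding $\Expectation[\PlayedTime(C,\TotalRound)]$ for the ``sufficiently suboptimal'' cliques $C \in \mathcal{C}_1$, and separately controlling the contribution of the remaining cliques, where the per-clique gap is small.

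Next, for a clique $C \in \mathcal{C}_1$ I would set $l_c = \lceil 8\log(\TotalRound)/(\min_{i \in C}\Delta_i - 5\delta)^2 \rceil$ and argue in the usual way: whenever an arm $i \in C$ is played at round $t$, either (a) the optimal vertex $x$ in $\Vertices$ had $\UCB(x,t-1) \leq \ExpReward^*$ — a ``bad event'' of probability $\leq (t-1)^{-3}$ by Lemma \ref{lem: UB of prob bad event 1}; or (b) $\UCB(i,t-1) > \ExpReward^*$ while $\PlayedTime(C,t-1) > l_c$, whose probability is $\leq (t-1)\TotalRound^{-4}$ by Lemma \ref{lem: UB of prob bad event 2}; or (c) $\PlayedTime(C,t-1) \leq l_c$. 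The number of rounds of type (c) during which some arm of $C$ is played is at most $l_c$, and summing the probabilities of (a) and (b) over $t \leq \TotalRound$ gives $\sum_t t^{-3} + \sum_t t\TotalRound^{-4} = \mathcal{O}(1)$, a constant absorbed into the $4\sum_{i \in \Vertices}\Delta_i$ slack term (here one uses that each such ``failure'' round, when it happens, contributes at most one extra play to some vertex of $\Vertices$, and $\Delta_i \leq 1$). Hence $\Expectation[\PlayedTime(C,\TotalRound)] \leq l_c + \mathcal{O}(1)$, and multiplying by $\max_{i \in C}\Delta_i$ and using $l_c \leq 8\log(\TotalRound)/(\min_{i\in C}\Delta_i - 5\delta)^2 + 1$ yields the first sum in the claimed bound, with the $+1$'s again collected into the additive $4\sum_{i\in\Vertices}\Delta_i$ term.

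Finally, for cliques $C \notin \mathcal{C}_1$, i.e.\ $\min_{i \in C}\Delta_i \leq 6\delta$, inequality (\ref{eq: Bound for difference of mean reward in neighborhood}) shows all vertices in such a clique have $\Delta_i \leq \min_{j \in C}\Delta_j + 2\delta \leq 8\delta$ — wait, more carefully: a clique of $\Graph_{2\delta}$ has the property that any two of its vertices $i,i'$ are joined by an edge, so $|\ExpReward(i) - \ExpReward(i')| < 2\delta$ (actually the relevant bound here, via $\NeiOrbit$, gives the constant used in the statement), hence $\max_{i \in C}\Delta_i \leq \min_{i \in C}\Delta_i + 2\delta \leq 8\delta$; wait, I should just say each play of a vertex in such a clique incurs regret $\mathcal{O}(\delta)$, and there are at most $\TotalRound$ such plays in total across all these cliques, contributing $\mathcal{O}(\TotalRound\delta)$, which is the $8\TotalRound\delta$ term. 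The main obstacle I anticipate is the bookkeeping in the failure-event argument — specifically, making precise that the rare events from Lemmas \ref{lem: UB of prob bad event 1} and \ref{lem: UB of prob bad event 2} contribute only a bounded number of extra plays summed over all cliques, so that the constant they generate can legitimately be charged to $4\sum_{i\in\Vertices}\Delta_i$ rather than blowing up with $|\mathcal{C}|$; this needs the decomposition to be done at the level of a single global sum over rounds $t$, treating the clique index as a function of $\Arm_t$, rather than clique-by-clique with independent failure budgets.
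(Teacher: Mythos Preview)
Your proposal is correct and follows essentially the same route as the paper's proof: split the regret by clique, use Lemmas~\ref{lem: UB of prob bad event 1} and~\ref{lem: UB of prob bad event 2} with the threshold $l_c = \lceil 8\log(\TotalRound)/(\min_{i\in C}\Delta_i-5\delta)^2\rceil$ for cliques in $\mathcal{C}_1$, and bound the remaining cliques by $8\TotalRound\delta$ via (\ref{eq: Bound for difference of mean reward in neighborhood}).

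Your closing worry is unnecessary. The paper does the analysis clique-by-clique, not via a single global sum, and this does \emph{not} blow up with $|\mathcal{C}|$: for a fixed clique $C$, the failure events contribute $\sum_{i\in C}\Delta_i\sum_{t=1}^{\TotalRound}(\TotalRound t^{-4}+t^{-3}) \le 3\sum_{i\in C}\Delta_i$, i.e.\ the budget is per \emph{vertex}, not per clique. Since you have already fixed an assignment of each vertex to a single clique (making the cliques disjoint), summing over $C\in\mathcal{C}_1$ yields $3\sum_{i\in\Vertices}\Delta_i$; together with the $+1$ from $l_c$ (absorbed as $\max_{i\in C}\Delta_i\le\sum_{i\in C}\Delta_i$) this is exactly the $4\sum_{i\in\Vertices}\Delta_i$ term.
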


\begin{proof} 
The proof consists of three steps.
\paragraph{First step.} We need to bound regret of playing a node in a clique $C \in \mathcal{C}_1$.
For a clique $C \in \mathcal{C}_1$, define $\RdRegret_{\TotalRound, C} =   \sum_{t=1}^n \sum_{i\in C} \Delta_i \bb{I}_{\{\Arm_t = i\}}$. We have
\begin{equation}
    \begin{aligned}
        \RdRegret_{\TotalRound, C} &=   \sum_{t=1}^n \sum_{i\in C} \Delta_i \bb{I}_{\{\Arm_t = i\}} \\
        &=  \sum_{i\in C} \Delta_i   \sum_{t=1}^n \bb{I}_{\{\Arm_t = i \text{ and } \left( {\PlayedTime_C(t) \leq l_c } \text{ or } {\PlayedTime_C(t) > l_c } \right) \}}  \\
        &\leq l_c \max_{i\in C} \Delta_i + \sum_{i\in C} \Delta_i \sum_{t=1}^n \bb{I}_{\{\Arm_t = i \text{ and }  {\PlayedTime_C(t) > l_c }  \}} \\
    \end{aligned}
\end{equation}
for an integer $l_c >1$. Consider the event $\{\Arm_t = i \text{ and }  {\PlayedTime_C(t) > l_c }  \} $, it follows that

\begin{equation}
    \begin{aligned}
        \bb{P} & \left\{\Arm_t = i \text{ and }  {\PlayedTime(C,t) > l_c } \right\} \\
        &\leq \bb{P} \left\{ \UCB_i(t) > \UCB_{i^*}(t)  \text{ and }  {\PlayedTime_C(t) > l_c } \right\} \\
        &\leq \bb{P} \left\{\left\{ \UCB_i(t) >  \ExpReward^* \text{ or  } \UCB_{i^*}(t) \leq \ExpReward^* \right\}  \text{ and }  {\PlayedTime_C(t) > l_c } \right\} \\
        &\leq \bb{P} \left\{  \UCB_i(t) > \ExpReward^* \text{ and }  {\PlayedTime_C(t) > l_c }  \right\} + \bb{P} \left\{\UCB_{i^*}(t) \leq \ExpReward^*  \right\} \\
    \end{aligned}
\end{equation}

For the choice $l_c$ 
\[l_c = \frac{8 \log \TotalRound}{(\min_{i \in C} \Delta_i - 5\delta)^2} +1,\]
according to lemmas \ref{lem: UB of prob bad event 1} and \ref{lem: UB of prob bad event 2}, one obtains the upper bound as follows.

\begin{equation}
    \bb{P}  \left\{\Arm_t = i \text{ and }  {\PlayedTime_C(t) > l_c } \right\} \leq \TotalRound t^{-4} + t^{-3}.
\end{equation}

Therefore, the regret of playing a clique $C\in \mathcal{C}_1$ satisfies

\begin{equation}
    \begin{aligned}
        \Regret_{\TotalRound, C} &= \bb{E}\left[\RdRegret_{\TotalRound, C} \right] \\
        &\leq \left( \frac{8 \log \TotalRound}{(\min_{i \in C} \Delta_i} - 5\delta)^2 +1 \right) \max_{i\in C} \Delta_i + \sum_{i\in C}\Delta_i \sum_{t=1}^n (\TotalRound t^{-4} + t^{-3}) \\
        &\leq \left( \frac{8 \log \TotalRound}{(\min_{i \in C} \Delta_i - 5\delta)^2} +1 \right) \max_{i\in C} \Delta_i + 3 \sum_{i\in C}\Delta_i \\
        &\leq \left( \frac{8 \log \TotalRound}{(\min_{i \in C} \Delta_i - 5\delta)^2} \right) \max_{i\in C} \Delta_i + 4 \sum_{i\in C}\Delta_i.
    \end{aligned}
\end{equation}

Since we have

\begin{equation*}
\begin{aligned}
        \sum_{t=1}^n (\TotalRound t^{-4} + t^{-3}) &\leq \sum_{t=1}^\TotalRound \TotalRound^{-3} + \sum_{t=1}^\TotalRound t^{-2} \\
        &  \leq \TotalRound^{-2} + \sum_{t=1}^\infty t^{-2}  \leq 1 + \frac{\pi^2}{6} < 3.
\end{aligned}
\end{equation*}

\paragraph{Second step.} Next, we bound the regret when playing a node not in $\mathcal{C}_1$. Consider a clique $C$ such that $\min_{i\in C}\Delta_i \leq 6\delta$. 
Since (\ref{eq: Bound for difference of mean reward in neighborhood}) holds for any neighborhood of the graph $\Graph_{2\delta}$, for all $i \in C$, we have

\[\Delta_i \leq \min_{i\in C}\Delta_i + 2\delta \leq 8\delta.\]

Therefore, the regret of playing a node in $C$ is simply upper-bounded by $8\delta \TotalRound$.

\paragraph{Third step.} Taking the sum of regret for playing all the cliques in $\mathcal{C}$, note that the cliques in $\mathcal{C}$ are disjoint. 
We obtain the following:
\begin{equation*}
    \begin{aligned}
        \Regret_\TotalRound &= \sum_{C \in \mathcal{C}_1} \left( \left( \frac{8 \log (\TotalRound) }{(\min_{i\in C} \Delta_i - 5\delta)^2} \right) \max_{i\in C} \Delta_i + 4 \sum_{i\in C} \Delta_i \right)  +  8\TotalRound \delta \\
         &= \sum_{C\in \mathcal{C}_1}  \left( \left( \frac{8 \log (\TotalRound) }{(\min_{i\in C} \Delta_i - 5\delta)^2} \right) \max_{i\in C} \Delta_i  \right) + 4 \sum_{i\in \Vertices} \Delta_i +  8\TotalRound \delta 
    \end{aligned}.
\end{equation*}

\end{proof}

\begin{lemma}\label{lem: gap-independent regret bound - standard trick}
For any constant $\delta >0$, for a clique covering $\mathcal{C}$ of $\Graph_{2\delta}$, the regret of \texttt{UniformMesh-N} satisfies
\begin{equation} \label{eq: gap-independent regret bound - standard trick}
    \Regret_\TotalRound \leq |\mathcal{C}| \frac{64 \log(\TotalRound)}{ \delta}  + 4 \sum_{i\in \Vertices} \Delta_i +  8\TotalRound \delta.
\end{equation}
\end{lemma}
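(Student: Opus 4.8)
The plan is to obtain this gap-independent bound as an immediate consequence of the gap-dependent bound in Lemma~\ref{lem: gap-dependent regret bound}, by crudely but \emph{uniformly} controlling each clique's contribution. That lemma already gives, for the clique set $\mathcal{C}_1=\{C\in\mathcal{C}:\min_{i\in C}\Delta_i>6\delta\}$,
\[
\Regret_\TotalRound \leq \sum_{C\in \mathcal{C}_1}  8 \log (\TotalRound)\,\frac{ \max_{i\in C} \Delta_i  }{(\min_{i\in C} \Delta_i - 5\delta)^2}   + 4 \sum_{i\in \Vertices} \Delta_i +  8\TotalRound \delta,
\]
so it is enough to bound each summand of the first sum by $64\log(\TotalRound)/\delta$ and to note $|\mathcal{C}_1|\le|\mathcal{C}|$.

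First I would record that within any clique $C$ of $\Graph_{2\delta}$ the suboptimality gaps are nearly constant. Indeed, for $j,j'\in C$ we have $(j,j')\in\Edges_{2\delta}$, i.e.\ $\Distance(g\cdot j,j')<2\delta$ for some $g\in\SymG$, and then invariance (Assumption~\ref{assp: Invariance}) together with Lipschitz continuity (Assumption~\ref{assp: Lipschitz}) gives $|\ExpReward(j)-\ExpReward(j')|=|\ExpReward(g\cdot j)-\ExpReward(j')|\le \Distance(g\cdot j,j')<2\delta$, exactly the estimate already used in \eqref{eq: Bound for difference of mean reward in neighborhood}. Taking $j,j'$ to be the maximiser and minimiser of $\ExpReward$ over $C$ yields $\max_{i\in C}\Delta_i<\min_{i\in C}\Delta_i+2\delta$.

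Next I would plug this into the summand. Setting $a:=\min_{i\in C}\Delta_i-5\delta$, the defining condition $\min_{i\in C}\Delta_i>6\delta$ of $\mathcal{C}_1$ gives $a>\delta>0$, while the previous step gives $\max_{i\in C}\Delta_i<a+7\delta$. Hence
\[
\frac{\max_{i\in C}\Delta_i}{(\min_{i\in C}\Delta_i-5\delta)^2}<\frac{a+7\delta}{a^2}=\frac1a+\frac{7\delta}{a^2}<\frac1\delta+\frac{7\delta}{\delta^2}=\frac8\delta,
\]
so each summand is at most $8\log(\TotalRound)\cdot 8/\delta=64\log(\TotalRound)/\delta$. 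Summing over the at most $|\mathcal{C}|$ cliques of $\mathcal{C}_1$ and substituting back into Lemma~\ref{lem: gap-dependent regret bound} gives exactly \eqref{eq: gap-independent regret bound - standard trick}.

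There is no genuine obstacle here --- this is the textbook ``worst-case gap'' argument --- and the only thing to watch is that the constants interlock: the $6\delta$ threshold defining $\mathcal{C}_1$ and the $-5\delta$ shift in the denominator of Lemma~\ref{lem: gap-dependent regret bound} leave precisely the slack $a>\delta$ that makes both $1/a$ and $7\delta/a^2$ of order $1/\delta$, and one must invoke Lemma~\ref{lem: gap-dependent regret bound} with the same $\delta$ and the same clique covering $\mathcal{C}$ so that the residual terms $4\sum_{i\in\Vertices}\Delta_i$ and $8\TotalRound\delta$ are carried over unchanged.
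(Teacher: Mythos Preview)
Your proposal is correct and follows essentially the same route as the paper's own proof: invoke Lemma~\ref{lem: gap-dependent regret bound}, use the within-clique estimate $\max_{i\in C}\Delta_i<\min_{i\in C}\Delta_i+2\delta$ to write the numerator as $(\min_{i\in C}\Delta_i-5\delta)+7\delta$, bound each resulting term by $1/\delta$ and $7/\delta$ via $\min_{i\in C}\Delta_i-5\delta>\delta$, and sum over $|\mathcal{C}_1|\le|\mathcal{C}|$. The only cosmetic difference is your introduction of the shorthand $a$, which if anything makes the arithmetic cleaner.
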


\begin{proof}
Let $\mathcal{C}_1 = \left\{C \in \mathcal{C} \mid \min_{i \in C} \Delta_i > 6 \delta  \right\}$. For all $C\in \mathcal{C}_1$, we have $\min_{i \in C} \Delta_i - 5\delta > \delta$, and $\max_{i \in C} \Delta_i < \min_{i\in C} \Delta_i + 2\delta$. By Lemma \ref{lem: gap-dependent regret bound}, we obtain

\begin{equation*}
    \begin{aligned}
    \Regret_\TotalRound &\leq \sum_{C\in \mathcal{C}_1}  \left( \left( \frac{8 \log (\TotalRound) }{(\min_{i\in C} \Delta_i - 5\delta)^2} \right) \max_{i\in C} \Delta_i  \right) + 4 \sum_{i\in \Vertices} \Delta_i +  8\TotalRound \delta \\
    & \leq 8\log(\TotalRound) \sum_{C \in \mathcal{C}_1} \left(\frac{\min_{i\in C} \Delta_i - 5\delta}{(\min_{i\in C} \Delta_i - 5\delta)^2} +  \frac{7\delta}{(\min_{i\in C} \Delta_i - 5\delta)^2} \right) + 4 \sum_{i\in \Vertices} \Delta_i +  8\TotalRound \delta  \\
    & = 8\log(\TotalRound) \sum_{C \in \mathcal{C}_1} \left(\frac{1}{\min_{i\in C} \Delta_i - 5\delta} +  \frac{7\delta}{(\min_{i\in C} \Delta_i - 5\delta)^2} \right) + 4 \sum_{i\in \Vertices} \Delta_i +  8\TotalRound \delta \\
    & \leq 8\log(\TotalRound) \sum_{C \in \mathcal{C}_1} \left(\frac{1}{\delta} +  \frac{7\delta}{\delta^2} \right) + 4 \sum_{i\in \Vertices} \Delta_i +  8\TotalRound \delta \\
    & \leq |\mathcal{C}| \frac{64 \log(\TotalRound)}{\delta}  + 4 \sum_{i\in \Vertices} \Delta_i +  8\TotalRound \delta. \\
    \end{aligned}.
\end{equation*}

\end{proof}

Now, We we proceed to the proof of Theorem \ref{Thm: Regret of UniformMesh-N}.
\begin{proof}[Proof of Theorem \ref{Thm: Regret of UniformMesh-N}]
Since Lemma \ref{lem: gap-independent regret bound - standard trick} holds for any clique covering of graph $\Graph_{2\delta}$, choose a clique $\mathcal{C}$ such that $|\mathcal{C}| = \CliqueCoveringNumber(\Graph_{2\delta})$. By Lemma \ref{Lem: upper bound of clique covering}, we have that for some constants $c_1, c_2 > 0$ and $0< \delta < c_2/|\SymG|$, 

\[|\mathcal{C}| \leq \frac{c_1\delta^{-d}}{|\SymG|}.\]

Plugging this into (\ref{eq: gap-independent regret bound - standard trick}), we obtain

\begin{equation} \label{eq: inter 1}
\begin{aligned}
    \Regret_\TotalRound &\leq |\mathcal{C}| \frac{64 \log(\TotalRound)}{ \delta}   +  8\TotalRound \delta  + 4 \sum_{i\in \Vertices} \Delta_i \\
    &\leq c_3 \frac{ \log(\TotalRound) \delta^{-(d+1)}}{|\SymG|} + c_4 \TotalRound \delta + c_5\delta^{-d},
\end{aligned}
\end{equation}
for some constants $c_3, c_4, c_5 >0$, where the third term in the RHS of the second inequality is due to $|\Vertices| = \Theta(\delta^{-d})$.

Choosing
\begin{equation}\label{eq: UB - choice of delta}
    \delta = \left(\frac{\log(\TotalRound) }{\TotalRound |\SymG|}\right)^{\frac{1}{d+2}}.
\end{equation}

Since $\delta < c_2/|\SymG|$, we have 

\begin{equation}\label{eq: inter 2}
    \frac{\TotalRound}{\log(\TotalRound)} > c_6 |\SymG|^{d+1},
\end{equation}

for some number $c_6>0$. 
Moreover, as $\sqrt{\TotalRound} > \log(\TotalRound)$ for $\TotalRound>1$, (\ref{eq: inter 2}) is satisfied if we choose

\[ \TotalRound > c_7 |\SymG|^{2d+2}, \]

for a number $c_7>0$. 
Now, plugging the choice of $\delta$ as (\ref{eq: UB - choice of delta}) into (\ref{eq: inter 1}), we obtain

\begin{equation*}
\begin{aligned}
    \Regret_\TotalRound \leq a_1\left( \frac{\log(\TotalRound)}{|\SymG|} \right)^{\frac{1}{d+2}}  \TotalRound^{\frac{d+1}{d+2}} +   a_2 \left(\frac{|\SymG|}{\log(\TotalRound)}\right)^{\frac{d}{d+2}} \TotalRound^{\frac{d}{d+2}},
\end{aligned}
\end{equation*}
for some constants $a_1, a_2 > 0$.  

\end{proof}

\section{Regret Lower Bound for ILB}
\label{Appendix: Regret Lower Bound}

This section presents a lower bound for the invariant Lipschitz bandit class.
Let $\PackingPointsFD$ be a $\delta$-packing of the Dirichlet domain $\DirichletFD$ whose size is $\PackingNumber(\DirichletFD,\Distance,\delta)$, and $\PackingPointsFarFromBoundaryFD \subseteq \PackingPointsFD$ be a strictly $\delta$-packing of $\DirichletFD$. 
The following result is similar to Lemma \ref{lem: upper bound for packing number}.

\begin{lemma} \label{lem: LB for Packing Points Far FromBoundaryFD}
Given a Dirichlet domain $\DirichletFD$ for a finite group $\SymG$, there are constants $c_1, c_2 >0$ such that

\begin{equation}
    |\PackingPointsFarFromBoundaryFD| \geq \frac{c_1 \delta^{-d}}{|\SymG|} - c_2 \delta^{-(d-1)}.
\end{equation}

Moreover, there are numbers $c_3, c_4 >0$ such that for $0< \delta \leq c_3/|\SymG|$, we have

\begin{equation}
    |\PackingPointsFarFromBoundaryFD| \geq \frac{c_4 \delta^{-d}}{|\SymG|}.
\end{equation}

\end{lemma}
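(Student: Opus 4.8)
The plan is to mirror the proof of Lemma~\ref{lem: upper bound for packing number}, but now bounding the packing number of the fundamental domain \emph{from below} rather than its covering number from above. The starting observation is that $\PackingPointsFarFromBoundaryFD$ is obtained from $\PackingPointsFD$ by discarding the packing points that lie within distance $\delta$ of $\partial\DirichletFD$. So the strategy is: (1) lower bound $|\PackingPointsFD| = \PackingNumber(\DirichletFD,\Distance,\delta)$ by something of order $\delta^{-d}/|\SymG|$; (2) upper bound the number of discarded points (those near the boundary) by something of order $\delta^{-(d-1)}$; (3) subtract and absorb constants for $\delta$ small enough relative to $1/|\SymG|$.

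For step~(1), I would argue via volume. Since $\{g\cdot\DirichletFD\}_{g\in\SymG}$ are mutually disjoint and $\ArmSet = \bigcup_{g\in\SymG} g\cdot\Closure{\DirichletFD}$ with each $g$ an isometry (hence volume-preserving), we get $\Vol(\DirichletFD) = \Vol(\ArmSet)/|\SymG|$. Then Proposition~\ref{prop: LB and UB for covering and packing number} gives $\PackingNumber(\DirichletFD,\Distance,\delta) \geq \CoveringNumber(\DirichletFD,\Distance,\delta) \geq \delta^{-d}\,\Vol(\DirichletFD)/\Vol(\Ball) = \delta^{-d}\,\Vol(\ArmSet)/(|\SymG|\Vol(\Ball))$, which is the desired $c_1\delta^{-d}/|\SymG|$ with $c_1 = \Vol(\ArmSet)/\Vol(\Ball)$. (One should double-check that Proposition~\ref{prop: LB and UB for covering and packing number} applies to $\DirichletFD$ — it is stated for compact sets, and $\DirichletFD$ is open, but one can apply it to $\Closure{\DirichletFD}$, which has the same volume since $\partial\DirichletFD$ has volume zero, and note a $\delta$-net of $\Closure{\DirichletFD}$ restricts appropriately; this is a minor technical point analogous to what is already done in the excerpt.)

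For step~(2), I would reuse Proposition~\ref{prop: Vol of set near boundary} exactly as in the proof of Lemma~\ref{lem: upper bound for packing number}: the set $B = \{x\in\ArmSet : \Distance(x,\partial\DirichletFD)\leq\delta\}$ is contained in a finite union of $m \leq |\SymG|$ tubes around $(d-1)$-dimensional pieces $S_i$, so $\Vol(B) \leq \mathcal{O}(\delta)$, and hence $\PackingNumber(B,\Distance,\delta) \leq (3/\delta)^d\Vol(B)/\Vol(\Ball) \leq c_2\delta^{-(d-1)}$. The points of $\PackingPointsFD$ that get discarded all lie in $B$ and form a $\delta$-packing of $B$, so there are at most $c_2\delta^{-(d-1)}$ of them. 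Therefore $|\PackingPointsFarFromBoundaryFD| \geq |\PackingPointsFD| - c_2\delta^{-(d-1)} \geq c_1\delta^{-d}/|\SymG| - c_2\delta^{-(d-1)}$, which is the first claimed inequality. For the second, write $c_1\delta^{-d}/|\SymG| - c_2\delta^{-(d-1)} = \delta^{-d}(c_1 - c_2|\SymG|\delta)/|\SymG|$, and note that if $\delta \leq c_1/(2c_2|\SymG|) =: c_3/|\SymG|$ then $c_1 - c_2|\SymG|\delta \geq c_1/2 =: c_4$, giving $|\PackingPointsFarFromBoundaryFD| \geq c_4\delta^{-d}/|\SymG|$.

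The main obstacle I anticipate is not the volume bookkeeping but a subtlety in step~(2): one must be careful that discarding the near-boundary points of a \emph{maximum} $\delta$-packing $\PackingPointsFD$ actually leaves a valid \emph{strictly} $\delta$-packing (it does — removing points preserves the pairwise separation property, and the surviving points are by construction at distance $>\delta$ from $\partial\DirichletFD$), and conversely that $\PackingPointsFarFromBoundaryFD$ as defined in the excerpt can indeed be taken to be this particular subset of a maximum packing — in other words, that the maximum strictly $\delta$-packing is at least as large as (maximum $\delta$-packing) minus (near-boundary points). A clean way to sidestep any ambiguity is to simply \emph{define} $\PackingPointsFarFromBoundaryFD := \PackingPointsFD \setminus B$ for the purposes of the lower bound — this is a legitimate choice since the lemma only asserts existence of a large strictly $\delta$-packing — and then the whole argument goes through with only elementary volume estimates.
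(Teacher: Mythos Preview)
Your proof is correct and follows the same overall three-step structure as the paper (lower bound $|\PackingPointsFD|$, subtract the near-boundary points, absorb constants), with Step~2 and the final algebra being essentially identical. The only genuine difference is in Step~1: you compute $\Vol(\DirichletFD) = \Vol(\ArmSet)/|\SymG|$ directly from the fact that the $|\SymG|$ isometric translates of $\DirichletFD$ tile $\ArmSet$ up to a measure-zero boundary, and then feed this into Proposition~\ref{prop: LB and UB for covering and packing number}; the paper instead takes a maximum $\delta$-packing of all of $\ArmSet$ and uses pigeonhole over the $|\SymG|$ translates to find one translate containing at least $\PackingNumber(\ArmSet,\Distance,\delta)/|\SymG|$ packing points. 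Both arguments are valid and of comparable length; your volume route is slightly more direct and makes the constant $c_1 = \Vol(\ArmSet)/\Vol(\Ball)$ explicit, while the paper's pigeonhole route avoids invoking the exact volume of $\DirichletFD$ and is more parallel to the upper-bound proof of Lemma~\ref{lem: upper bound for packing number}.
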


\begin{proof}
The proof consists of 2 steps.
\paragraph{Step 1.} Suppose that $\Vertices$ is a $\delta$-packing  of $\ArmSet$ whose cardinality is $\PackingNumber(\ArmSet,\Distance,\delta)$.
Now, if we allocate $\Vertices$ to $|\SymG|$ disjoint images of $\DirichletFD$, denote $\Vertices_{g \cdot \DirichletFD}^p = \{ \Vertices \cap g\cdot \DirichletFD \}$, it follows that

\begin{equation*}
    \max_{g\in \SymG} |\Vertices_{g \cdot \DirichletFD}^p | \geq \frac{\PackingNumber(\ArmSet,\Distance,\delta)}{|\SymG|}.
\end{equation*}

Without loss of generality, let the identity group element maximize $|\Vertices_{g \cdot \DirichletFD}^p |$.
As $\Vertices_{\DirichletFD}^p$ is a set of packing points in $\DirichletFD$, it follows that for some number $c_1 >0$, we have

\begin{equation*}
    |\PackingPointsFD| = \PackingNumber(\DirichletFD, \Distance,\delta) 
    \geq \Vertices_{\DirichletFD}^p \geq \frac{\PackingNumber(\ArmSet,\Distance,\delta)}{|\SymG|} \geq \frac{c_1 \delta^{-d}}{|\SymG|}.
\end{equation*}

\paragraph{Step 2.} By Lemma \ref{lem: upper bound for packing number}, the number of points in $\PackingPointsFD$ whose distance to the boundary of the Dirichlet domain is smaller than $\delta$ is bounded by $O(\delta^{d-1})$. 
Formally, $\left|\{x \in \PackingPointsFD \mid \Distance(x, \partial \DirichletFD) \leq \delta \} \right| \leq c_2 \delta^{-(d-1) }$ for a number $c_2>0$. It follows that 

\begin{equation} \label{eq: def of reward function in LB}
\begin{aligned}
    |\PackingPointsFarFromBoundaryFD| &\geq |\PackingPointsFD| -  c_2 \delta^{-(d-1)} \\
    &\geq \frac{c_1 \delta^{-d}}{|\SymG|} - c_2 \delta^{-(d-1)}.
\end{aligned}
\end{equation}

Moreover, there are some numbers $c_3, c_4 >0$ such that for $0< \delta \leq c_3/|\SymG|$, we obtain

\begin{equation*}
    |\PackingPointsFarFromBoundaryFD| \geq \frac{c_4 \delta^{-d}}{|\SymG|},
\end{equation*}
and this completes the proof.
\end{proof}

Now, given a strictly $\delta$-packing $\PackingPointsFarFromBoundaryFD$ of Dirichlet domain $\DirichletFD$, for each $i \in \PackingPointsFarFromBoundaryFD$, define a subset $\ClusterOrbit_i = \bigcup_{g\in \SymG} \Ball(g\cdot i,\frac{\delta}{2})$. 
It is obvious that the members of the collection $\{\ClusterOrbit_i\}_{i\in \PackingPointsFarFromBoundaryFD}$ are mutually disjoint.

Define a constant function $\ExpReward_0$, that is, $\ExpReward_0(x) = \frac{1}{3}$ for all $x\in \ArmSet$. 
For any number $\delta \in (0,\frac{2}{3})$ and for each $i\in \PackingPointsFarFromBoundaryFD$, define a function $\ExpReward_i$ as follows:

\begin{equation}
    \ExpReward_i(x) = 
    \begin{cases}
        \frac{1}{3} + \frac{\delta}{2} - \Distance(i,x),  & \text{if $x \in \Ball(g\cdot i,\frac{\delta}{2}), g \in \SymG$; }\\
        \frac{1}{3}, & \text{otherwise.}  
    \end{cases}
\end{equation}

We now show that a bandit problem instance with the expected reward function $\ExpReward_i$ is indeed an instance of ILB problems.

\begin{lemma} \label{lem: Invariant Lipschitz instances}
For each $i \in \PackingPointsFarFromBoundaryFD$, the function $\ExpReward_i$ satisfies the following:
\begin{enumerate}
    \item[(i)] $\sup_{x \in \ClusterOrbit_i} \ExpReward_i(x) - \sup_{x \in \ArmSet} \ExpReward_0(x) = \frac{\delta}{2}$, and $\ExpReward_i(x) -  \ExpReward_0(x) \leq \frac{\delta}{2}$ if $x \in \ClusterOrbit_i$.
    \item[(ii)] $\ExpReward_i$ is invariant with respect to the action of the group $\SymG$.
    \item[(iii)] $\ExpReward_i$ is Lipschitz continuous.
\end{enumerate}
\end{lemma}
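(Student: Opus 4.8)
The plan is to verify each of the three claims directly from the piecewise definition of $\ExpReward_i$, exploiting the fact that the balls $\{\Ball(g\cdot i,\tfrac{\delta}{2})\}_{g\in\SymG}$ are pairwise disjoint (because $i$ is a strictly $\delta$-packing point, so by the triangle inequality and the isometry property no two distinct images $g\cdot i$, $g'\cdot i$ can be within $\delta$ of each other, while balls of radius $\delta/2$ would overlap only if their centers are within $\delta$). I would first record this disjointness carefully, since all three parts lean on it: it means that for $x\in\ClusterOrbit_i$ there is a \emph{unique} $g$ with $x\in\Ball(g\cdot i,\tfrac{\delta}{2})$, and hence a unique ``active'' representative $g\cdot i$ governing the value $\ExpReward_i(x)=\tfrac13+\tfrac{\delta}{2}-\Distance(g\cdot i,x)$. (Note a small typo in the display: the intended formula is $\tfrac13+\tfrac{\delta}{2}-\Distance(g\cdot i,x)$, not $\Distance(i,x)$.)

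For (i): on $\Ball(g\cdot i,\tfrac{\delta}{2})$ we have $0\le \Distance(g\cdot i,x)<\tfrac{\delta}{2}$, so $\ExpReward_i(x)\in(\tfrac13,\tfrac13+\tfrac{\delta}{2}]$, with the supremum $\tfrac13+\tfrac{\delta}{2}$ attained in the limit $x\to g\cdot i$ (in fact attained at $x=g\cdot i$). Since $\ExpReward_0\equiv\tfrac13$, both statements in (i) follow immediately: $\sup_{\ClusterOrbit_i}\ExpReward_i-\sup_\ArmSet\ExpReward_0=\tfrac{\delta}{2}$ and $\ExpReward_i(x)-\ExpReward_0(x)\le\tfrac{\delta}{2}$ on $\ClusterOrbit_i$. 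For (ii): fix $h\in\SymG$ and $x\in\ArmSet$. If $x\notin\ClusterOrbit_i$, then since $h$ permutes the orbit $\SymG\cdot i$ and is an isometry, $h\cdot x\notin\ClusterOrbit_i$ either, so $\ExpReward_i(h\cdot x)=\tfrac13=\ExpReward_i(x)$. If $x\in\Ball(g\cdot i,\tfrac{\delta}{2})$, then $h\cdot x\in\Ball(hg\cdot i,\tfrac{\delta}{2})$ and $\Distance(hg\cdot i,h\cdot x)=\Distance(g\cdot i,x)$, so $\ExpReward_i(h\cdot x)=\tfrac13+\tfrac{\delta}{2}-\Distance(hg\cdot i,h\cdot x)=\ExpReward_i(x)$. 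Hence $\ExpReward_i$ is $\SymG$-invariant.

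For (iii): I would show $|\ExpReward_i(x)-\ExpReward_i(y)|\le\Distance(x,y)$ for all $x,y\in\ArmSet$ by cases. If both $x,y$ lie in the same ball $\Ball(g\cdot i,\tfrac{\delta}{2})$, then $\ExpReward_i$ differs only through $-\Distance(g\cdot i,\cdot)$, which is $1$-Lipschitz by the reverse triangle inequality. If both lie outside $\ClusterOrbit_i$, the difference is $0$. The remaining case is $x\in\Ball(g\cdot i,\tfrac{\delta}{2})$ and $y$ outside that ball (either in a different ball or in no ball): here one uses that $\ExpReward_i(x)-\tfrac13=\tfrac{\delta}{2}-\Distance(g\cdot i,x)$ while $\ExpReward_i$ at $y$ is $\le\tfrac13+(\text{something}\le\tfrac{\delta}{2}-\Distance(g'\cdot i,y))$, and crucially the ``tent'' function $x\mapsto\max\{0,\tfrac{\delta}{2}-\Distance(g\cdot i,x)\}$ vanishes on the boundary sphere $\partial\Ball(g\cdot i,\tfrac{\delta}{2})$, so it extends continuously and remains $1$-Lipschitz across that boundary; combining the per-ball tents (which have disjoint supports) preserves the $1$-Lipschitz bound because along any segment from $x$ to $y$ the path must exit the first tent's support before entering another. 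I expect this last case — arguing Lipschitzness \emph{across} the boundaries of the balls and between different balls — to be the main obstacle, and the cleanest way to handle it is to write $\ExpReward_i(x)=\tfrac13+\sum_{g\in\SymG}\psi_g(x)$ where $\psi_g(x)=\max\{0,\tfrac{\delta}{2}-\Distance(g\cdot i,x)\}$, note each $\psi_g$ is $1$-Lipschitz and nonnegative with support in the disjoint balls, and then verify that a sum of $1$-Lipschitz functions with pairwise disjoint supports whose common boundary values are $0$ is again $1$-Lipschitz (for any two points $x,y$, insert the point where the segment leaves $\operatorname{supp}\psi_g$; on each sub-segment only one summand is nonconstant).
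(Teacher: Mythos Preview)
Your proposal is correct and follows essentially the same approach as the paper: both verify (i)--(ii) directly from the definition and handle (iii) by a three-way case split, with the cross-ball/cross-boundary case resolved by inserting intermediate points on the segment $[x,y]$ where the function drops to $\tfrac13$. Your tent-function packaging $\ExpReward_i=\tfrac13+\sum_{g\in\SymG}\max\{0,\tfrac{\delta}{2}-\Distance(g\cdot i,\cdot)\}$ is a tidy reformulation (and you correctly spotted the $\Distance(i,x)$-versus-$\Distance(g\cdot i,x)$ typo), but the underlying segment/exit-point argument is identical to the paper's.
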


\begin{proof}
Part (i) is immediately followed by the construction of $\ExpReward_i$ (\ref{eq: def of reward function in LB}).

Next, we prove part (ii). 
Given a point $i\in \PackingPointsFarFromBoundaryFD$, and arbitrary point $x\in \ArmSet$, there are two scenarios. 
First, if $x \notin g\cdot \Ball(i,\frac{\delta}{2})$ for all $ g\in \SymG$, then the expected reward function on the orbit of $x$ is equal to $\frac{1}{3}$. 
Second, if $x \in g\cdot \Ball(i,\frac{\delta}{2})$ for $g \in \SymG$, then for any $g' \in \SymG$, we have $g' \cdot x \in \Ball(g '\cdot g \cdot i, \frac{\delta}{2})$. Therefore,

\[\ExpReward_i(g'\cdot x) =  \frac{1}{3} + \frac{\delta}{2} - \Distance(g' \cdot g \cdot i, g' \cdot x) =  \frac{1}{3} + \frac{\delta}{2} - \Distance( g \cdot i,  x) = \ExpReward_i(x).\]

To prove part (iii), we need to show that $|\ExpReward_i(x) - \ExpReward_i(y)| \leq \Distance(x,y)$ for arbitrary $x,y \in \ArmSet$. 
For any arbitrary two points $x,y \in \ArmSet$, we consider the following scenarios. 

(1) if both $x,y$ in some balls $\Ball(g \cdot i, \frac{\delta}{2}), \Ball(g' \cdot i, \frac{\delta}{2})$. 
Now, if $g = g'$, that is, $x,y$ is in the same ball $\Ball(g \cdot i, \frac{\delta}{2})$, then it is obvious that

\[|\ExpReward_i(x)-\ExpReward_i(y)| =  |\Distance(g\cdot i,y)-\Distance(g\cdot i,x)| \leq \Distance(x,y). \]

Consider the case where $g \neq g'$, given the fact that $\ArmSet$ is convex, from any two distinct points $x,y$, we can draw a line segment $[x,y]$. 
Since $\Distance(x,g\cdot i) < \frac{\delta}{2}$, $\Distance(y,g\cdot i) > \frac{\delta}{2}$, and given the fact that $\Distance$ is Euclidean metric and hence continuous, there must exist a point $x' \in (x,y)$ such that $\Distance(x',g\cdot i) = \frac{\delta}{2}$ and therefore $\ExpReward_i(x') = \frac{1}{3}$. 
Similarly, there must exist $y' \in (x,y)$ such that $\Distance(y',g'\cdot i) = \frac{\delta}{2}$ and $\ExpReward_i(y') = \frac{1}{3}$. We have
\begin{equation*}
    \begin{aligned}
        |\ExpReward_i(x) - \ExpReward_i(y)| &= |\ExpReward_i(x) - \frac{1}{3} + \frac{1}{3} - \ExpReward_i(y)| \\
        & = |\ExpReward_i(x) - \ExpReward_i(x') + \ExpReward_i(y') - \ExpReward_i(y)| \\
        & \leq |\ExpReward_i(x) - \ExpReward_i(x')| + |\ExpReward_i(y') - \ExpReward_i(y)| \\
        & \leq \Distance(x,x') + \Distance(y,y') \\
        & \leq \Distance(x,y).
    \end{aligned}
\end{equation*}
Where the last inequality holds as $x',y' \in [x,y]$. 

(2) Consider the case where neither $x$ nor $y$ is in $\Ball(g'\cdot i,\frac{\delta}{2})$ for $g' \in \SymG$. 
Without loss of generality, let $y \notin \Ball(g'\cdot i,\frac{\delta}{2})$ for all $g' \in \SymG$ so that $\ExpReward_i(y) = \frac{1}{3}$, and $x \in \Ball(g\cdot i, \frac{\delta}{2})$. With the same argument as in scenario (1), there must exist $x' \in [x,y]$ such that $\Distance(x', g\cdot i) = \frac{\delta}{2}$ and $\ExpReward_i(x') = \frac{1}{3}$. Therefore,

\begin{equation*}
    \begin{aligned}
        |\ExpReward_i(x) - \ExpReward_i(y)| &= |\ExpReward_i(x) - \frac{1}{3} + \frac{1}{3} - \ExpReward_i(y)| \\
        & \leq |\ExpReward_i(x) - \ExpReward_i(x')| \\
        & \leq \Distance(x,x') \\
        & \leq \Distance(x,y). \\
    \end{aligned}
\end{equation*}

(3) If both $x,y$ do not belong to any $\Ball(g\cdot i,\frac{\delta}{2})$ for $g \in \SymG$, it is trivially true that $|\ExpReward_i(x) - \ExpReward_i(y)| = 0 \leq \Distance(x,y)$.

\end{proof}

We adopt the notation style of~\cite{Kleinberg2019_MetricBandit_Zooming}. 
Denote the collection $\ClassExpReward$ of all mean reward functions  $\ArmSet \rightarrow [0,1]$ that is \textit{Lipschitz continuous} and \textit{invariant} with respect to the action of the group $\SymG$. 
We denote $\ClassExpReward$ as class of all feasible mean reward functions. 
Consider the rewards $\{0,1\}$, that is, for each for each $\ExpReward \in \ClassExpReward$, the probability that the reward corresponding arm $x$ is $1$ is $\ExpReward(x)$.

\begin{definition}[\cite{Kleinberg2019_MetricBandit_Zooming}]
Given the set of arms $\ArmSet$ and class of all mean reward function $\ClassExpReward$. An \textit{$(k,\varepsilon)$-ensemble} is a collection of subsets $\ClassExpReward_1,...,\ClassExpReward_k \subset \ClassExpReward$ such that there exist mutually disjoint subsets $C_1,...,C_k \subset \ArmSet$ and a function $\mu_0: \ArmSet \rightarrow [\frac{1}{3}, \frac{2}{3}]$ such that for each $i= 1,...,k$ and each function $\ExpReward_i \in \ClassExpReward_i$ the following holds:
\begin{enumerate}
    \item[(i)] $\mu_i \equiv \mu_0$ on each $C_l$, $l\neq i$;
    \item[(ii)] $\sup_{ x \in C_i}\mu_i(x) - \sup_{y \in \ArmSet} \ExpReward_0(y) \geq \varepsilon$; 
    \item[(iii)] $\ExpReward_i(x) - \ExpReward_0(x) < 2\varepsilon$ if $x\in C_i$.
\end{enumerate}
\end{definition}

Now, given a strictly $\delta$-packing of the Dirichlet domain $\PackingPointsFarFromBoundaryFD$, by Lemma \ref{lem: Invariant Lipschitz instances}, it is clear that $\{\ExpReward_i\}_{i \in \PackingPointsFarFromBoundaryFD}$ is a $(|\PackingPointsFarFromBoundaryFD|,\frac{\delta}{2})$-ensemble.

\begin{lemma}[\cite{Kleinberg2019_MetricBandit_Zooming}'s Theorem 5.7] \label{lem: BAI for ensemble}
Consider invariant Lipschitz bandit problems with $\{0,1\}$ rewards. 
Let $\ClassExpReward_1,...,\ClassExpReward_k \subset \ClassExpReward$ be an $(k,\varepsilon)$-ensemble, where $k>2$ and $\varepsilon \in (0,\frac{1}{12})$. 
Then for any $\TotalRound \leq \frac{1}{128} k \varepsilon^{-2}$ and any bandit algorithm there exist at least $\frac{k}{2}$ distinct i's such that the regret of this algorithm on any mean reward function from $\ClassExpReward_i$ is at least $\frac{1}{60} \varepsilon \TotalRound$.
\end{lemma}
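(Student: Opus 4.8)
The plan is to run the standard information-theoretic (change-of-measure) lower bound argument, tailored to the ensemble structure. Fix an arbitrary bandit algorithm, and let $C_1,\dots,C_k\subset\ArmSet$ and $\ExpReward_0$ be the disjoint sets and the baseline function witnessing the $(k,\varepsilon)$-ensemble. Write $\bb{P}_0$ for the law of the history $(X_1,Y_1,\dots,X_\TotalRound,Y_\TotalRound)$ produced by running the algorithm against the environment with $\mathrm{Ber}(\ExpReward_0(\cdot))$ rewards, and $\bb{P}_i$ for the corresponding law under an arbitrary $\ExpReward_i\in\ClassExpReward_i$; let $N_i=\sum_{t=1}^\TotalRound\bb{I}\{X_t\in C_i\}$ be the number of rounds in which an arm of $C_i$ is pulled. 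The first step is an averaging argument under $\bb{P}_0$: since the $C_i$ are pairwise disjoint, $\sum_i N_i\le\TotalRound$ surely, hence $\sum_i\Expectation_0[N_i]\le\TotalRound$, and ranking indices by $\Expectation_0[N_i]$ and retaining the $\lceil k/2\rceil$ smallest yields a set $I$ with $|I|\ge k/2$ on which $\Expectation_0[N_{(j)}]\le\TotalRound/(k-j+1)$ for the $j$-th smallest; in particular $\Expectation_0[N_i]\le 2\TotalRound/k$ for every $i\in I$, and already $\Expectation_0[N_i]\le\TotalRound/2$ for the two smallest indices when $k=3$.

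Second, I would bound the information an algorithm acquires about $\ExpReward_i$ when it seldom plays in $C_i$. Fix $i\in I$ and $\ExpReward_i\in\ClassExpReward_i$. By the ensemble conditions $\ExpReward_i$ coincides with $\ExpReward_0$ off $C_i$, while on $C_i$ we have $|\ExpReward_i(x)-\ExpReward_0(x)|<2\varepsilon$ with both means lying in $(\tfrac16,\tfrac56)$ (using $\ExpReward_0\in[\tfrac13,\tfrac23]$ and $\varepsilon<\tfrac1{12}$). The divergence-decomposition identity for the canonical bandit model --- valid because $\bb{P}_0$ and $\bb{P}_i$ are generated by the \emph{same} policy --- gives
\[
   \KL(\bb{P}_0\,\|\,\bb{P}_i)=\Expectation_0\!\Big[\sum_{t=1}^\TotalRound\KL\!\big(\mathrm{Ber}(\ExpReward_0(X_t))\,\big\|\,\mathrm{Ber}(\ExpReward_i(X_t))\big)\Big]\le\frac{144}{5}\,\varepsilon^2\,\Expectation_0[N_i],
\]
where the summands vanish unless $X_t\in C_i$ and $\KL(\mathrm{Ber}(p)\|\mathrm{Ber}(q))\le(p-q)^2/(q(1-q))\le(2\varepsilon)^2/(\tfrac16\cdot\tfrac56)$. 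Combining this with the visit-count bound from the first step and the hypothesis $\TotalRound\le\tfrac1{128}k\varepsilon^{-2}$ shows $\KL(\bb{P}_0\|\bb{P}_i)$ is at most a small absolute constant (below $\tfrac12$) for every $i\in I$.

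Third, I would convert this into a regret bound. Conditions (i)--(ii) of the ensemble give, for any $x\notin C_i$, $\ExpReward_i(x)=\ExpReward_0(x)\le\sup_\ArmSet\ExpReward_0\le\sup_\ArmSet\ExpReward_i-\varepsilon$, so every round with $X_t\notin C_i$ incurs instantaneous regret at least $\varepsilon$; hence the regret of the algorithm on $\ExpReward_i$ is at least $\varepsilon\big(\TotalRound-\Expectation_i[N_i]\big)$. Since $N_i\in[0,\TotalRound]$, Pinsker's inequality yields $\Expectation_i[N_i]\le\Expectation_0[N_i]+\TotalRound\,\norm{\bb{P}_0-\bb{P}_i}_{\mathrm{TV}}\le\Expectation_0[N_i]+\TotalRound\sqrt{\tfrac12\KL(\bb{P}_0\|\bb{P}_i)}$, and substituting the first two steps gives $\Expectation_i[N_i]\le(1-\tfrac1{60})\TotalRound$ for every $i\in I$, so the regret on any $\ExpReward_i\in\ClassExpReward_i$ is at least $\tfrac1{60}\varepsilon\TotalRound$; since $|I|\ge k/2$, this proves the lemma. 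I expect the only real obstacle to be quantitative bookkeeping: the budget constant $\tfrac1{128}$ and the conclusion constant $\tfrac1{60}$ must be chosen so the last inequality survives for \emph{every} $k>2$, and the borderline case $k=3$ cannot be handled with the crude bound $\Expectation_0[N_i]\le 2\TotalRound/k$ --- one must use the sharper ranked bound $\Expectation_0[N_{(2)}]\le\TotalRound/2$ to keep $\Expectation_i[N_i]$ strictly below $\TotalRound$. A secondary point to verify is that the ensemble definition indeed forces $\ExpReward_i\equiv\ExpReward_0$ outside $C_i$, so that no pull outside $C_i$ leaks information distinguishing $\bb{P}_0$ from $\bb{P}_i$.
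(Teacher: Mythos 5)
The paper never proves this lemma---it is imported verbatim from \cite{Kleinberg2019_MetricBandit_Zooming} (their Theorem~5.7) and used as a black box in the lower-bound appendix---so there is no internal proof to compare yours against. Judged on its own, your argument is a correct, self-contained proof along the standard change-of-measure lines, and the constants survive the bookkeeping: for $k\ge 4$ the averaging step yields at least $k/2$ indices with $\Expectation_0[N_i]\le 2\TotalRound/k\le \TotalRound/2$, hence $\KL(\bb{P}_0\,\Vert\,\bb{P}_i)\le \frac{144}{5}\varepsilon^2\cdot\frac{2\TotalRound}{k}\le \frac{9}{20}$ under $\TotalRound\le\frac{1}{128}k\varepsilon^{-2}$, while for $k=3$ your ranked refinement $\Expectation_0[N_{(2)}]\le \TotalRound/2$ gives $\KL\le\frac{27}{80}$; in either case $\Expectation_i[N_i]\le \TotalRound\bigl(\tfrac12+\sqrt{9/40}\bigr)\approx 0.975\,\TotalRound<\bigl(1-\tfrac{1}{60}\bigr)\TotalRound$, so the regret on every $\ExpReward_i\in\ClassExpReward_i$ is at least $\tfrac{1}{60}\varepsilon\TotalRound$, and you are right that the crude $2\TotalRound/k$ bound alone would fail at $k=3$. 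The one genuine caveat is the point you flag yourself: as transcribed in this paper, condition (i) of the ensemble definition only forces $\ExpReward_i\equiv\ExpReward_0$ on the \emph{other} cells $C_l$, $l\neq i$, and condition (iii) only upper-bounds $\ExpReward_i-\ExpReward_0$ on $C_i$; your divergence decomposition (no information leaked by pulls outside $C_i$) and your claim that every pull outside $C_i$ incurs instantaneous regret at least $\varepsilon$ require the stronger, originally intended conditions $\ExpReward_i\equiv\ExpReward_0$ off $C_i$ and $|\ExpReward_i-\ExpReward_0|<2\varepsilon$. Both hold for the instances the paper actually feeds into the lemma, since each $\ExpReward_i$ equals the constant $\tfrac13$ outside $\ClusterOrbit_i$ and exceeds it by at most $\tfrac{\delta}{2}=\varepsilon$ inside, so this is a defect of the paper's transcription of the definition rather than of your proof.
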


Now, Lemma \ref{lem: BAI for ensemble} together with Lemma \ref{lem: LB for Packing Points Far FromBoundaryFD} is sufficient to prove the following lower bound.

\begin{proof}[Proof of Theorem \ref{Thm: Minimax Lower Bound Invariant Lipschitz}]

By Lemma \ref{lem: LB for Packing Points Far FromBoundaryFD}, there exist some constants $c_1$ such that for 

\begin{equation}\label{eq: Minimax Lower Bound temp 1}
    \delta \leq \frac{c_1}{|\SymG|},
\end{equation}

we have

\begin{equation}\label{eq: Minimax Lower Bound temp 2}
    |\PackingPointsFarFromBoundaryFD| = \Omega\left(\frac{\delta^{-d}}{|\SymG|}\right).
\end{equation}

Consider an arbitrary bandit strategy $\Strategy$. 
By Theorem \ref{Thm: Minimax Lower Bound Invariant Lipschitz}, suppose $|\PackingPointsFarFromBoundaryFD| \geq 2$ and given the fact that $\{\ExpReward_i\}_{i\in \PackingPointsFarFromBoundaryFD}$ is $(|\PackingPointsFarFromBoundaryFD|,\frac{\delta}{2})$-ensembles, it follows that for any time horizon $\TotalRound$ satisfying

\begin{equation}\label{eq: Minimax Lower Bound temp 3}
  \TotalRound = \Omega(|\PackingPointsFarFromBoundaryFD|\delta^{-2}),   
\end{equation}

there must exist an index $i\in \PackingPointsFarFromBoundaryFD$ such that for expected reward function $\ExpReward_i$, the regret of this algorithm satisfies
\begin{equation}\label{eq: Minimax Lower Bound temp 4}
    \Regret_\TotalRound = \Omega\left( \delta \TotalRound \right).
\end{equation}

Substitute (\ref{eq: Minimax Lower Bound temp 1}), (\ref{eq: Minimax Lower Bound temp 2}) into (\ref{eq: Minimax Lower Bound temp 3}) we have

\begin{equation}\label{eq: Minimax Lower Bound temp 5}
    \TotalRound \geq \Omega \left(\frac{\delta^{-(d+2)}}{|\SymG|} \right) 
    \geq \Omega \left(|\SymG|^{d+1} \right).
\end{equation}

Now, suppose that $\TotalRound$ satisfies (\ref{eq: Minimax Lower Bound temp 5}), choose $\delta$ as follows

\begin{equation}\label{eq: Minimax Lower Bound temp 6}
    \delta = \left(|\SymG| \TotalRound\right)^{-\frac{1}{d+2}}.
\end{equation}

Substitute (\ref{eq: Minimax Lower Bound temp 6}) into (\ref{eq: Minimax Lower Bound temp 4}), we obtain that for $\TotalRound \geq \Omega \left(|\SymG|^{d+1} \right)$, regret of the algorithm satisfies

\begin{equation}
\begin{aligned}
    \Regret_\TotalRound &\geq \Omega\left( \delta \TotalRound \right)  \\
    &\geq \Omega\left( \left(\frac{1}{|\SymG|}\right)^{\frac{1}{d+2}} \TotalRound^{\frac{d+1}{d+2}} \right).
\end{aligned}
\end{equation}

\end{proof}

\section{Proof of Proposition \ref{prop: Proper FD of ArmSet}}
\label{Appendix: Proof of prop FD of ArmSet}

\begin{proof}[Proof of Proposition \ref{prop: Proper FD of ArmSet}]
(i) The proof of part (i) is as follows.
Since $\FundDomain$ is open in $\Euclidean^d$, $\FundDomain \cap \ArmSet$ is open in $\ArmSet$ with respect to the subspace topology if $\ArmSet$.

(ii) The proof of part (ii) is as follows.
Because $\{g\cdot \FundDomain \}_{g\in \SymG}$ are mutually disjoint, so is $\{(g\cdot \FundDomain) \cap \ArmSet\}_{g\in \SymG}$.

(iii) Part (iii) follows immediately from the fact that both $\FundDomain$ and $\ArmSet$ are convex, so their intersection is convex and connected.

(iv) Now, we prove the final part. 
We have 

\begin{equation} \label{eq: Proper FD of ArmSet - temp 1}
\begin{aligned}
        \bigcup_{g\in \SymG} g \cdot \Closure{\DirichletFD} =  \bigcup_{g\in \SymG} g \cdot \Closure{\FundDomain \cap \ArmSet}.
\end{aligned}
\end{equation}

Next, we prove that $g \cdot \Closure{\FundDomain \cap \ArmSet} = \Closure{g \cdot (\FundDomain \cap \ArmSet)}$. 
Consider the following:

\begin{equation} 
\begin{aligned}
        g \cdot \Closure{\FundDomain \cap \ArmSet} &= g\cdot \left\{\bigcap S \mid S \supseteq (\FundDomain \cap \ArmSet), \text{$S$ is closed in $\Euclidean^d$} \right\} \\
        & = \left\{\bigcap g\cdot S \mid S \supseteq (\FundDomain \cap \ArmSet), \text{$S$ is closed in $\Euclidean^d$} \right\}.
\end{aligned}
\end{equation}
Since $g$ is an isometry, $\FundDomain \cap \ArmSet \subseteq S$ if and only if $g \cdot (\FundDomain \cap \ArmSet) \subseteq g\cdot S $.
Thus, we have

\begin{equation} \label{eq: closure and intersection}
\begin{aligned}
        g \cdot \Closure{\FundDomain \cap \ArmSet}  & = \left\{\bigcap g\cdot S \mid S \supseteq (\FundDomain \cap \ArmSet), \text{$S$ is closed in $\Euclidean^d$} \right\} \\
        & = \left\{\bigcap S' \mid S' \supseteq g \cdot (\FundDomain \cap \ArmSet), \text{$S'$ is closed in $\Euclidean^d$} \right\} \\
        & = \Closure{g \cdot (\FundDomain \cap \ArmSet)}.
\end{aligned}
\end{equation}

Substitute (\ref{eq: closure and intersection}) into (\ref{eq: Proper FD of ArmSet - temp 1}), we have 
\begin{equation}
\begin{aligned}
    \bigcup_{g\in \SymG} g \cdot \Closure{\DirichletFD} &= \bigcup_{g\in \SymG} \Closure{g \cdot (\FundDomain \cap \ArmSet)} \\
    & = \Closure{\bigcup_{g\in \SymG} (g \cdot (\FundDomain \cap  \ArmSet))} \\
    & = \Closure{\bigcup_{g\in \SymG} (g \cdot \FundDomain) \cap \bigcup_{g\in \SymG} (g \cdot \ArmSet)} \\
    & = \Closure{\Euclidean^d \cap \ArmSet} \\
    &= \Closure{\ArmSet} = \ArmSet.
\end{aligned}
\end{equation}

Here, the second equality holds due to the interchangeability of a finite union and closure operator.

\end{proof}

\section{Example of Symmetry Group}
\label{Appendix: Example of Symmetry Group}
In this section, we give examples of symmetry groups to highlight the importance and challenge of the lemma \ref{lem: upper bound for packing number}.
The difficulty of proving Lemma \ref{lem: upper bound for packing number} is that the Euclidean isometries may admit (uncountably) many fixed points and non-effective points.
Fortunately, using the notion of Dirichlet domain, it can be shown that all of those fixed and non-effective points lie on the boundary of the fundamental domain.

\begin{example}\label{example: Permutation group}
As an application of Lemma \ref{lem: upper bound for packing number}, consider the case of when $\ArmSet$ is the 3D unit cube $[0,1]^3$, and $\SymG$ is the permutation group that permuted the coordinates ($|\SymG| = 3!\, = 6$).
Let us temporarily denote the coordinate variables by $x,y,z$.
Notice that $\SymG$ fixes the diagonal of the cube $\{(x,y,z) \in [0,1]^3 \mid x=y=z \}$, and acts non-effectively at points $\{(x,y,z) \in [0,1]^3 \mid (x=y) \vee (x=z) \vee (y=z)\}$.
Therefore, applying the Lemma \ref{lem: upper bound for packing number}, we obtain the number of points in the $\delta$-net $\Vertices$ to cover the fundamental domain as $|\CoveringDirichletFD| \propto \frac{\delta^{-d}}{6}$, when $\delta$ is small enough.
\end{example}

\begin{example}\label{example: Coxeter group}
Apart from the group of permutation matrices, we can apply Lemma \ref{lem: upper bound for packing number} to any finite subgroup of $\SymmetryGroup{\ArmSet}$, for example, where $\ArmSet$ is a regular polytope and $\SymG$ is symmetry group of a regular polytope known as the Coxeter group. 
In particular, consider the case where $\ArmSet$ is a 3D icosahedron whose full symmetry group $\SymmetryGroup{\ArmSet}$ is icosahedral symmetry ($|\SymmetryGroup{\ArmSet}| = 532$). 
Let $\SymG = \SymmetryGroup{\ArmSet}$, then from Lemma \ref{lem: upper bound for packing number}, we obtain a reduction in terms of the covering number $|\CoveringDirichletFD| \propto \frac{\delta^{-d}}{532}$.
\end{example}

\section{Real-world applications of ILB}
\label{Appendix: Real-world application}
An important real-world application of the Lipschitz continuous bandits, which naturally inherit symmetry, can be found in the context of online matrix factorization bandits. 
The online matrix factorization bandit problem, in turn, has important machine learning applications, such as the interactive recommendation system \cite{Kawale2015_MatrixFactoriastionBandit,Wang2019_OnlineMatrixFactorisation,Wang2017_MF_RS}, and the online dictionary learning problem \cite{Lyu20_DICLearning,Mairal2010_DICLearning}. 

For instance, consider the following matrix factorization-based recommender systems. Let $R \in \mathbb{R}^{I\times J}$ be user-item rating matrix, where $I$ and $J$ are the number of users and items, respectively.
Assume that there are $K$ features such that there exits a user-feature matrix $H \in [0,1]^{I\times K}$ and an item-feature matrix $W \in [0,1]^{J\times K}$ generating the user-item matrix $R$. 
In particular,
\[R \sim \mathrm{Pr}\{\cdot \mid H W^\top \},\]
such that $R = \mathbb{E}[HW^\top]$.

Let $n$ be the number of rounds.
In each round, the recommender system chooses a pair $(H^{(t)}$, $W^{(t)})$, then the environment returns a noisy estimation error whose expectation is as follows
\begin{equation}
    f(H^{(t)}, W^{(t)}) = \left\| \mathrm{vec}\left(H W^\top \right) - \mathrm{vec}\left( H^{(t)} W^{(t)\top} \right) \right\|,
\end{equation}
where $\mathrm{vec}$ is the vectorization operator. 
This problem can be seen as a simplified version of the recommender system; where in the full version of the recommender system, the feedback is an entry of the matrix $R$ corresponding to a chosen item and a given user for each round.

For sufficiently large $L$, for any pair $(H,W), (H',W')$ we have
\begin{equation}
    |f(H,W) - f(H',W')| \leq L \left\|\mathrm{vec}([H,W]) - \mathrm{vec}([H',W']) \right\|. 
\end{equation}

A crucial observation is that for any pair of matrices $(H^{(t)}, W^{(t)})$ chosen in round $t$, the pair $(H^{(t)}\phi, W^{(t)}\phi)$ gives the same result, where $\phi \in \mathbb{R}^{K\times K}$ is an orthogonal matrix. In particular,
\begin{equation}
    H^{(t)}\phi  (W^{(t)}\phi)^\top = H^{(t)} W^{(t)\top}.
\end{equation}

Therefore, the reward is invariant with respect to the action of the orthogonal group $O(K)$ on the pair $(H,W)$. 
This means that symmetry appears naturally in the online matrix factorization problem without any assumptions.
In fact, as observed in \cite{Li2019_SymmtryMatrixFactorisation}, the appearance of symmetry in matrix factorization gives rise to infinitely many saddle points in the optimization landscape, causing a great challenge for an optimization algorithm that is oblivious to the existence of symmetry.

Let $d = IK + JK$ be the dimension of the set of arms.
Note that our regret upper bound holds for $n > c_1 |\mathcal{G}|^{2d+2}$, for some $c_1 > 0$.
Thus, we can choose a finite subgroup $\mathcal{G} \subset O(K)$, such that $|\mathcal{G}| = c_2 n^{\frac{1}{2d+2}}$ for some $c_2>0$.
Plugging the choice of $\mathcal{G}$ into our current regret's upper bound leads to
\begin{equation}
    \mathbf{R}_n = \Tilde{\mathcal{O}}\left(L^{\frac{d}{d+2}} n^{\left(\frac{d+1}{d+2} - \frac{1}{(d+2)(2d+2)} \right)} \right).
\end{equation}

Note that while the dependence on $L$ is similar to \cite{Bubeck2011_AdaptToLipschitz},
our naive approach (i.e., direct application of ILB to the problem) is already able to reduce the growth rate of $n$ by a small factor of $d$ (from $\Tilde{\mathcal{O}}(n^{\frac{d+1}{d+2}})$).
This may indicate that the growth rate of the regret can be further reduced if we also exploit the specific properties of the matrix factorization problem, not just only the Lipschitz continuity property (as in the case of naively applying our generic ILB).
For example, by further exploiting the sparsity structure of the matrix factorization problem, state-of-the-art algorithms can achieve regret bound $\Tilde{\mathcal{O}} (n^{\frac{2}{3}})$ \cite{Sen2016_MatrixFactorisation,Jain2022_MatrixFactorisation} while the known lower bound is $\Omega({n^{\frac{1}{2}}})$ \cite{Jain2022_MatrixFactorisation}. We argue that if we combine these techniques with our ILB model (i.e., by utilising symmetry) can indeed provide further improvement and thus, provide a matching regret upper bound the matrix factorization problem.

\section{Design and Implementation of Algorithm \ref{alg:UniformMesh-N}}
\label{Appendix: Suggestion of Design and Implementation of Algorithm}
Our algorithm gives a general principle, and its computational complexity depends heavily on how to represent the set of vertices $V$. 
With a carefully designed representation of $V$, we believe that the algorithm can be carried out efficiently. 
In particular, since most of the computation burden of algorithm 1 is to search $N(x,2\delta)$ for each point $z \in \mathcal{G} \cdot x$, we suggest a design of the data structure along with a search algorithm as follows.

To design a covering of $\mathcal{X}$ and facilitate searching the location of a point simultaneously, a tree of coverings structure can be used \cite{Bubeck2011_X_armed}. 
Briefly, constructing a tree of coverings is done recursively as follows. 
\begin{itemize}
    \item[$\circ$] Denote node $i$ of the tree at depth $h$ by $\mathcal{P}_{h,i}$.
    \item[$\circ$] At depth $h=0$, the node corresponds to the original set of arms, that is, $\mathcal{P}_{0,1} = \mathcal{X}$.
    \item[$\circ$] At depth $h=1$, we divide $\mathcal{X}$ into two disjoint subsets $\mathcal{P}_{1,1}$ and $\mathcal{P}_{1,2}$, and continue to split in the next level $h=2,...$. 
    \item[$\circ$] The partition process must guarantee that at depth $h$ the diameter of $\mathcal{P}_{h,i}$ must be smaller than $c\rho^h$ for some numbers $\rho \in (0,1)$ and $c>0$; that is, the diameter of the subsets shrinks geometrically with the depth. 
    \item[$\circ$] The splitting ends when $\rho^h < 2\delta$, and let $H$ be the depth of the tree, we store all the nodes at depth $H$ as a representation of $V$.
\end{itemize}
The details of implementing tree of coverings are referred to \cite{Bubeck2011_X_armed} as it requires some extra notation.

Given a tree of coverings, for a point $z \in \mathcal{G} \cdot x$, to approximately find its neighbor $Z = \{i\in V \mid \Distance(i,z) < \epsilon\}$, we can search which node at depth $H'$ to which $z$ belongs, where the diameter of the nodes at depth $H'$ is at most $2\epsilon$.
Then we can take its successors as $Z$.  
The search process takes at most $\mathcal O \left(d\log\left(\delta^{-1}\right)\right)$.

Therefore, in each round, only $\mathcal O \left(|\mathcal{G}| d\log\left(\delta^{-1}\right)\right)$ is required to search $\mathrm{N}(x,\epsilon)$.
Although it is not exact $\mathrm{N}(i,\epsilon)$, it can be surpassed by adding an additional factor of $\epsilon$ to the confidence bound to ensure that the estimate is optimistic. 
This factor will appear in regret as a multiplicative term. 
We believe that with a more carefully designed data structure, we can find the exact $\mathrm{N}(x,\epsilon)$ efficiently.
Note that this tree structure was already used in the Lipschitz bandit literature \cite{Bubeck2011_X_armed}. 
Therefore, we conjecture that the combination of tree-based algorithm with the side observation scheme may solve both statistical and computational complexity problems simultaneously. 

In conclusion, in contrast to sampling within the fundamental domain, which is computationally intractable, our algorithm is computationally efficient by carefully designing the data structure to facilitate searching for the neighbor of orbits.

\end{document}